\documentclass{article}

\usepackage{microtype}
\usepackage{graphicx}
\usepackage{subcaption}
\usepackage{booktabs} 

\usepackage{hyperref}

\usepackage[accepted]{icml2026}

\usepackage{amsmath}
\usepackage{amssymb}
\usepackage{mathtools}
\usepackage{amsthm}

\usepackage[capitalize,noabbrev]{cleveref}

\theoremstyle{plain}
\newtheorem{theorem}{Theorem}[section]
\newtheorem{proposition}[theorem]{Proposition}
\newtheorem{lemma}[theorem]{Lemma}
\newtheorem{corollary}[theorem]{Corollary}
\theoremstyle{definition}

\newtheorem{assumption}[theorem]{Assumption}
\theoremstyle{remark}
\newtheorem{remark}[theorem]{Remark}

\usepackage[dvipsnames]{xcolor}
\usepackage{tpsmacros}
\usepackage{comment} 
\usepackage{soul}

\DeclareMathOperator{\DKL}{\textit{D}_{KL}}
\DeclareMathOperator*{\argmin}{arg\,min}
\DeclareMathOperator*{\supp}{supp}
\DeclareMathOperator*{\tr}{tr}
\DeclareMathOperator*{\im}{im}

\icmltitlerunning{Infinite-Dimensional Generative Diffusions via Doob's h-Transform}

\begin{document}

\twocolumn[
  \icmltitle{Infinite-Dimensional Generative Diffusions via Doob's h-Transform}



  \icmlsetsymbol{equal}{*}

  \begin{icmlauthorlist}
    \icmlauthor{Thorben Pieper-Sethmacher}{equal,yyy}
    \icmlauthor{Daniel Paulin}{equal,yyy}
  \end{icmlauthorlist}

  \icmlaffiliation{yyy}{College of Computing and Data Science, Nanyang Technological University, Singapore}

  \icmlcorrespondingauthor{Thorben Pieper-Sethmacher}{thorben.ps@ntu.edu.sg}
  \icmlcorrespondingauthor{Daniel Paulin}{Daniel.paulin@ntu.edu.sg}

  \icmlkeywords{Machine Learning, ICML}

  \vskip 0.3in
]



\printAffiliationsAndNotice{}  

\begin{abstract}
This paper introduces a rigorous framework for defining generative diffusion models in infinite dimensions via Doob's h-transform. Rather than relying on time reversal of a noising process, a reference diffusion is forced towards the target distribution by an exponential change of measure. 
Compared to existing methodology, this approach readily generalises to the infinite-dimensional setting, hence offering greater flexibility in the diffusion model.
The construction is derived rigorously under verifiable conditions, and bounds with respect to the target measure are established.
We show that the forced process under the changed measure can be approximated by minimising a score-matching objective and validate our method on both synthetic and real data. 
\end{abstract}

\section{Introduction}
\label{sec:intro}

Over the past decade, generative diffusion models~\cite{sohl2015deep, ho2020denoising, song2021scorebased} have established themselves as a state-of-the-art paradigm in generative models, finding widespread applications in sampling data such as text~\cite{li2022diffusion}, image~\cite{dhariwal2021diffusion, rombach2022high} and video~\cite{ho2022video, singer2022make}.

In many such applications, the data is treated as a finite-dimensional discretisation of inherently infinite-dimensional objects. Typical examples include images and geometric shapes \cite{dupont2021generative}, spatiotemporal physical processes \cite{shu2023physics, ruhling2023dyffusion} and function-space measures arising in Bayesian inverse problems \cite{stuart2010inverse, oliviero2025generative}.

In recent work, there has been an effort to adapt generative diffusion models to such an infinite-dimensional setting \cite{Kerrigan23diffusiongenerative, franzese2023continuous, lim2023score-evolution, pidstrigach2024infinite, lim2025score, hagemann2025multilevel}. See also \cite{franzese2025generative} for a survey on the literature in this direction. 

In this infinite-dimensional regime, discretising the target measure and subsequently applying methods developed for finite-dimensional data might degenerate and fail to converge to the correct target distribution as the dimension of the discretisation (or the \textit{data resolution}) tends to infinity.

In contrast, designing generative diffusion models directly in the infinite-dimensional setting and only discretising during the implementation step leads to methodology that remains robust to the state dimension of the target measure. 

This \textit{discretise last} paradigm has been prevalent in the literature for Bayesian inverse problems for over a decade, leading to well-posed formulations on function spaces and discretisation-independent algorithms \cite{stuart2010inverse, beskos2011hybrid, cotter2013mcmc, hairer2014spectral}.

Current generalisations of infinite-dimensional diffusion models primarily focus on the noising-denoising paradigm.
This is typically implemented via one of two main approaches: discrete-time sequences that vary noise scales across data samples \cite{Kerrigan23diffusiongenerative, lim2025score}, or continuous-time formulations that reverse the dynamics of a stochastic differential equation (SDE) \cite{franzese2023continuous, pidstrigach2024infinite, hagemann2025multilevel, lim2023score-evolution}. 

Recall that, in this noising-denoising framework \cite{song2021scorebased}, sampling from the target measure relies on reversing a forward noising SDE, whose marginals are designed to converge to a tractable (typically Gaussian) reference measure. 
Consequently, time-reversal based diffusion models exhibit fundamental pathologies when the noising process fails to converge in time to the prescribed reference measure \cite{debortoli2021diffusion, franzese2023much}.

This work introduces a class of infinite-dimensional generative diffusion models by \textit{forcing} a reference process $X$, obtained as the mild solution to an infinite-dimensional SDE, to the target measure $\mu$ at some final time $T > 0$ by means of a change of measure known as \textit{Doob's h-transform}. Crucially, this construction is valid for any $T$, thereby circumventing the need for long-time convergence of a noising process.
Under suitable assumptions, the forced process $X^h$ under the changed measure satisfies yet another SDE with an additive steering term dependent on the $h$-function of choice, which can be approximated by minimising a score-matching-type objective. 

For finite-dimensional data, related approaches have been studied in 
\citep{debortoli2021diffusion, peluchetti2023non, peluchetti2023diffusion, didi2023framework, wu2023practical, nguyen2025hedit, chang2026inference}.
In the infinite-dimensional setting, \cite{park2024stochastic} have recently considered $h$-transformed diffusion models from an optimal control perspective. 
In this approach, the focus lies on learning bridges between prescribed endpoint distributions, whereas we consider the task of learning a generative diffusion process.
In addition, our results are more general in that they allow for time-dependent and nonlinear drifts. Moreover, we provide a rigorous derivation by stating explicit conditions under which the $h$-transformed process is well defined and approximable, together with bounds between the sampling and target measures.

\subsection{Challenges and Approach}
\label{subsec:challenges-approach}
Recall that, in the noising-denoising SDE framework, samples of the target $\mu$ are progressively noised by a process
\begin{align*}
    \df X_t &= f(t,X_t) \df t + b(t) \df W_t, \quad t \in [0,T], 
\end{align*}
with $X_0 \sim \mu$. Under suitable assumptions, $X_t$ converges to a tractable, invariant reference measure $\nu$.
Assuming that $X_T \sim \nu$ in law, the data distribution is then sampled from by solving the reverse-time SDE
\begin{align*}
    \df \bar{X}_t  &= \left[-f(t,\bar{X}_t) + b^2(t) s(T-t,\bar{X}_t) \right] \df t + b(t) \df W_t,
\end{align*}
with $\bar{X}_0 \sim \nu$ and the score $s(t,x) = \Df_x \log p_{t}(x)$ approximated by minimising a denoising score matching objective \cite{hyvarinen2005estimation, vincent2011connection}.

One faces a number of challenges when lifting this to an infinite dimensional Hilbert space $H$. Firstly, the lack of Lebesgue measure on $H$ raises the question on how the marginal $p_t(x)$ is to be understood. Typically, one either constructs a suitable Gaussian reference measure as in \cite{lim2025score} or, if the drift is linear, bounded and constant, abstracts the score as a conditional expectation \cite{pidstrigach2024infinite, baldassari2023conditional}. 

Secondly, naively taking $W$ to be a cylindrical Wiener process on $H$ - the limiting object of finite-dimensional Wiener process - renders most noising processes ill-defined. Instead, $W$ needs to inject noise that is \textit{preconditioned} to match the regularity of the target measure $\mu$ by some covariance operator $C$. One typically distinguishes two fundamentally different regimes; given an associated Gaussian measure $\nu = \calN(0,C)$ on $H$, one assumes that either (A) $\mu \ll \nu$ or (B) $\mu$ to be supported on the Cameron-Martin space $H_{\nu} = C^{\frac12}(H)$ of $\nu$.
Notice that, since the Cameron-Martin space $H_{\nu}$ is a $\nu$-null set in the infinite-dimensional setting, these regimes are mutually exclusive for any given $\nu$.
Heuristically speaking, situation (A) typically corresponds to target measures $\mu$ whose sample paths are as \textit{rough} as the reference Gaussian measure. These appear, for example, in Bayesian inverse problems \cite{stuart2010inverse,baldassari2023conditional, oliviero2025generative} and data assimilation \cite{bain2009fundamentals}. To the best of our knowledge, in the context of generative diffusions, this regime has so far only been considered in \cite{pidstrigach2024infinite} and \cite{baldassari2023conditional}.

On the other hand, situation (B) is applicable in settings in which $\mu$ lies on a manifold, possibly lower-dimensional, of \textit{smooth} samples, as is common in applications involving image and video data \cite{pope2021intrinsic, thornton2022riemannian}. Generative diffusions under this assumption have been treated in \cite{pidstrigach2024infinite, lim2025score, franzese2023continuous, hagemann2025multilevel, lim2023score-evolution}.

Lastly, proving the existence of the time reversed process is notoriously difficult. One either needs to assume simple, bounded dynamics or assume a number of hard-to-verify assumptions, see \cite{franzese2025generative} for a detailed discussion on this issue. 

In our approach, we circumvent the need of a time reversal by instead \textit{forcing} a reference process
\begin{align*}
        \df X_t &= f(t,X_t) \df t + b(t) \df W_t, \quad t \in [0,T], 
\end{align*}
initiated at some tractable measure $\mu_0$, to hit the target measure $\mu$ at time $T$. Forcing is carried out by a change of measure known as \textit{Doob's h-transform}, a well-known technique to condition SDEs for example in the context of diffusion bridges \cite{baudoin2002conditioned, schauer17guided, heng2025simulating}. 
In the infinite-dimensional context, it has been shown to provide well-defined conditioning for a wide class of processes \cite{fuhrman2003class, baker2024conditioning, piepersethmacher2025class}. 
The forced process satisfies the SDE
\begin{align*}
        \df X^h_t &= [ f(t,X^h_t)  + b^2(t) s(t,X^h_t)] \df t + b(t) \df W_t,
\end{align*}
where $s(t,x) = \Df_x \log h(t,x)$ with $h$ defining the change of measure. By leveraging the fact that the $h$-transform is applicable for unbounded operators $f$, forced generative diffusions can be constructed whose transition kernels are absolutely continuous with respect to a Gaussian reference measure if $\mu$ satisfies either of the data regimes (A) or (B) above.

The dynamics of $X^h$ closely mirror the dynamics of the time reversed process, though a fundamental difference is that $X^h_T$ is ensured to reach the target measure $\mu$, regardless of the time horizon $T$. This is partly due to effect on the measure transformation on the initial distribution $\df \mu_0^h = h(0,x) \df \mu_0$, which corrects for a mismatch between the prior $\mu_0$ and the (backwards) noised data distribution if $T$ is not sufficiently large. 
As is typical, the functions $h(t,x)$ and $s(t,x)$ are generally unknown. However, given samples from the target $\mu$, $s(t,x)$ can be approximated by minimising a score-matching-type objective. 
After training, new samples of $\mu$ can then be generated by (i) Langevin sampling from $\mu^h_0$ based on the score $h(0,x)$, and (ii) subsequently forward simulating the forced process $X^h.$

\subsection{Contribution}
We introduce a principled framework for defining generative diffusion models in infinite dimensions based on Doob's $h$-transform. This circumvents the need for time reversal and instead forces a reference diffusion to the target measure. Unlike existing noising-denoising approaches, our method is well-defined for arbitrary time horizons and avoids pathologies caused by the lack of convergence of a noising process. A rigorous derivation is provided, establishing the existence of the $h$-transformed process for a wide class of infinite-dimensional reference diffusions under verifiable conditions. Moreover, we show that the $h$-transformed process can be learned via a score-matching objective and derive bounds between the induced sampling measure and the target.
The effectiveness of our approach is demonstrated on both synthetic and real-world infinite-dimensional data.

\section{Setup}
\label{sec:setup}
Let $H$ be an infinite-dimensional Hilbert space, equipped with its Borel algebra $\calB(H)$ and let $\mu$ be a measure on $H$.
Given a training data set $y_1, \ldots y_N \overset{\text{i.i.d.}}{\sim} \mu$, we consider the task of learning to sample from the target $\mu$. To derive our theory, we assume $\mu$ to satisfy the following condition, which corresponds to situation (A) in Section \ref{subsec:challenges-approach}.
\begin{assumption}
\label{ass:reference}
	There exists a Gaussian measure $\nu = \calN(0,C)$ on $H$ such that $\nu$ dominates $\mu$.
\end{assumption}

We define a generative diffusion by \textit{forcing} or (\textit{steering}) a reference diffusion process $X$ to equal, in law, $\mu$ at an arbitrary time $T > 0$.
For this, introduce the infinite-dimensional SDE 
\begin{align}
\begin{split}
\label{eq:dX}	
\df X_t &= \left[ A(t) X_t + F(t,X_t) \right] \df t + B(t) \df W_t,
\end{split}
\end{align}
where $A = (A(t))_{t \geq 0}$ is a family of densely defined operators with a common domain $D_A$, generating an evolution family $U = (U(t,s))_{0 \leq s \leq t \leq T}$. Moreover, $F(t,x)$ is a Lipschitz continuous non-linearity (both in $t$ and $x$), $B$ a family of bounded, linear operators with dense range in $H$ and $W$ a cylindrical Wiener process on $H$, defined on a stochastic basis $(\Omega, \calF, (\calF_t)_t, \P)$.
It is assumed throughout that, for any initial value $x_0$, Equation \eqref{eq:dX} admits a unique mild solution $(X_t)_{t \in [0,T]}$ satisfying
\begin{align}
    \label{eq:Xt}
\begin{split}
	X_t &= U(t,0) x_0 + \int_0^t U(t,s) F(s,X_s) \, \df s \\
	&\quad + \int_0^t U(t,s) B(s) \, \df W_s.
\end{split}
\end{align}
For details on the existence and uniqueness of $X$ in the general case, see \cite{daprato2014stochastic, veraar2010non, knable2011ornstein, cerrai2025smoothing}. Existence and uniqueness for relevant diffusions in the context of generative diffusions is shown in Section \ref{sec:vp-spde}.

\section{Forcing via Doob's $h$-transform}
\label{sec:forcing}

This section introduces the appropriate measure transformation known as \textit{Doob's h-transform} under which the process $X$ is forced towards the distribution $\mu$. Throughout, let $X$ be initiated with $X_0 \sim \mu_0$ for some Borel measure $\mu_0$.
The following assumption will be needed.

\begin{assumption}
\label{ass:densities}
The mild solutions $X$ to \eqref{eq:dX} admit transition densities $p(s,x;t,y)$ with respect to $\nu$, defined by
	\begin{align*}
		\P(X_t \in A \mid X_s = x) = \int_A p(s,x;t,y) \, \df \nu(y),
	\end{align*}
	for all $x \in H, A \in \calB(H)$ and $0 \leq s < t \leq T$.
Moreover, the transition- and marginal densities $p_t(x)$ are such that
\begin{align}
\label{eq:def-h}	
	h(t,x) = \int_H \dfrac{p(t,x;T,y)}{p_T(y)} \, \df \mu(y), \quad t < T,
\end{align}
is well defined with Fréchet derivative $\Df_x h(t,x)$ for any $t < T$ of at most polynomial growth in $x \in H.$
\end{assumption}

\begin{remark}
   Assumption \ref{ass:densities} might at first seem restrictive, as absolute continuity of measures in the infinite-dimensional setting is typically hard to obtain. However, as will be shown in Section \ref{sec:vp-spde}, a wide class of diffusions that satisfy Assumption \ref{ass:densities} can be constructed by leveraging the fact that forced processes can be derived for SDEs with unbounded drift operators. 
\end{remark}

To force the diffusion $X$ towards the target measure $\mu$ at time $T$, we apply a version of Doob's $h$-transform with the choice of $h$ as defined in \eqref{eq:def-h}.
The following theorem makes this precise. 
\begin{theorem} 
\label{thm:dXh}
	There exists a unique measure $\P^h$ on $\calF_T$, defined by 
	\begin{align*}
		\df \P^h_t = h(t,X_T) \df \P_t, \quad t < T,
	\end{align*}
	such that $\P^h(X_T \in A) = \mu(A)$ for all $A \in \calB(H).$
	Moreover, $X$ under $\P^h$ is a mild solution to 
	\begin{align}
	\begin{split}
	\label{eq:dXh}
	\df X^h_t
	&= \left[ A(t) X^h_t + F(t, X^h_t)
   	   + (BB^*)(t) s(t,X^h_t) \right] \df t \\
	&\quad + B(t) \df W^h_t, \\
	X^h_0 &\sim \mu^h_0(x),
	\end{split} 
	\end{align}
	where $s(t,x) = \Df_x \log h(t, x)$, $\df \mu_0^h(x) = h(0,x) \df \mu_0(x)$ and $W^h$ is a $\P^h$-cylindrical Wiener process. 
\end{theorem}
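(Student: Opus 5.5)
Note first that the defining relation in the statement presumably means $\df \P^h|_{\calF_t} = h(t,X_t)\,\df \P|_{\calF_t}$ for $t<T$. The density is evaluated at $X_t$, and the time-$T$ object is $h(T,x) = \frac{\df\mu}{\df \mu_T}(x)$, where $\mu_T = p_T\,\nu$ is the law of $X_T$. The proof splits into three steps: identify $h(t,X_t)$ as a martingale, build $\P^h$ from it, and read off the SDE via Girsanov.

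\textbf{Step 1: martingale property.} Using the transition densities of Assumption~\ref{ass:densities}, I will show that $h(t,X_t) = \E[\,h(T,X_T)\mid \calF_t\,]$, with $h(T,y) = \frac{1}{p_T(y)}\frac{\df \mu}{\df \nu}(y)$. Since the mild solution is Markov, this expectation equals
\[
\int_H p(t,X_t;T,y)\, h(T,y)\,\df\nu(y) = \int_H \frac{p(t,X_t;T,y)}{p_T(y)}\,\df\mu(y) = h(t,X_t).
\]
Two side issues need checking: $\mu \ll \nu$ (Assumption~\ref{ass:reference}), and $p_T>0$ $\mu$-a.e. For the latter I would either use $\mu\ll\mu_T$, or restrict attention to $\{p_T>0\}$ and note that $\mu(\{p_T=0\})=0$ is needed for the statement. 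Chapman--Kolmogorov gives consistency across times. Moreover, $\E[h(0,X_0)] = \int h(0,x)\,\df\mu_0(x) = \mu(H) = 1$, so $h(t,X_t)$ is a nonnegative mean-one martingale on $[0,T]$ closed by $h(T,X_T)$.

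\textbf{Step 2: the measure.} Define $\df\P^h = h(T,X_T)\,\df\P$ on $\calF_T$. By the tower property, its restriction to $\calF_t$ has density $h(t,X_t)$, and any measure with these restrictions for all $t<T$ agrees with $\P^h$ on $\bigcup_{t<T}\calF_t$. This union generates $\calF_{T-}$, which gives uniqueness once $\calF_T = \calF_{T-}$; this holds for the continuous process $X$ with its natural filtration. Then
\[
\P^h(X_T\in A) = \int_A h(T,y)\, p_T(y)\,\df\nu(y) = \mu(A).
\]
Likewise, the law of $X_0$ under $\P^h$ is $h(0,x)\,\df\mu_0$.

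\textbf{Step 3: Girsanov.} Write $M_t = h(t,X_t)$. I would apply an infinite-dimensional It\^o formula to $h(t,X_t)$ on $[0,T-\varepsilon]$. The cleanest way to avoid unbounded-operator issues is to use the Yosida approximation $A_n$ (or a finite-dimensional Galerkin projection), where the mild solution is strong, and then pass to the limit using the polynomial growth of $\Df_x h$ and the moment bounds of $X$. Since $M$ is a martingale, its drift part vanishes, so this step does not require $h$ to satisfy the backward Kolmogorov equation explicitly, and
\[
\df M_t = \langle B(t)^*\Df_x h(t,X_t), \df W_t\rangle .
\]
Hence $M_t = \mathcal{E}\big(\int_0^\cdot \langle B^* s(r,X_r),\df W_r\rangle\big)_t$ on $\{M>0\}$, which has full $\P^h$-measure. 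Girsanov's theorem for cylindrical Wiener processes (Da Prato--Zabczyk) then shows that
\[
W^h_t = W_t - \int_0^t B(r)^* s(r,X_r)\,\df r
\]
is a $\P^h$-cylindrical Wiener process on $[0,T-\varepsilon]$. Substituting into \eqref{eq:Xt} yields the mild form of \eqref{eq:dXh}, with the additional convolution term $\int_0^t U(t,r)(BB^*)(r)s(r,X_r)\,\df r$. Letting $\varepsilon\downarrow 0$ then extends the result to $[0,T)$, and to $T$ by continuity.

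The main obstacle is Step 3: justifying It\^o's formula for the mild solution, where $h$ is only known to be Fréchet differentiable with polynomially growing derivative. This requires either a $C^{1,2}$ regularity of $h$ that is not stated or an approximation argument. A further difficulty is controlling the Girsanov integrand as $t\uparrow T$, where $s(t,x)$ may blow up; working on $[0,T-\varepsilon]$ and localising with stopping times is how I would handle it.
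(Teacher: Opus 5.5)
Your Steps 1--2 are correct and slightly cleaner than the paper's. Writing $h(t,\cdot)$ as the transition operator applied to $h(T,\cdot)=\mathrm{d}\mu/\mathrm{d}\mu_T$ gives the martingale property from the Markov property alone. Defining $\mathrm{d}\mathbb{P}^h=h(T,X_T)\,\mathrm{d}\mathbb{P}$ avoids the Carath\'eodory extension and gives $\mathbb{P}^h\ll\mathbb{P}$ on all of $\mathcal{F}_T$. It also makes $\mathbb{P}^h(X_T\in A)=\mu(A)$ immediate, where the paper derives it from the bridge disintegration in Lemma~\ref{lem:Ph-Mu}. Your caveats $\mu(\{p_T=0\})=0$ and $\mathcal{F}_{T-}=\mathcal{F}_T$ are used implicitly by the paper as well. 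Step 3 has the same architecture as the paper's proof. Your ``martingale, hence no drift'' is equivalent to the paper's ``$Lh=0$ by space-time harmonicity''.

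The genuine gap is the It\^o-type decomposition of $h(t,X_t)$, and the route you sketch does not supply it. Assumption~\ref{ass:densities} gives only first-order Fr\'echet differentiability of $h(t,\cdot)$: there is no $\mathrm{D}_x^2h$ and no $\partial_t h$. The Yosida or Galerkin approximation removes the unboundedness of $A$, but the classical It\^o formula for $h(t,X^n_t)$ still requires $h\in C^{1,2}$.

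Suppose you also mollify $h$. Polynomial growth of $\mathrm{D}_xh$ plus moment bounds then let you pass to the limit in $h(t,X^n_t)$ and in the stochastic integrals. They do not let you pass to the limit in the drift terms, which involve $\partial_t h$, $\langle A_nX^n,\mathrm{D}_xh\rangle$ and $\operatorname{tr}[BB^*\mathrm{D}_x^2h]$, none of which you control. The limiting remainder is then only a locally uniform limit of finite-variation processes. Such a limit can be a nonzero continuous martingale (think of piecewise-linear interpolations of a Brownian path), so ``a continuous martingale of finite variation is constant'' cannot be invoked.

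What is needed is a first-order decomposition valid for Fr\'echet-differentiable $h$ and mild solutions,
\[
h(t,X_t)=h(0,X_0)+\int_0^t Lh(s,X_s)\,\mathrm{d}s+\int_0^t\langle B^*(s)\mathrm{D}_xh(s,X_s),\mathrm{d}W_s\rangle,
\]
with $L$ the space-time generator taken in the topology of bounded pointwise convergence. The paper imports exactly this from Lemma~3.3 of \cite{piepersethmacher2025class}. Results of this weak-Dirichlet type, for $C^{0,1}$ functions of convolution-type processes, are also available in \cite{fabbri17stochastic}. With such a lemma in hand, the rest of your argument goes through, including your more careful handling of Girsanov on $[0,T-\varepsilon]$.
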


Following Theorem \ref{thm:dXh}, if $s(t,x)$ is known, new samples of $\mu$ can be generated by (i) sampling from $x_0 \sim \mu_0^h$, for example through an infinite-dimensional Langevin sampler \cite{cotter2013mcmc}, and (ii) subsequently evolving $x_0$ according to the forced SDE \eqref{eq:dXh}. 

\begin{remark}
\label{rem:hlangevin}
The initial step (i) is a fundamental feature of our proposed methodology and prevents pathologies prevalent in generative diffusion models due to insufficient noising at time $T$.
Notice that sampling of $X^h$ can be characterised in two equivalent ways. In addition to solving Equation \eqref{eq:dXh}, samples of $X$ under $\P^h$ can be obtained by the following disintegration (cf. Lemma \ref{lem:Ph-Mu}):
(a) draw $y \sim \mu$ from the target, and then (b) sample the \textit{infinite-dimensional diffusion bridge} $X^y_t := X_t \mid X_T = y$ under $\P$.

The diffusion bridge $X^y_t$ can be understood as \textit{noising} the samples $y$ of the target $\mu$ \textit{backwards in time} to $X_0^y.$
This implies that, if the reference diffusion $X$ is mixing and $T$ is sufficiently large, $p(0,x;T,y)$ may be assumed to be approximately independent of $x$ and $h(0,\cdot) \equiv 1$. In that case, our methodology closely resembles the typical noising-denoising framework.

However, if this assumption is violated, there is a mismatch between the noised data distribution and the assumed prior $\mu_0$. In that case, sampling from $\mu_0^h$ through a Langevin sampler with score $s(0,x) = \Df_x \log h(0,x)$ corrects this mismatch. This ensures that step (ii) draws exact samples of $\mu$ at time $T$, \textit{independently of the choice of $T$}. This contrasts with time-reversed generative diffusions, in which insufficient noising time $T$ causes biased and, in extreme cases, degenerate sampling.
\end{remark}

\section{Approximation of Doob's h-transform}
\label{sec:training}
The previous section introduced the appropriate choice of the $h$-function to steer $X$ towards $\mu$ based on Doob's $h$-transform. As is typical, except for a limited number of special cases, $h(t,x)$ and hence the steering function $s(t,x)$ appearing in the transformed diffusion \eqref{eq:dXh} are unknown. 
However, given samples $y_1, \ldots y_N \overset{\text{i.i.d.}}{\sim} \mu$, the steering function can be approximated by minimising a loss that resembles the score-matching objective of the noising-denoising approach \cite{hyvarinen2005estimation, vincent2011connection}.

For this, let $s_{\theta}$ be some parametrised approximation of $s$ for some $\theta$ in a parameter space $\Theta$ and let $X^{\theta}$ be a diffusion process satisfying
	\begin{align}
	\label{eq:dXtheta}
	\begin{split}
	\df X^{\theta}_t
	&= [ A(t) X^{\theta}_t + F(t, X^{\theta}_t) ] \df t\\
   	   &\quad + (BB^*)(t) s_{\theta}(t,X^{\theta}_t) \df t \\
	&\quad + B(t) \df W_t, \\
	\end{split} 
	\end{align}
with initial distribution $X_0^{\theta} \sim \mu_0$.

Denote by $\bbX^h$ and $\bbX^{\theta}$ the path measures of $X^h$ and $X^{\theta}$ on $C(0,T;H)$.
The following proposition defines a loss $\calL(\theta)$ whose minimisation is equivalent to minimising the Kullback-Leibler divergence from $\bbX^h$ to $\bbX^{\theta}$.

\begin{proposition}
\label{prop:loss}
Define the loss function
\begin{align}\label{eq:calLdef}
\calL(\theta) &= \int_0^T \E^h  | s_{\theta}(t,X_t) - \Df_{x_t} \log p(t,X_t;T,X_T) |_{B_t^*}^2	 \df t,
\end{align}
where $|x|_{B_t^*} = |B^*(t)x|$.
Then $\theta^{\star} = \argmin_{\theta} \calL(\theta)$ if and only if $\theta^{\star} = \argmin_{\theta} \DKL(\bbX^h \Vert\ \bbX^{\theta})$.
\end{proposition}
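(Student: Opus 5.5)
The plan is to express $\DKL(\bbX^h \Vert \bbX^{\theta})$ through Girsanov's theorem and then show that it differs from $\calL(\theta)$ only by an additive constant independent of $\theta$. One caveat is the initial laws. As defined, $X^\theta_0 \sim \mu_0$ while $X^h_0 \sim \mu^h_0$. The chain rule for relative entropy on path space gives
\begin{align*}
\DKL(\bbX^h \Vert \bbX^{\theta}) = \DKL(\mu_0^h \Vert \mu_0) + \E^h\bigl[\DKL(\bbX^h(\cdot \mid X_0) \Vert \bbX^{\theta}(\cdot\mid X_0))\bigr],
\end{align*}
and the first term does not depend on $\theta$. It therefore suffices to treat the conditional path laws given $X_0$. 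By Theorem \ref{thm:dXh}, these share the linear part $A(t)x + F(t,x)$ and the noise $B(t)\df W$, and differ only in the drifts $(BB^*)(t)s(t,\cdot)$ and $(BB^*)(t)s_\theta(t,\cdot)$. Since the drift difference lies in the range of $B$, the infinite-dimensional Girsanov theorem (as in \cite{daprato2014stochastic}) applies with control $u_t = B^*(t)\bigl(s(t,X_t) - s_\theta(t,X_t)\bigr)$. This yields
\begin{align*}
\DKL(\bbX^h \Vert \bbX^{\theta}) = C_0 + \tfrac12 \int_0^T \E^h |s_\theta(t,X_t) - s(t,X_t)|_{B_t^*}^2 \df t.
\end{align*}
The stochastic integral term has zero $\P^h$-mean. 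To justify this and Novikov-type integrability, I would use the polynomial growth of $\Df_x h$ from Assumption \ref{ass:densities}, together with moment bounds on the mild solution, and require analogous growth of $s_\theta$. The expectation is finite, or else both sides are infinite.

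The second step is the denoising identity $s(t,X_t) = \E^h[\Df_{x}\log p(t,X_t;T,X_T)\mid X_t]$, to be proved $\P^h$-a.s. for $t<T$. From \eqref{eq:def-h}, differentiating under the integral gives
\begin{align*}
\Df_x h(t,x) = \int_H \Df_x \log p(t,x;T,y)\,\frac{p(t,x;T,y)}{p_T(y)}\,\df\mu(y).
\end{align*}
Dividing by $h(t,x)$ produces an average of $\Df_x\log p(t,x;T,y)$ against the probability measure $\frac{p(t,x;T,y)}{h(t,x)p_T(y)}\df\mu(y)$. I would identify this measure with the $\P^h$-conditional law of $X_T$ given $X_t=x$, using the disintegration of Lemma \ref{lem:Ph-Mu}: under $\P^h$, $X_T\sim\mu$ and the process is the $\P$-bridge. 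By Bayes' rule in densities with respect to $\nu$, the joint $\P^h$-law of $(X_t,X_T)$ has density $p_t(x)p(t,x;T,y)/p_T(y)$ against $\nu\otimes\mu$. Conditioning on $X_t$ gives exactly the measure above. Exchanging derivative and integral needs domination, which I would take from the regularity in Assumption \ref{ass:densities}.

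With this identity the bias–variance decomposition is routine. Expanding
\begin{align*}
|s_\theta - \Df\log p|_{B_t^*}^2 = |s_\theta - s|_{B_t^*}^2 + 2\langle B^*(s_\theta - s), B^*(s-\Df\log p)\rangle + |s-\Df\log p|_{B_t^*}^2,
\end{align*}
the cross term vanishes after conditioning on $X_t$, because $B^*(t)$ is bounded and linear and so commutes with the conditional expectation. Hence $\calL(\theta) = \int_0^T\E^h|s_\theta - s|^2_{B_t^*}\df t + C_1$ with $C_1$ independent of $\theta$. Therefore $\DKL(\bbX^h\Vert\bbX^\theta) = \tfrac12\calL(\theta) + \mathrm{const}$, and the two problems have the same minimisers.

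I expect the main obstacle to be finiteness of $C_1 = \int_0^T \E^h|s - \Df\log p|^2_{B_t^*}\df t$. Near $t=T$ the bridge score $\Df\log p(t,x;T,y)$ typically blows up like $(T-t)^{-1}$ in suitable norms. If $C_1=\infty$, the equivalence must be read as holding up to an infinite constant. I would handle this by first proving the identity on $[0,T-\varepsilon]$, where all terms are finite, and then letting $\varepsilon\to 0$ (equivalently, comparing loss differences $\calL(\theta)-\calL(\theta')$). This gives equality of argmins without requiring $C_1<\infty$.
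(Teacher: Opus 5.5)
Your proposal is correct and follows the same route as the paper. Girsanov reduces $\DKL(\bbX^h\Vert\bbX^{\theta})$ to $\tfrac12\int_0^T\E^h|s_\theta-s|^2_{B_t^*}\,\df t$ plus the $\theta$-independent initial-law term, and differentiating under the integral in \eqref{eq:def-h} together with the disintegration of Lemma~\ref{lem:Ph-Mu} replaces $s$ by $\Df_{x}\log p(t,X_t;T,X_T)$ in the cross term; your conditional-expectation/orthogonality formulation is just a repackaging of the chain of equalities \eqref{eq:ao92873127983}.

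Your truncation to $[0,T-\varepsilon]$ is a worthwhile addition that the paper omits. Its constant $C_3=\tfrac12\int_0^T\E^h|B^*(t)\Df_{x}\log p(t,X_t;T,X_T)|^2\,\df t$ generally diverges at $t=T$ (e.g.\ for the VP-SPDE). The identity $\DKL=C_1+C_2-C_3+\calL(\theta)$ is therefore only meaningful for loss differences $\calL(\theta)-\calL(\theta')$, exactly as you propose.
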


\begin{remark}\label{remark:lossapprox}
In practice, the loss function $\calL(\theta)$ is to be approximated by a Monte Carlo scheme based on samples of the path $X$ under $\P^h$.
As noted in Remark \ref{rem:hlangevin}, these can be drawn by simulating diffusion bridges $X^{y_i}$ for the given data samples $y_1, \ldots y_N$.
If the reference diffusion \eqref{eq:dX} is linear and the initial distribution $\mu_0$ is either a Dirac measure or a Gaussian measure, absolutely continuous with respect to $\nu$, then $X^y$ is the tractable infinite-dimensional Ornstein-Uhlenbeck bridge, which can be sampled directly \cite{goldys08ornstein}. Moreover, since the transition density of $X$ is Gaussian, the score $\Df_x \log p(t,x;T,y)$ can be computed in closed-form.

In the case that \eqref{eq:dX} is non-linear, the conditioned process $X^y$ is itself intractable. In that case, weighted samples of the non-linear diffusion bridge $X^y$ can be drawn based on \textit{guided proposals} \cite{schauer17guided, piepersethmacher2025simulation}. Moreover, $\Df_x \log p(t,x;T,y)$ may be approximated numerically, for example based on an EM-scheme as done in \cite{heng2025simulating}. The loss $\calL(\theta)$ can then be approximated via importance sampling.
\end{remark}

In order to fully approximate $\bbX^h$, we require the class of parametrised approximations $\{s_{\theta} : \theta \in \Theta\}$ to be sufficiently large as stated in the following.
\begin{assumption}
\label{ass:sufficientclass}
The function $s$ is continuous and there exists some $\theta \in \Theta$ such that $s_{\theta}(t,x) = s(t,x)$ for every $t \in [0,T]$, $\nu$-a.e. $x \in H$.
\end{assumption}

\begin{corollary}
\label{cor:Xthetastar}
Let Assumption \ref{ass:sufficientclass} hold and let $X^{\theta}$ satisfy \eqref{eq:dXtheta} such that the law of $X^{\theta}_0$ has score $s_{\theta}(0,x)$ with respect to $\mu_0$.
Then $\theta^{\star} = \argmin_{\theta} \calL(\theta)$ if and only if $\bbX^{\theta^*} = \bbX^h$.
\end{corollary}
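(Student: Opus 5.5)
The plan is to reduce the statement to Proposition \ref{prop:loss} together with the fact that the Kullback--Leibler divergence vanishes exactly when the two path measures coincide. First, from the proof of Proposition \ref{prop:loss} (a Girsanov computation) we expect to have the decomposition
\begin{align*}
\DKL(\bbX^h \Vert \bbX^{\theta}) &= \DKL(\mu_0^h \Vert \mathrm{Law}(X_0^{\theta})) \\
&\quad + \tfrac12 \int_0^T \E^h | s_{\theta}(t,X_t) - s(t,X_t)|_{B_t^*}^2 \df t .
\end{align*}
We also expect $\calL(\theta)$ to equal the second term plus a constant $K$ independent of $\theta$. This follows from the orthogonality identity
\begin{align*}
\E^h\bigl[\Df_{x_t}\log p(t,X_t;T,X_T) \mid X_t\bigr] = s(t,X_t),
\end{align*}
which holds because, under $\P^h$, the law of $X_T$ given $X_t=x$ has density $p(t,x;T,y)/(p_T(y)h(t,x))$ with respect to $\mu$. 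Differentiating \eqref{eq:def-h} under the integral (justified by the polynomial growth in Assumption \ref{ass:densities}) then gives the identity.

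Second, we would handle the initial law. By hypothesis, $X_0^\theta$ has density proportional to $\exp$ of a potential whose gradient is $s_\theta(0,\cdot)$ relative to $\mu_0$. The target initial law $\mu_0^h$ has density $h(0,\cdot)$, with gradient of log equal to $s(0,\cdot)$. If $s_\theta(0,\cdot)=s(0,\cdot)$ $\nu$-a.e., then by continuity (Assumption \ref{ass:sufficientclass}) the two log-densities differ by a constant on $H$. Normalisation then forces that constant to be zero, so $\mathrm{Law}(X_0^\theta)=\mu_0^h$. Hence, for parameters whose score matches on the support, the initial KL term vanishes automatically.

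Third, the equivalence itself. By Assumption \ref{ass:sufficientclass} there is $\bar\theta$ with $s_{\bar\theta}=s$, so $\calL(\bar\theta)=K=\min_\theta\calL$. Thus $\theta^\star$ minimises $\calL$ iff $\E^h|s_{\theta^\star}-s|_{B_t^*}^2=0$ for a.e.\ $t$, i.e.\ iff $B^*(t)(s_{\theta^\star}-s)(t,X_t)=0$ $\P^h$-a.s. Since $B(t)$ has dense range, $B^*(t)$ is injective, which gives $s_{\theta^\star}(t,\cdot)=s(t,\cdot)$ on the support of the law of $X_t$ under $\P^h$. By the previous step the initial laws then agree, and both KL terms vanish, so $\bbX^{\theta^\star}=\bbX^h$. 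Conversely, if $\bbX^{\theta^\star}=\bbX^h$, the KL divergence is zero, so by the decomposition the integral term is zero and $\calL(\theta^\star)=K$ is minimal.

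The main obstacle is upgrading equality of the scores "$\P^h$-a.s.\ along the path" to equality $\nu$-a.e. (and at $t=0$ on all of $H$). This is needed to conclude that the initial laws coincide and to apply uniqueness of the mild solution of \eqref{eq:dXtheta}. I would try to argue that the marginals of $X_t$ under $\P^h$ are equivalent to $\nu$, using the positivity of the transition densities from Assumption \ref{ass:densities}, and then invoke continuity of $s$ and $s_\theta$ to pass from a.e.\ equality to everywhere equality.
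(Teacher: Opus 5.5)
Your proposal is correct and follows essentially the same route as the paper. The paper also reduces minimisation of $\calL$ to $\tilde{\calL}(\theta^\star)=\frac12\E^h\int_0^T|B^*(t)(s_{\theta^\star}-s)(t,X_t)|^2\,\df t=0$ via the computations of Proposition \ref{prop:loss}, then uses Assumption \ref{ass:sufficientclass} for attainability, injectivity of $B^*(t)$, and continuity to get $s_{\theta^\star}(0,\cdot)=s(0,\cdot)$ and hence matching initial laws, reading off the converse from $\DKL(\bbX^h\Vert\bbX^{\theta^\star})=0$. Your treatment of the initial-law normalisation and of the support/continuity issue is more explicit than the paper's one-line appeal to continuity of $s$ and of $t\mapsto X_t$ (which, as you note, implicitly also needs continuity of $s_{\theta^\star}$), but the structure of the argument is the same.
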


\section{VP-SPDE}
\label{sec:vp-spde}
Inspired by the success of the \textit{Variance Preserving (VP)} SDE proposed in \cite{song2021scorebased}, we introduce in this section the \textit{VP-SPDE}. The VP-SPDE mimics the temporal noise-scheduling of the VP-SDE, while accounting for the spatial structure of the target measure, as is necessitated by the infinite-dimensional setting.

For this, let $\beta(t)$ be some bounded, positive function and define the operators $A = \frac12 C^{-\gamma}$ and $Q = C^{1-\gamma}$ for some $\gamma \in (0,1]$. Denote by $X$ the mild solution to the inhomogeneous Ornstein-Uhlenbeck equation
\begin{align}
\label{eq:vp-spde}
\begin{split}
	\df X_t &= - \beta(t) A X_t \df t + \sqrt{\beta(t) Q} \df W_t, \\ 
	X_0 &\sim \calN(0,C).	
\end{split}
\end{align}
Notice that the covariance operator $C$, predetermined by Assumption \ref{ass:reference}, is a trace-class operator. This renders $C^{-\gamma}$ an unbounded, densely defined operator on $H$, hence motivating the terminology of $X$ as the VP-SPDE.

\begin{proposition}
\label{prop:vp-spde}
There exists a unique mild solution $X$ to the VP-SPDE \eqref{eq:vp-spde}. Moreover, $X$ satisfies Assumption \ref{ass:densities}.
\end{proposition}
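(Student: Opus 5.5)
The plan is to diagonalise everything in the eigenbasis of $C$ and reduce the VP-SPDE to a countable family of independent scalar Ornstein--Uhlenbeck processes. Let $(e_k,\lambda_k)$ be the eigenpairs of $C$, with $\sum_k\lambda_k<\infty$. Then $A=\frac12 C^{-\gamma}$ has eigenvalues $a_k=\frac12\lambda_k^{-\gamma}\to\infty$, and $-\beta(t)A$ is self-adjoint and negative. Hence it generates the evolution family
\[
U(t,s)e_k=\exp\Bigl(-a_k\int_s^t\beta(r)\,\df r\Bigr)e_k ,
\]
which is a contraction. Write $B(t)=\sqrt{\beta(t)Q}$. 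Existence and uniqueness of the mild solution \eqref{eq:Xt} with $F=0$ then reduces, by the standard results in \cite{daprato2014stochastic}, to checking that the stochastic convolution is well defined, i.e.
\[
\int_0^t\|U(t,s)B(s)\|_{HS}^2\,\df s<\infty .
\]
In coordinates this integral equals
\[
\sum_k \lambda_k^{1-\gamma}\int_0^t\beta(s)e^{-2a_k\int_s^t\beta}\,\df s=\sum_k \lambda_k\bigl(1-e^{-\lambda_k^{-\gamma}B_{s,t}}\bigr),
\]
where $B_{s,t}=\int_s^t\beta$. This is bounded by $\operatorname{tr}C$, so it is finite.

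The same computation gives the Gaussian transition law. Conditionally on $X_s=x$, the coordinates $\langle X_t,e_k\rangle$ are independent with
\[
\langle X_t,e_k\rangle\sim\calN\bigl(m_k x_k,\ \lambda_k(1-m_k^2)\bigr),\qquad m_k=e^{-a_kB_{s,t}}.
\]
So the transition kernel is $\calN(U(t,s)x,\,C-U(t,s)CU(t,s)^*)$, and in particular $\nu=\calN(0,C)$ is invariant: $p_t(y)\equiv1$ for all $t$. To obtain densities with respect to $\nu$, I would apply the Feldman--Hájek theorem coordinatewise, in the form of Kakutani's product criterion for one-dimensional Gaussians. The required conditions are:
\begin{itemize}
\item the mean shift $U(t,s)x$ lies in the Cameron--Martin space $C^{1/2}(H)$;
\item the variance ratios $1-m_k^2$ satisfy $\sum_k m_k^2<\infty$ (the Hilbert--Schmidt condition), with $1-m_k^2$ bounded away from $0$.
\end{itemize}
Both follow from super-exponential decay. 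Since $a_k\to\infty$ and $B_{s,t}>0$ for $s<t$ (as $\beta>0$), we have $m_k\le e^{-c\lambda_k^{-\gamma}}$, which is summable because $\lambda_k\to0$. Moreover $|C^{-1/2}U(t,s)x|^2=\sum_k \lambda_k^{-1}m_k^2x_k^2\le \sup_k(\lambda_k^{-1}m_k^2)\,|x|^2<\infty$. This gives the explicit Mehler-type density
\[
p(s,x;t,y)=\prod_k (1-m_k^2)^{-1/2}\exp\Bigl(-\frac{(y_k-m_kx_k)^2}{2\lambda_k(1-m_k^2)}+\frac{y_k^2}{2\lambda_k}\Bigr).
\]
The product converges $\nu$-a.s. and in $L^1(\nu)$.

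It remains to treat $h(t,x)=\int p(t,x;T,y)\,\df\mu(y)$, using $p_T\equiv1$. Since $\mu\ll\nu$, I would write this as $\int p(t,x;T,y)\,\rho(y)\,\df\nu(y)$ with $\rho=\df\mu/\df\nu$. Alternatively, by the symmetry of the OU kernel with respect to its invariant measure, it equals $(P_{t,T}\rho)(x)$, where $P_{t,T}$ is the transition semigroup. This is finite because $P_{t,T}$ is a Markov operator on $L^1(\nu)$.

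The main obstacle is Fréchet differentiability and polynomial growth of $\Df_xh$ pointwise for every $x$, not just $\nu$-a.e., because $\rho$ is only $L^1$. I would handle this with the strong Feller/smoothing property, which holds since $U(t,s)(H)\subset C^{1/2}(H)$. Differentiating the Cameron--Martin shift gives a Bismut--Elworthy-type formula: with $\Gamma=C-UCU^*$,
\[
\Df_x h(t,x)[v]=\int \bigl\langle \Gamma^{-1/2}U v,\ \Gamma^{-1/2}(y-Ux)\bigr\rangle\, \rho(y)\,\nu_{t,x}(\df y),
\]
where $\nu_{t,x}$ is the transition law. This is the step where I expect most care to be needed. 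I would justify it by finite-dimensional truncation, then pass to the limit using uniform integrability from Cauchy--Schwarz on $p^2$. The resulting Gaussian-moment bound involves the operator norm of $\Gamma^{-1/2}U(T,t)$, which is finite by the decay of the $m_k$, and the term $|x|$. Together these give at most linear, hence polynomial, growth in $x$.
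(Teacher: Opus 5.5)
Up to the differentiability step, your argument is correct and essentially matches the paper's. You diagonalise in the eigenbasis of $C$, verify the trace condition for the stochastic convolution, and identify the kernel $\calN(U(t,s)x,\,C-U(t,s)CU(t,s)^*)$; the paper's $Q(t,s)$ has exactly the eigenvalues $\lambda_k(1-m_k^2)$. You then check Feldman--H\'ajek. The paper splits this last check into a Cameron--Martin step (boundedness of $Q(t,s)^{-1/2}U(t,s)$) and a covariance step comparing $\calN(0,Q(t,s))$ with $\calN(0,C)$. Your coordinatewise Kakutani argument is equivalent. Your observation that $\nu$ is invariant, so $p_t\equiv 1$, is a useful simplification that the paper only uses implicitly. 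There are two small slips: the trace integral is $\sum_k\lambda_k(1-e^{-\lambda_k^{-\gamma}B_{0,t}})$, and summability of $e^{-c\lambda_k^{-\gamma}}$ needs $\sum_k\lambda_k<\infty$, not merely $\lambda_k\to 0$.

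The genuine gap is the final step. The paper does not prove it; it delegates differentiability to \cite{piepersethmacher2025class}, Section 4.2, and never addresses growth. With only $\rho=\df\mu/\df\nu\in L^1(\nu)$, your argument cannot work.

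\emph{Pointwise finiteness.} The strong Feller property smooths $P_{t,T}\varphi$ for bounded $\varphi$, not for $L^1$ functions, and finiteness of $h(t,x)$ at every $x$ really fails. The $k$-th factor of your Mehler product has supremum $(1-m_k^2)^{-1/2}e^{x_k^2/(2\lambda_k)}$ over $y_k$. Hence for $x\notin C^{1/2}(H)$ you get $p(t,x;T,\cdot)\notin L^\infty(\nu)$. By $L^1$--$L^\infty$ duality, some probability density $\rho$ then gives $h(t,x)=\infty$.

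\emph{Uniform integrability.} Your Cauchy--Schwarz step on $p^2$ bounds $\int|g|\,p\,\rho\,\df\nu$ by $\|\rho\|_{L^2(\nu)}\|g\,p\|_{L^2(\nu)}$, which presupposes $\rho\in L^2(\nu)$. Even then, $\|p(t,x;T,\cdot)\|^2_{L^2(\nu)}=\prod_k(1-m_k^4)^{-1/2}\exp\bigl(\sum_k m_k^2x_k^2/(\lambda_k(1+m_k^2))\bigr)$ is of order $e^{c|x|^2}$, not linear in $|x|$. Your Gaussian-moment bound only controls the Bismut weight, which under $\nu_{t,x}$ is $\calN(0,|\Gamma^{-1/2}Uv|^2)$ regardless of $x$. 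All the $x$-dependence sits in $\rho$ tilted by $p(t,x;T,\cdot)$, and you have no control over that.

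\emph{Growth.} The linear-growth conclusion is in fact false. Take $\mu=\calN(u,C)$ with $u\in C^{1/2}(H)$, the building block of the paper's own mixture example. Then $h(t,x)=\exp\bigl(\langle C^{-1}U(T,t)u,x\rangle-\tfrac12|C^{-1/2}U(T,t)u|^2\bigr)$, so $\Df_xh=h\,C^{-1}U(T,t)u$ grows exponentially.

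A correct version therefore needs an extra hypothesis, such as $\rho\in L^q(\nu)$ for some $q>1$, together with a growth notion weaker than polynomial growth of $\Df_xh$. With that hypothesis, note that $p(t,x;T,\cdot)$ lies in every $L^{q'}(\nu)$ with norm locally bounded in $x$. H\"older's inequality then gives finiteness of $h$ everywhere and justifies your derivative formula.
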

The immediate corollary of this result and Theorem \ref{thm:dXh} is that the forced process via Doob's $h$-transform exists for \eqref{eq:vp-spde}.
\begin{corollary}
Let $X$ be the mild solution to \eqref{eq:vp-spde}, then $h(t,x)$ defined as in \eqref{eq:def-h} exists with a well-defined Fr\'echet derivative $D_{x}h(t,x)$ for any $t < T$ of at most polynomial growth in $x \in H$. In particular, there exists a unique measure $\P^h$ on $\calF_T$, defined by 
	\begin{align*}
		\df \P^h_t = h(t,X_T) \df \P_t, \quad t < T,
	\end{align*}
	such that $\P^h(X_T \in A) = \mu(A)$ for all $A \in \calB(H).$
	Moreover, $X^h$ under $\P^h$ is a mild solution to 
	\begin{align}
	\begin{split}
	\label{eq:dXhOU}
	\df X^h_t
	&= \left[- \beta(t)A X^h_t
   	   + \beta(t) Q s(t,X^h_t) \right] \df t \\
	&\quad + \sqrt{\beta(t) Q} \df W^h_t, \\
	X^h_0 &\sim \mu^h_0(x),
	\end{split} 
	\end{align}
	where $s(t,x) = \Df_x \log h(t, x)$, $\df \mu_0^h(x) = h(0,x) \df \mu_0(x)$, $\mu_0=\mathcal{N}(0,C)$ and $W^h$ is a $\P^h$-cylindrical Wiener process. 
\end{corollary}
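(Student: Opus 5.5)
This corollary follows by combining Proposition \ref{prop:vp-spde} with Theorem \ref{thm:dXh}; the only work is to check that every hypothesis of Theorem \ref{thm:dXh} holds in the VP-SPDE setting, and then to identify the resulting drift. I would proceed in three steps.

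\emph{Step 1: cast \eqref{eq:vp-spde} in the form \eqref{eq:dX}.} I would set
\[
A(t) = -\beta(t)A, \qquad F \equiv 0, \qquad B(t) = \sqrt{\beta(t)Q} = \sqrt{\beta(t)}\,C^{\frac{1-\gamma}{2}}, \qquad \mu_0=\calN(0,C).
\]
The family $A(t)$ has the common domain $D(C^{-\gamma})$. Since $\beta$ is bounded and positive, the evolution family is
\[
U(t,s)=\exp\Bigl(-\tfrac12\int_s^t\beta(r)\,dr\; C^{-\gamma}\Bigr).
\]
The zero nonlinearity is trivially Lipschitz. Each $B(t)$ is bounded, linear and self-adjoint, and it has dense range because $C$ is injective: $\nu$ is nondegenerate, as implicitly assumed when densities with respect to $\nu$ are used. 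Existence and uniqueness of the mild solution \eqref{eq:Xt} is the first part of Proposition \ref{prop:vp-spde}.

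\emph{Step 2: verify Assumption \ref{ass:densities}.} This is exactly the second part of Proposition \ref{prop:vp-spde}. It supplies the transition densities $p(s,x;t,y)$ with respect to $\nu$, well-definedness of $h$ in \eqref{eq:def-h}, and a Fréchet derivative $\Df_x h(t,x)$ of at most polynomial growth for $t<T$. This gives the first claim of the corollary verbatim.

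\emph{Step 3: apply Theorem \ref{thm:dXh}.} Invoking Theorem \ref{thm:dXh} gives the unique $\P^h$ with $\P^h(X_T\in\cdot)=\mu$, together with the mild SDE \eqref{eq:dXh}. It remains to compute the steering coefficient:
\[
(BB^*)(t) = \sqrt{\beta(t)}\,C^{\frac{1-\gamma}{2}}\sqrt{\beta(t)}\,C^{\frac{1-\gamma}{2}} = \beta(t)\,C^{1-\gamma} = \beta(t)Q.
\]
Substituting this into \eqref{eq:dXh} yields \eqref{eq:dXhOU}, with initial law $\df\mu_0^h = h(0,\cdot)\,\df\calN(0,C)$.

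The main obstacle, if any, is Step 1: one must make sure that the abstract standing hypotheses of Section \ref{sec:setup} hold for the unbounded operator $-\beta(t)A$. Two points need attention. First, $-\beta(t)A$ must generate a well-defined evolution family on $H$. Second, the stochastic convolution must be well defined, which requires
\[
\int_0^t \tr\bigl(U(t,s)\,\beta(s)Q\,U(t,s)^*\bigr)\,ds<\infty .
\]
Both should follow from the spectral calculus for the commuting self-adjoint operators $C^{-\gamma}$ and $C^{1-\gamma}$. Since Proposition \ref{prop:vp-spde} already asserts the mild solution exists, I would simply cite it rather than redo these computations. The remaining steps are direct substitution.
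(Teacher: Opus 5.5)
Your proposal is correct and follows the paper's argument: the paper obtains this corollary directly by combining Proposition~\ref{prop:vp-spde} (existence of the mild solution and Assumption~\ref{ass:densities}) with Theorem~\ref{thm:dXh}. Your identification $A(t)=-\beta(t)A$, $F\equiv 0$, $B(t)=\sqrt{\beta(t)Q}$, $(BB^*)(t)=\beta(t)Q$ is exactly the substitution that turns \eqref{eq:dXh} into \eqref{eq:dXhOU}.
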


\begin{remark}
The parameter $\gamma$ controls the roughness of the noising in our model, with larger $\gamma$ corresponding to rougher noise. However, for the diffusion $X$ to remain well-defined, any such rough noise needs to be smoothened out by the drift operator $A = \frac12 C^{-\gamma}$.
The white noise case $\gamma = 1$ was pointed out as a possible noising process candidate in \cite{pidstrigach2024infinite}. As noted there, this leads to a strong smoothing property of the semigroup generated by $A$, under which high-frequency information of the data is lost almost immediately. In our experiments, we have indeed found that choices of $\gamma \approx 0$ tend to perform better, although further research on how the choice of $\gamma$ influences the models performance based on the structure of $\mu$ is needed. 

\end{remark}

\begin{remark}
The variance preserving property of the VP-SPDE in this context needs to be understood from the perspective of the time-reversed noising of the diffusion bridge $X^y_t$.
Given that $X_0 \sim \calN(0,C)$, it holds that $X^y_t$ is Gaussian with mean $U(T,t)y$ and covariance operator $C \left[ I - U^2(T,t)\right]$, where $U$ is the diagonal evolution family with eigenvalues 
\begin{align*}
    u(T,t)_j = \exp \left( - \frac12 c_j^{-\gamma} \int_t^T \beta(r) \, \df r \right).
\end{align*}
Hence, if the target $\mu$ has covariance operator $C$, integrating out $y \sim \mu$ in $X^y_t$ shows that the marginals $X^h_t$ of the forced process have constant covariance operator $C$.
\end{remark}

As a consequence of Proposition \ref{prop:vp-spde} and the Girsanov theorem, we get the following additional result.
\begin{corollary}
\label{cor:non-linear-vp-spde}
Let $F$ be a bounded and Fréchet differentiable function with bounded derivative and let $X$ be the mild solution to \eqref{eq:dX} with $A(t) = - \frac12\beta(t) C^{-\gamma}$ and $B(t) =  \sqrt{\beta(t) C^{(1-\gamma)}}$. Then $X$ satisfies Assumption \ref{ass:densities}.
\end{corollary}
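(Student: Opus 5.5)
Let $Y$ be the linear VP-SPDE \eqref{eq:vp-spde}, i.e.\ \eqref{eq:dX} with $F\equiv 0$, and $X$ the nonlinear solution with the same $A(t)$, $B(t)$ and initial law. The plan is to get Assumption \ref{ass:densities} for $X$ from the corresponding property of $Y$ (Proposition \ref{prop:vp-spde}) by a Girsanov change of measure. The first step is a Cameron--Martin-type condition: writing $B(t)=\sqrt{\beta(t)}\,C^{(1-\gamma)/2}$, we want $F(t,x)=B(t)\phi(t,x)$ with $\phi$ bounded. Bounded $F$ alone does not give this when $B$ is not surjective. So I would either interpret the hypothesis as $F$ taking values in $\im B$ with $|B^{-1}F|$ bounded, or work with $\phi=B^{-1}F$ directly. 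Then Novikov holds trivially, and $\frac{\df\bbX}{\df\mathbb{Y}}=\exp\big(\int_0^T\langle\phi,\df W\rangle-\tfrac12\int_0^T|\phi|^2\df t\big)$ is a bounded-moment density. Existence and uniqueness of the mild solution $X$ follow from Lipschitz continuity of $F$ (bounded derivative) by the standard fixed-point argument in \cite{daprato2014stochastic}.

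For the transition densities, fix $s<t$ and $x$. By Girsanov and Markovianity,
\[
\P(X_t\in A\mid X_s=x)=\E\bigl[\mathbf 1_A(Y_t)\,Z_{s,t}\mid Y_s=x\bigr],
\]
with $Z_{s,t}$ the Girsanov density on $[s,t]$. Disintegrating over the endpoint of the linear bridge gives
\[
p^X(s,x;t,y)=p^Y(s,x;t,y)\,\E\bigl[Z_{s,t}\mid Y_s=x,\,Y_t=y\bigr].
\]
Since $p^Y$ exists w.r.t.\ $\nu$ by Proposition \ref{prop:vp-spde}, this yields the densities. The marginal density $p^X_T$ is handled the same way, and $p^X_T>0$ because $p^Y_T>0$ (Gaussian equivalence) and $Z>0$.

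Next, $h$ must be well defined. We need $\int p^X(t,x;T,y)/p^X_T(y)\,\df\mu(y)<\infty$. Writing both densities via the bridge expectations, the ratio is the linear ratio $p^Y(t,x;T,y)/p^Y_T(y)$ times a quotient of conditional Girsanov expectations. Boundedness of $\phi$ should give two-sided bounds on these conditional expectations: Jensen gives a lower bound, and Hölder against Gaussian bridge moments gives an upper bound. On a fixed interval these bounds may depend on $y$ at most through Gaussian-tail quantities that remain integrable. Finiteness then follows from the linear case in Proposition \ref{prop:vp-spde}.

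The Fréchet differentiability of $h$ with polynomial growth of $\Df_x h$ is the main obstacle, since it requires differentiating $x\mapsto\E[Z_{t,T}\mid Y_t=x,Y_T=y]$. I would realise the bridge from $x$ as $Y^{x,y}_r=U(r,t)x+(\text{affine in }y)+\text{Gaussian noise}$. The $x$-dependence is then explicit and linear through the evolution family, so $\Df_x$ of the conditional expectation can be computed by differentiating inside $Z$. This uses the bounded derivative of $F$, hence of $\phi$, together with dominated convergence. The stochastic integral term is controlled by BDG, giving bounds polynomial in $|x|$, while $\Df_x p^Y$ is controlled by the linear result. If the pathwise approach fails because of the smoothing of $U(r,t)$ near $r=t$, the fallback is a Bismut--Elworthy--Li formula for the nonlinear semigroup $P^X_{t,T}$ applied to $y$-dependent functionals. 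That formula exploits the strong smoothing of $C^{-\gamma}$ together with the boundedness of $\Df F$.
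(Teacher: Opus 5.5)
Your route is the one the paper intends. The paper justifies the corollary only by the phrase \emph{as a consequence of Proposition~\ref{prop:vp-spde} and the Girsanov theorem}: view $X$ as the linear VP-SPDE $Y$ under a change of measure and transfer Assumption~\ref{ass:densities}.

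Your caveat about the hypothesis is correct, and it exposes a defect in the statement that the paper's one-liner skips. For $\gamma<1$, $B(t)$ is not onto, and for $\gamma<\tfrac12$ the corollary as written is false. Take $\beta\equiv 1$ and $F\equiv f\in H$ constant. Then $X_t\mid X_s=x\sim\calN(U(t,s)x+m,\,Q(t,s))$ with $\langle m,e_j\rangle=f_j(1-e^{-\alpha_j(t-s)})/\alpha_j\approx 2c_j^{\gamma}f_j$. Since $\calN(0,Q(t,s))\sim\nu$ and $U(t,s)x\in C^{1/2}(H)$, Cameron--Martin and Feldman--H\'ajek make this law equivalent or singular to $\nu$ according as $\sum_j c_j^{2\gamma-1}f_j^2$ is finite or infinite. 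Because $c_j^{2\gamma-1}\to\infty$, some $f\in H$ makes the sum infinite.

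So one really needs $F=B\phi$ with $\phi$ bounded, which is automatic only for $\gamma=1$ with $\beta$ bounded away from $0$. Under that hypothesis, your Girsanov step, the formula $p^X=p^Y\,\E[Z\mid Y_s=x,Y_t=y]$ and the positivity of $p^X_T$ are fine. Also $p^Y_T\equiv 1$, since $\mathrm{Law}(Y_T)=\nu$ for the VP-SPDE.

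The genuine gap is the finiteness of $h$. You write $p^X(t,x;T,y)/p^X_T(y)$ as the linear ratio times a quotient $q(t,x,y)$ of bridge-conditioned Girsanov expectations, then invoke the linear case. For a general $\mu\ll\nu$, that transfer essentially requires $q(t,x,\cdot)$ to be bounded uniformly in $y$. Since $\df\mu/\df\nu$ is only in $L^1(\nu)$, nothing is known about the tails of $y$ under $\mu$, so \emph{Gaussian-tail quantities that remain integrable} is not available.

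Separate Jensen and H\"older bounds on numerator and denominator do not give that uniformity:
\begin{itemize}
\item Already for constant $\phi$, $\langle\phi,W_T-W_t\rangle$ and $Y_T$ are jointly Gaussian given $Y_t=x$. Hence $\E[Z_{t,T}\mid Y_t=x,Y_T=y]$ is the exponential of an affine functional of $y$ with non-trivial linear part.
\item Your Jensen bound contains $\E[\int_t^T\langle\phi(r,Y_r),\df W_r\rangle\mid Y_t=x,Y_T=y]$, which is in general nonzero and unbounded in $y$. Under the bridge, $W$ acquires the drift $B^*\Df_x\log p^Y(r,Y_r;T,y)$, singular as $r\uparrow T$.
\end{itemize}
An argument along your lines must either exploit cancellation between numerator and denominator or assume more about $\mu$.

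A clean fix is to assume $\df\mu/\df\nu\in L^q(\nu)$ for some $q>1$ and avoid bridge conditioning. Write $h(t,x)=\E[g(Y_T)Z_{t,T}\mid Y_t=x]$ with $g=(\df\mu/\df\nu)/p^X_T$, and apply H\"older with three facts:
\begin{itemize}
\item the unconditional moments of $Z_{t,T}$;
\item the bound $\int (p^X_T)^{-k}\,\df\nu\le\E[Z_{0,T}^{-k}]$, which follows from conditional Jensen and $\mathrm{Law}(Y_T)=\nu$;
\item $p^Y(t,x;T,\cdot)\in L^r(\nu)$ for every $r<\infty$.
\end{itemize}
This gives $h(t,x)<\infty$ for every $x$.

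The Fr\'echet differentiability with polynomial growth is, as you say, the main obstacle, and your proposal only outlines it. Two corrections apply:
\begin{itemize}
\item $\phi=B^{-1}F$ does not inherit a bounded derivative from $F$, since $B^{-1}$ is unbounded, so this must be assumed as well.
\item For pathwise differentiation along the bridge, the delicate region is $r\uparrow T$, where the pinning drift blows up, not $r\downarrow t$.
\end{itemize}
The paper addresses none of these points, so your sketch is more careful than its justification, but the part concerning $h$ is not yet a proof.
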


In principle, Corollary \ref{cor:non-linear-vp-spde} allows us to construct generative diffusion models based on non-linear reference diffusions. However, due to the additional complexity of estimating $\calL(\theta)$ and the widespread success of noising-denoising frameworks based on linear SDEs, we focus our attention in our applications on the linear VP-SPDE case.
We give a detailed summary of the algorithmic methodologies to train and sample the forced generative VP-SPDE in the supplementary material, Section \ref{app:applications}.

\section{Bounds to the target measure}

Our approach consists of two parts:
\begin{enumerate}
\item Training the score estimator $s_{\theta}(t,x)$ based on the sample-based empirical estimator of the loss function \ref{eq:calLdef}, as explained in Remark \ref{remark:lossapprox}.
\item Simulation of the forced process \eqref{eq:dXhOU}. This starts by approximate initialisation at $\mu_0^h$, for example through an infinite-dimensional Langevin sampler such as \cite{cotter2013mcmc}, initiated for the VP-SPDE \eqref{eq:vp-spde} at $\mu_0=\mathcal{N}(0,C)$. After this, we use a semi-implicit Euler scheme for simulating \eqref{eq:dXhOU}.

\end{enumerate}
The following result characterises the Wasserstein error of the samples produced by this algorithm, and disentangles the different sources of error (numerical integrator, error in score estimation, and error in initialisation via Langevin steps).
\begin{proposition}\label{prop:w2bnd} Consider the training procedure outlined above, based on \eqref{eq:dXhOU}. Assume that $\beta(t)Qs(t,x) = \beta(t)Q\Df_x \log h(t, x)$ is Lipschitz on $H$ with time uniform constant, i.e.,
	\[
	\|\beta(t)Q(s(t, x) - s(t, y))\|_H \le L \|x - y\|_H,
	\]
    and that the same condition also holds for the fitted score $s_{\theta}(t,y)$ with the same constant $L$.
    
	Assume that the initial distribution used $\hat{\mu}^{h}_0$ (typically obtained via Langevin steps from $\mu_0$) approximates $\mu^{h}_0$ well in 2-Wasserstein distance,
$\mathcal{W}_2(\hat{\mu}^{h}_0,\mu^{h}_0)\le \varepsilon_{\mathrm{Init}}$.

    Suppose that the process \eqref{eq:dXhOU} is discretized using the semi-implicit Euler method defined in (3.55) of \cite{Kruse2014}. Suppose that the assumptions of Theorem 3.14 of \cite{Kruse2014} hold for the semi-implicit Euler discretization of \eqref{eq:dXhOU}. Then
	\begin{equation}
		\mathcal{W}_2  (\mu_{\mathrm{data}}, \mu_{\mathrm{sample}})
        \leq 
		\left(\varepsilon_{\mathrm{Init}}+
		\varepsilon_{\mathrm{Loss}}^{1/2} \right) \exp \left(L T \right)+\varepsilon_{\mathrm{Num}}.
        \label{equ:w2_distance_bound}
	\end{equation}
	Here $\varepsilon_{\mathrm{Loss}}$ denotes the value of the loss function \eqref{eq:calLdef} at the trained score $s_{\theta}(x,t)$, and $\varepsilon_{\mathrm{Num}}$ is the error due to the numerical
	integration procedure ($\Delta t$ is time discretization, $h_{\mathrm{space}}$ is spatial discretization),
	\begin{eqnarray*}
		\varepsilon_{\mathrm{Num}} & = & O (\Delta t)^{1/2}+O(h_{\mathrm{space}}). 
	\end{eqnarray*}
\end{proposition}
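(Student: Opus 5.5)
We split the error with the triangle inequality into a continuous-time part and a discretisation part. Let $X^h$ solve \eqref{eq:dXhOU} from $X^h_0\sim\mu^h_0$; by the preceding corollary, $X^h_T\sim\mu=\mu_{\mathrm{data}}$. Let $\hat X^\theta$ solve the same equation with $s$ replaced by $s_\theta$ and $\hat X^\theta_0\sim\hat\mu^h_0$. Let $\hat X^{\theta,\Delta}$ be its semi-implicit Euler discretisation, whose terminal law is $\mu_{\mathrm{sample}}$. Then
\[
\mathcal W_2(\mu_{\mathrm{data}},\mu_{\mathrm{sample}})\le \mathcal W_2(\mathrm{Law}(X^h_T),\mathrm{Law}(\hat X^\theta_T))+\mathcal W_2(\mathrm{Law}(\hat X^\theta_T),\mathrm{Law}(\hat X^{\theta,\Delta}_T)).
\]
For the second term we bound $\mathcal W_2$ by the strong $L^2$ error of the scheme. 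Theorem 3.14 of \cite{Kruse2014} gives this as $O((\Delta t)^{1/2})+O(h_{\mathrm{space}})$, which is $\varepsilon_{\mathrm{Num}}$.

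For the first term we use a synchronous coupling. Pick $(X^h_0,\hat X^\theta_0)$ as an optimal $\mathcal W_2$ coupling of $(\mu^h_0,\hat\mu^h_0)$, and drive both equations with the same $\P^h$-cylindrical Wiener process $W^h$. Write both solutions in mild form with the evolution family $U(t,s)$ generated by $-\beta(t)A$. The stochastic convolutions then cancel, and the difference $e_t=X^h_t-\hat X^\theta_t$ satisfies
\[
e_t=U(t,0)e_0+\int_0^t U(t,r)\,\beta(r)Q\big(s(r,X^h_r)-s_\theta(r,\hat X^\theta_r)\big)\,\df r .
\]
The operators $A=\frac12C^{-\gamma}$ are positive and $U(t,r)$ is diagonal with entries in $(0,1]$, so $\|U(t,r)\|\le1$. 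We add and subtract $s_\theta(r,X^h_r)$ and use the Lipschitz assumption on $\beta Q s_\theta$. This gives
\[
\|e_t\|\le\|e_0\|+L\int_0^t\|e_r\|\,\df r+\int_0^t\|\beta(r)Q(s-s_\theta)(r,X^h_r)\|\,\df r .
\]
Gronwall's inequality then yields $\|e_T\|\le\big(\|e_0\|+\int_0^T\|\beta Q(s-s_\theta)(r,X^h_r)\|\,\df r\big)e^{LT}$. We take $L^2(\P^h)$ norms and use Minkowski's inequality. The first contribution gives $\varepsilon_{\mathrm{Init}}e^{LT}$.

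The main obstacle is to identify the remaining term with $\varepsilon_{\mathrm{Loss}}^{1/2}$. First, $\beta Q=(BB^*)$ and $\|BB^*v\|\le\|B\|\,|v|_{B^*}$. Next, apply Cauchy--Schwarz in time, absorbing $\sqrt T\|B\|$ into the constant (or into the normalisation of $\beta$), to bound the term by $\big(\int_0^T\E^h|s_\theta-s|^2_{B^*_t}\,\df t\big)^{1/2}$. It remains to compare this with the loss \eqref{eq:calLdef}, which is written with the conditional score $\Df_x\log p(t,X_t;T,X_T)$. Under $\P^h$ one has $s(t,X_t)=\E^h[\Df_x\log p(t,X_t;T,X_T)\mid\calF_t]$, by differentiating \eqref{eq:def-h} under the integral using the polynomial growth in Assumption \ref{ass:densities}, together with the disintegration from Remark \ref{rem:hlangevin}. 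The Pythagorean decomposition in Proposition \ref{prop:loss} then gives $\calL(\theta)=\int_0^T\E^h|s_\theta-s|^2_{B_t^*}\,\df t+\mathrm{const}\ge\int_0^T\E^h|s_\theta-s|^2_{B_t^*}\,\df t$. Hence this term is at most $\varepsilon_{\mathrm{Loss}}^{1/2}$, and collecting the three bounds gives \eqref{equ:w2_distance_bound}.
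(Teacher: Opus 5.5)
Your proposal matches the paper's proof. Both use a synchronous coupling in mild form with $\|U(t,s)\|\le 1$, Gronwall with the score error evaluated along $X^h$ (so that it becomes an expectation under $\P^h$), a comparison with the loss, and then Theorem 3.14 of \cite{Kruse2014} together with the triangle inequality. You differ only in two minor respects: you handle the initialisation and score errors in one coupling via Minkowski rather than the paper's two successive couplings, and you justify replacing $\Df_x\log p(t,X_t;T,X_T)$ by $s(t,X_t)$ through the conditional-expectation (Pythagorean) identity, which the paper only asserts.

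There is one caveat, which the paper shares: its proof produces $\sqrt{C\calL(\theta)}\,e^{LT}$ and then silently drops $C$. In the same way, your argument actually gives $\varepsilon_{\mathrm{Loss}}^{1/2}$ multiplied by $\bigl(\|Q\|\int_0^T\beta(r)\,\df r\bigr)^{1/2}$. The constant-free form of \eqref{equ:w2_distance_bound} therefore holds only when that factor is at most one.
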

\begin{remark}
In our setting, $h_{\mathrm{space}}\in (0,1]$ depends on the number of dimensions that we implement in the discretization of the infinite dimensional operator $A$, and it is defined precisely in Section 3.2 of \cite{Kruse2014}. Under the Assumption 3.3 of \cite{Kruse2014}, which is also assumed in our proposition, $h_{\mathrm{space}}$ tends to 0 as we increase the dimension of the discretization. 
\end{remark}
\begin{remark}Compared with Theorem 14 of \cite{pidstrigach2024infinite}, our result does not include the $W_2(\mu_{data},\mathcal{N}(0,C)) \exp(-T/2+L^2T/4)$ term, which grows with $T$ and  may be significant when the data distribution is highly non-Gaussian. Instead of this, we have $\epsilon_{\mathrm{Init}} \exp(LT)$, which may be reduced by doing additional Langevin steps to better approximate $\mu_{0}^h$. In our numerical simulations, $\mu_0^h$ was close to $\mu_0$, and we could keep the number of Langevin steps small.
\end{remark}

\subsection{The case that $\supp(\mu) \subset H_{\nu}$}
\label{sec:bounds}
Our methodology is derived under the assumption that $\mu \ll \nu$ for some Gaussian reference measure $\nu = \calN(0,C)$. 
In many applications, one instead targets a measure $\mu$ which models data that is inherently smoother than $\nu$ and is supported on the Cameron-Martin space $H_{\nu} = C^{\frac12}(H)$ of $\nu$. This corresponds to situation (B) introduced in Section \ref{subsec:challenges-approach}.

In that case, the proposed approach remains valid after an arbitrarily small noising of the data through the SDE
\begin{align}
\begin{split}
\label{eq:dYnoising}
\df Y_t &= - \tfrac12 C^{-\gamma} Y_t \df t + \sqrt{C^{1-\gamma}} \df W_t, \quad t \in [0,\varepsilon], \\
Y_0 &\sim \mu,
\end{split}
\end{align}
and by replacing the original target measure $\mu$ with the slightly regularized measure $\mu_{\varepsilon}$, defined as the law of $Y_{\varepsilon}.$
Heuristically speaking, this is equivalent to the common practice of running reverse-time denoising diffusions up to a time $\varepsilon > 0$.

As the following Lemma shows, this recovers Assumption \ref{ass:reference} for any arbitrary $\varepsilon > 0$. 
\begin{lemma}
\label{lem:dYnoising}
Let $Y$ be the mild solution to \eqref{eq:dYnoising} and let $\supp(\mu) \subset H_{\nu}$. Then, $\mu_{\varepsilon} \ll \nu$ for any $\varepsilon > 0$.
\end{lemma}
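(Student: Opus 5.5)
The plan is to write $\mu_\varepsilon$ as a mixture of Gaussian measures, show that each component is equivalent to $\nu$ by the Feldman--Hájek theorem, and then integrate over $\mu$.

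\textbf{Step 1: transition law of $Y$.} Since \eqref{eq:dYnoising} is a linear SDE with $C$ trace class and diagonal operators, its mild solution started at $Y_0=y$ is Gaussian. Work in the eigenbasis $(e_j)$ of $C$ with eigenvalues $c_j$. The evolution family is $U_\varepsilon=e^{-\frac{\varepsilon}{2}C^{-\gamma}}$, with eigenvalues $u_j=\exp(-\tfrac{\varepsilon}{2}c_j^{-\gamma})$. The covariance is $\int_0^\varepsilon U_r C^{1-\gamma}U_r\,\df r$, which evaluates coordinatewise to
\[
c_j^{1-\gamma}\,\frac{1-u_j^2}{c_j^{-\gamma}} = c_j(1-u_j^2).
\]
Hence $Y_\varepsilon\mid Y_0=y\sim\calN\bigl(U_\varepsilon y,\ C(I-U_\varepsilon^2)\bigr)=:\nu_y$. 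This also gives existence and uniqueness of the mild solution, as in Proposition~\ref{prop:vp-spde}. Then $\mu_\varepsilon(A)=\int_H \nu_y(A)\,\df\mu(y)$.

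\textbf{Step 2: covariance condition.} Set $C_\varepsilon:=C(I-U_\varepsilon^2)$. Because $u_j\to 0$ as $c_j\to 0$, the operator $C_\varepsilon=C^{1/2}(I-U_\varepsilon^2)C^{1/2}$ satisfies the Feldman--Hájek requirement that $C^{-1/2}C_\varepsilon C^{-1/2}-I=-U_\varepsilon^2$ be Hilbert--Schmidt. This holds because
\[
\sum_j u_j^4=\sum_j \exp(-2\varepsilon c_j^{-\gamma})<\infty.
\]
To justify the summability, use that $c_j\le \tr(C)/j$ for an ordered summable sequence, so $c_j^{-\gamma}\ge (j/\tr C)^{\gamma}$ and the terms decay like a stretched exponential. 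In addition $I-U_\varepsilon^2$ is bounded with bounded inverse, since $u_j\le u_{\max}<1$. So $\calN(0,C_\varepsilon)\sim\nu$, and both share the Cameron--Martin space $H_\nu$.

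\textbf{Step 3: mean condition.} Next, $\nu_y=\calN(U_\varepsilon y,C_\varepsilon)$ is equivalent to $\calN(0,C_\varepsilon)$ provided $U_\varepsilon y\in C_\varepsilon^{1/2}(H)=H_\nu$. This is where the hypothesis $\supp(\mu)\subset H_\nu$ enters. For $y\in H_\nu$, $U_\varepsilon y\in H_\nu$ since $U_\varepsilon$ commutes with $C$ and is a contraction. In fact $U_\varepsilon y\in H_\nu$ for every $y\in H$, by the strong smoothing $u_j c_j^{-1/2}\to0$, so the support assumption is only needed in a weak form. By Cameron--Martin, $\nu_y\ll\nu$ for every $y$ in the support.

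\textbf{Step 4: conclusion.} If $\nu(A)=0$, then $\nu_y(A)=0$ for $\mu$-a.e.\ $y$, and therefore $\mu_\varepsilon(A)=0$. Measurability of $y\mapsto\nu_y(A)$ follows from the Markov kernel structure of the mild solution.

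The main obstacle is Step 2: verifying the Hilbert--Schmidt condition cleanly, in particular the decay estimate on $c_j$, and checking that the covariance perturbation has exactly the Feldman--Hájek form with $I-U_\varepsilon^2$ invertible.
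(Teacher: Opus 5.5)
Your proof is correct and follows the same route as the paper. You identify $Y_\varepsilon \mid Y_0 = y$ as $\calN\bigl(U_\varepsilon y, C(I-U_\varepsilon^2)\bigr)$, use Feldman--H\'ajek to get $\calN\bigl(0,C(I-U_\varepsilon^2)\bigr)\sim\nu$ with common Cameron--Martin space $H_\nu$, apply Cameron--Martin to the mean, and integrate over $y\sim\mu$; your summability argument via $c_j\le \tr(C)/j$ fills in a step the paper only asserts. Your side remark is also right: since $u_j c_j^{-1/2}\to 0$, the operator $C^{-1/2}U_\varepsilon$ is bounded (the analogue of the paper's bound on $Q(t,s)^{-1/2}U(t,s)$ in Proposition~\ref{prop:vp-spde}), so the hypothesis $\supp(\mu)\subset H_\nu$ is not actually needed.
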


\section{Applications}
\label{sec:applications}

\subsection{Gaussian mixture}
\label{subsec:gaussian-mixture}
To validate our methodology on a tractable example, we consider synthetic data sampled from the Gaussian mixture 
\begin{align}
\label{eq:mixture_gaussian}
    \mu = \alpha \calN(u, C) + (1 - \alpha) \calN(-u,C)
\end{align}
on $H = L^2([0,1])$. We take a Matérn covariance operator $C$, which diagonalises with respect to the basis $\{ \xi \mapsto \sin(j \xi) : j \geq 1\}$ on $H$ with eigenvalues
\begin{align*}
    c_j = \sigma_0^2 \left( \rho_0^{-2} + (2 \pi j)^2 \right)^{-1/2 + \nu_0},
\end{align*}
see e.g. \cite{borovitskiy2020matern}.
We set $\sigma_0^2 = 10^{3}, \rho_0 = 5 \times 10^{-3}, \nu_0 = 1$ and $\alpha = 0.1$. The mean $u(\xi) = 2 \, \xi^{1.5}(\pi - \xi)^{1.5}$ is chosen such that $u \in H_{\nu}$ for $\nu = \calN(0,C)$. From the Cameron-Martin theorem it then follows that $\mu \ll \nu.$
Samples of $\mu$ are displayed in Figure \ref{fig:mixture_gaussian} (Appendix \ref{app:applications}).

In this setup, the true steering function $s(t,x)$ can be derived in closed form based on standard Gaussian computations. 
We examine our models performance under three distinct aspects; (i) its robustness to state dimension $D$, (ii) its robustness to small noising time $T$ as well as (iii) the impact of time-dependent noise schedules $\beta(t)$. 
For this, we implement the \textit{forced} VP-SDE with $\gamma =1$ and once with \textit{constant noise (FCN)} and once with a linear \textit{noise schedule (FNS)}, targetting \eqref{eq:mixture_gaussian} for various approximation dimensions $D$ and two noising times $T = 1$ and $T = 0.2$. 
For the approximation scheme, the state space $H$ is discretised in the spectral domain defined by the eigenbasis of the covariance operator $C$.
As a baseline comparison, we implement the \textit{noising-denoising model (ND)} as introduced in \cite{song2021scorebased} for finite-dimensional models as well as the \textit{Hilbert space noising-denoising model (HND)} defined in \cite{pidstrigach2024infinite}. All models are implemented with the same neural network architecture, optimisation procedure and sampling step size.

Figure \ref{fig:distances_gaussian} (Appendix \ref{app:applications}) shows the average distance $\E[\|s_{\theta}(t,X_t) - s(t,X_t)\|]$ between the score/steering approximation $s_{\theta}(t,x)$ and true score/steering term $s(t,x)$ during training for various state dimensions $D$ and $T=1$.
As expected, the models constructed to align with the infinite-dimensional geometry of the target measure perform independently of the dimension $D$, whereas the Euclidean diffusion model diverges as $D$ increases.

Samples obtained from the diffusion models for $D = 500$ are shown in Figure \ref{fig:samples_gaussian} (Appendix \ref{app:applications}) and to be compared to the true data samples in Figure \ref{fig:mixture_gaussian} (Appendix \ref{app:applications}). The divergence of the ND model in the first row is apparent. Likewise, so is the collapse of the HND model in the limited noising time $T = 0.2$. In contrast, both forced models FCN and FNS stay robust to the decrease in time horizon. Quantitatively, this is supported by the averaged sliced Wasserstein distances displayed in Table \ref{tab:wasserstein}.

We conclude that; (i) only the finite-dimensional method diverges as $D$ increases, (ii) only our forced diffusion models FCN and FNS remain robust to a decrease in noising time $T$ and (iii) the forced noise scheduled diffusion (FNS) consistently outperforms the forced constant noise model (FCN), even in this simple synthetic example.

\begin{table}[t]
\centering
\caption{Averaged sliced Wasserstein distances between the true target and samples generated by the various diffusion models in the case $D = 500$, based on $N= 5000$ samples.}
\label{tab:wasserstein}
\setlength{\tabcolsep}{4pt}
\begin{tabular}{lcccc}
\toprule
 & ND & HND & FCN (ours) & FNS (ours) \\
\midrule
$T = 1$   & 0.274 & 0.117 & 0.258 & 0.055 \\
$T = 0.2$ & 0.692 & 0.51  & 0.123 & \textbf{0.053} \\
\bottomrule
\end{tabular}
\end{table}

\subsection{MNIST-SDF}
\label{subsec:MNIST-SDF}
The performance of the proposed method is demonstrated on the MNIST-SDF dataset \cite{sitzmann2020metasdf}, derived from MNIST samples by applying a signed distance transform to the pixel based images. This transformation renders the data inherently function space valued and hence suitable for the setup at hand.

We train the VP-SPDE model to target the data measure at a resolution of $64 \times 64$ pixels, upsampled from the original $32 \times 32$ resolution, and compare our results with those reported for a number of generative models in \cite{lim2025score}.
For the sake of comparability, we adopt the network architecture proposed in their work, which consists of a $2D$ Fourier Neural Operator-based U-net with three resolution levels and spectral residual blocks with time embeddings and positional encoding. For details, see \cite{lim2025score}, Appendix J.

As our reference measure $\calN(0,C)$, we choose $C$ to be diagonal in the discrete cosine space, and let the first $32 \times 32$ entries correspond to the dataset's empirical marginal variances of each mode. For the remaining modes $j > 32$, we choose the smallest variance, scaled by a decay rate proportional to $j^{-(1+\varepsilon)}$ to ensure that $C$ is trace-class in the limit. In the VP-SPDE \eqref{eq:vp-spde}, we set $\gamma = 0.05$ and $\beta$ to follow a cosine noising schedule \cite{nichol2021improved}. 

Figure \ref{fig:mnist-mask} shows a comparison between true MNIST-SDF samples, upsampled to $64 \times 64$-resolution, and those generated by the VP-SPDE model. Figures are displayed after applying a binary mask, whereas Figure \ref{fig:mnist-sdf} (Appendix \ref{app:applications}) shows a comparison of the samples in SDF space. We additionally provide in Table \ref{tab:fid} the Fréchet Inception Distance (FID) obtained by our model and put it in context with the results obtained by \cite{lim2025score} for the Denoising-Diffusion-Operators (DDO) \cite{lim2025score}, GANO \cite{rahman2022generative} and Multilevel Diffusion model (MultiDiff) \cite{hagemann2025multilevel}. While the FID score obtained by our method is competitive with the one obtained for DDO, it does not fully match its reported performance. 
We suspect that this is due to a lack of extensive hyperparameter tuning during training, necessitated by limited computational resources.

\begin{figure}[t]
    \centering
    \begin{subfigure}[t]{0.48\columnwidth}
        \centering
        \includegraphics[width=\textwidth]{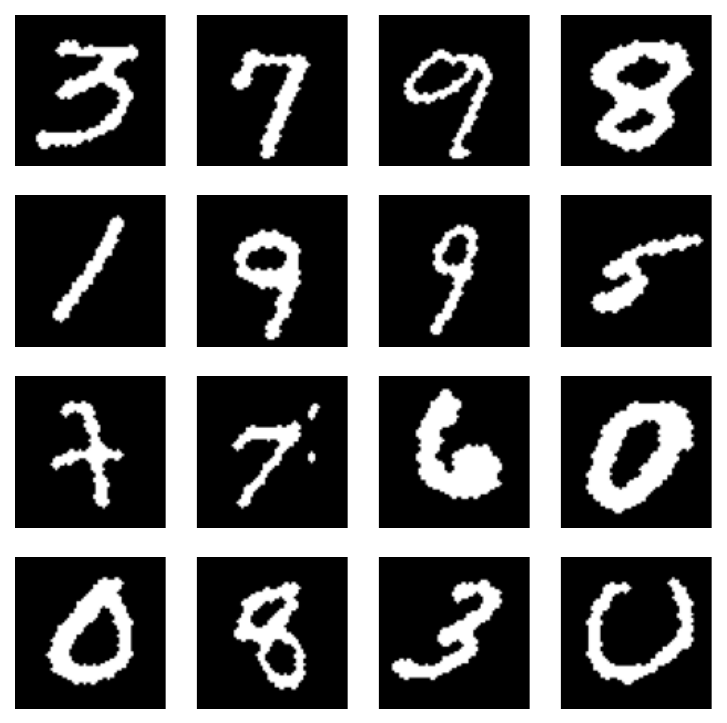}
        \caption{}
    \end{subfigure}\hfill
    \begin{subfigure}[t]{0.48\columnwidth}
        \centering
        \includegraphics[width=\textwidth]{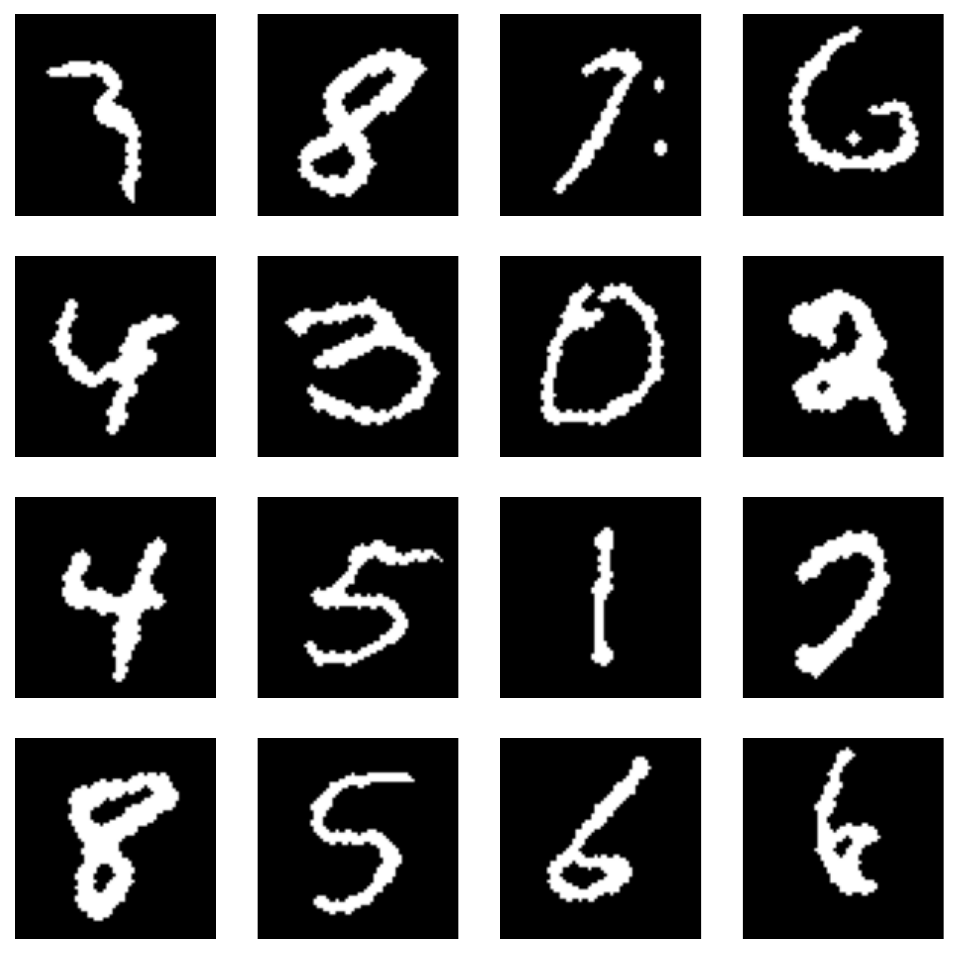}
        \caption{}
    \end{subfigure}

    \caption{True and generated \textbf{MNIST-SDF} samples (masked) at $64 \times 64$ resolution. \textbf{(a):} True samples, upsampled to $64 \times 64$. \textbf{(b):} Generated samples returned by the VP-SPDE model.}
    \label{fig:mnist-mask}
\end{figure}

\begin{table}[b]
    \centering
    \caption{FID scores on MNIST-SDF. The values marked with$\,^\star$ have been reported in \cite{lim2025score}.}
    \label{tab:fid}
    \begin{tabular}{lcccc}
        \toprule
        & GANO & MultiDiff & DDO & VP-SPDE (ours) \\
        \midrule
        FID  & 3.41$^{\star}$ & 35.09$^{\star}$ & \textbf{2.74}$^{\star}$ & 3.12 \\
        \bottomrule
    \end{tabular}
\end{table}

\subsection{Bayesian inverse problem: seismic imaging}
\label{subsec:seismic-imaging}
Consider a Bayesian inverse problem of inferring an unknown random variable $X$, given some prior measure $\mu_0$ and observations $Y$ generated by a forward model $Y = G(X) + \eta$. Here $G$ denotes an observation operator and $\eta$ observational noise. 
This setting defines a Bayesian inverse problem, in which the objective is to sample from the posterior distribution of $X$ given an observation $Y = y$. 

Our methodology readily generalises to this setting by learning a conditional $h$-transform, defined by substituting the target $\mu$ in \eqref{eq:def-h} by the posterior of $X \mid Y=y.$
The functions $h(t,x,y)$ and $s(t,x,y)$ then depend on the observed variable $y$, which is to be encoded in the approximation $s_{\theta}(t,x,y)$. Given samples $(x_1,y_1), \ldots, (x_n,y_n)$ of the joint distribution of $(X,Y)$, training of $s_{\theta}(t,x,y)$ proceeds as before by minimising the loss \eqref{eq:calLdef}.
After training, at inference time, $y$ serves as an input parameter to generate samples of $X \mid Y =y$ by solving the forced diffusion model \eqref{eq:dXh}.
This is similar to the approach of \cite{pidstrigach2024infinite, baldassari2023conditional} of learning conditional time reversals. 

\begin{figure}[t]
  \centering

  \begin{subfigure}[t]{0.495\linewidth}
    \centering
    \includegraphics[width=\linewidth]{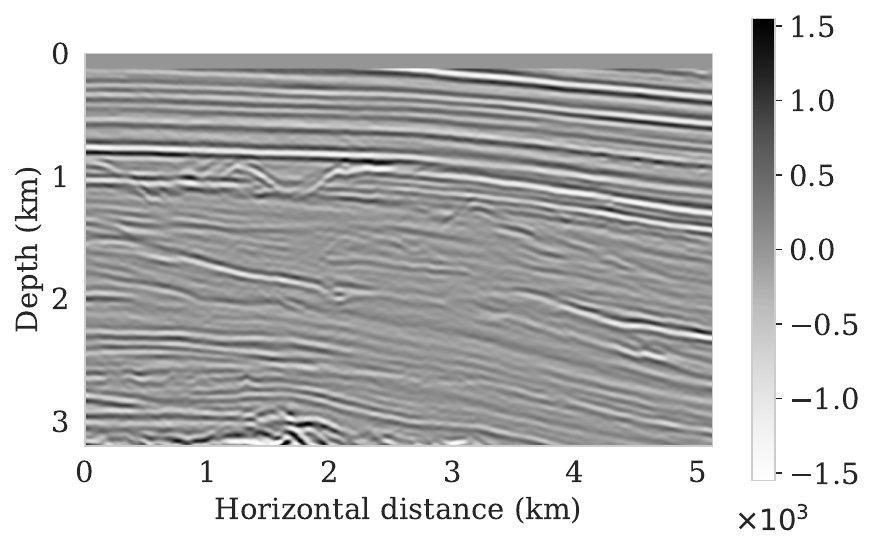}
    \caption{\vspace{-0.2em}}
  \end{subfigure}\hfill
  \begin{subfigure}[t]{0.495\linewidth}
    \centering
    \includegraphics[width=\linewidth]{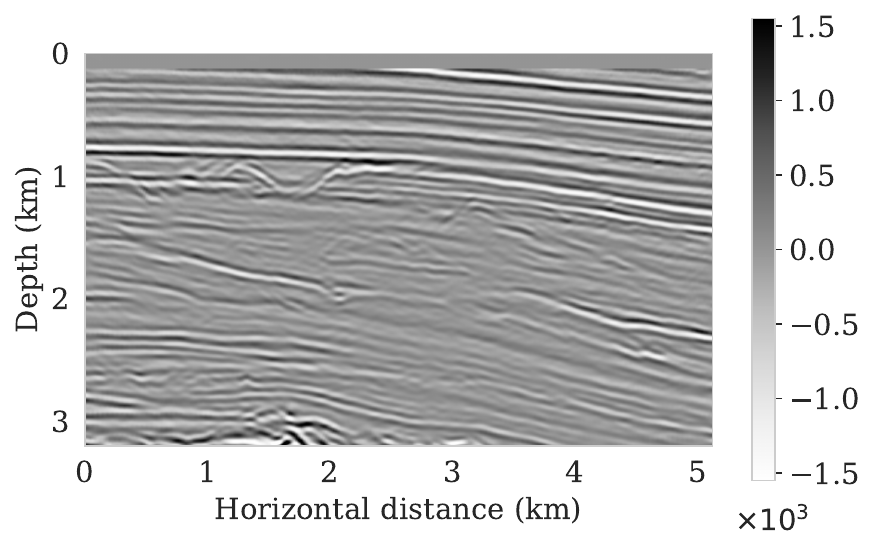}
    \caption{\vspace{-0.2em}}

  \end{subfigure}

\vspace{0.1em}

  \begin{subfigure}[t]{0.495\linewidth}
    \centering
    \includegraphics[width=\linewidth]{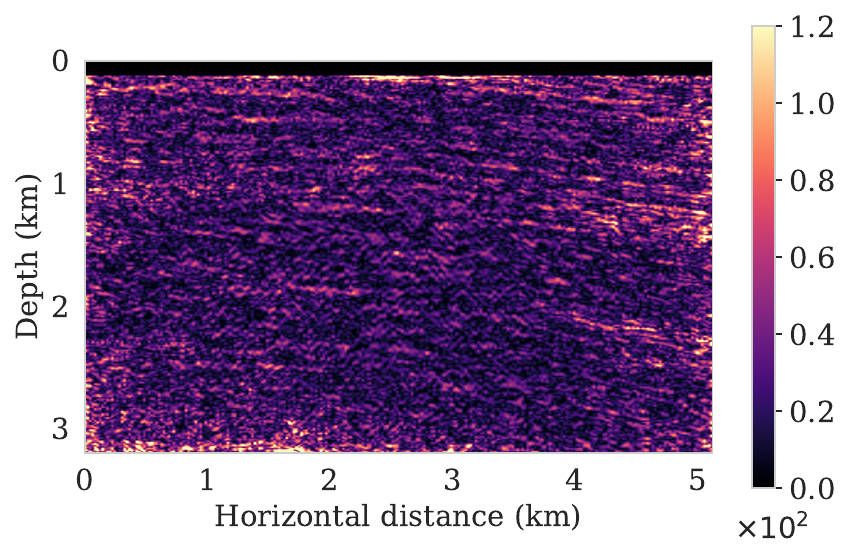}
    \caption{\vspace{-0.2em}}

  \end{subfigure}\hfill
  \begin{subfigure}[t]{0.495\linewidth}
    \centering
    \includegraphics[width=\linewidth]{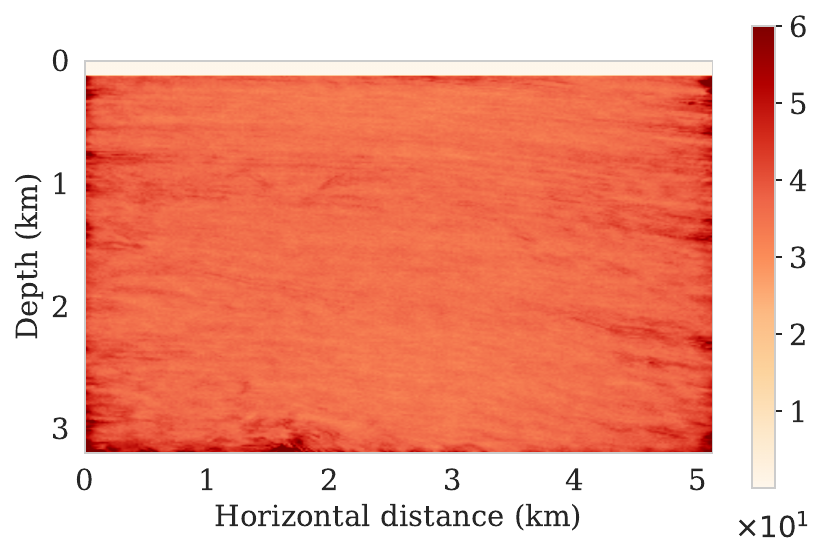}
    \caption{\vspace{-0.2em}}

  \end{subfigure}

  \caption{
  \textbf{Seismic imaging estimate} generated by the forced VP-SPDE.
  \textbf{(a)} Ground truth seismic image $x$.
  \textbf{(b)} Estimated posterior mean $\hat{x}$.
  \textbf{(c)} Absolute error between ground truth and posterior mean.
  \textbf{(d)} Estimated marginal posterior standard deviation.
  }
  \label{fig:seismic-results1}
\end{figure}

We follow an inverse problem laid out in \cite{baldassari2023conditional} of estimating the high-frequency imaging components $X$ of the Earth's subsurface squared-slowness model (Figure \ref{fig:seismic-model} (a), Appendix \ref{app:applications}), conditioned on a known, smooth long-wavelength background model (Figure \ref{fig:seismic-model} (b), Appendix \ref{app:applications}) and surface seismic observations.
Here, following \cite{orozco2023adjoint}, the high dimensionality of the surface data is reduced by applying the adjoint Born scattering operator to the measurements. The resulting transformed representation $Y$ (Figure \ref{fig:seismic-model} (c), Appendix \ref{app:applications}) is then used as conditioning variable in the generative diffusion model. 
A linearisation of the Born operator turns the problem into an inverse problem with linear operator $G$, albeit with non-trivial null space, see \cite{baldassari2023conditional} for details.

We train the VP-SPDE with a linear noise schedule and white noise covariance operator.
To evaluate the trained model, the posterior mean and marginal standard deviations are estimated using $1024$ generated samples drawn from $X \mid Y = y$, where $y$ is a measurement not seen during training.
Figure \ref{fig:seismic-results1} shows the true target seismic image $x$, estimated posterior mean $\hat{x}$ and marginal standard deviation as well as the absolute error between the target and posterior mean. 
For a baseline comparison, we additionally implement the noising-denoising model introduced in \cite{baldassari2023conditional}, whose results are presented in Fig. \ref{fig:seismic-results2} (Appendix \ref{app:applications}).

Similar to the results reported in \cite{baldassari2023conditional}, the absolute error and marginal standard deviations obtained by either model are largest in areas of high geological complexity as well as near structural boundaries. However, in comparison to the noising-denoising baseline, our model exhibits improved generalisation and more accurately recovers complex geological structures.

This is quantitatively supported by the statistics in Table \ref{tab:seismic-metrics}, with the VP-SPDE achieving lower relative and mean absolute errors than the baseline. Conversely, the baseline yields a lower average marginal standard deviation than the VP-SPDE. Note that the baseline's variance is notably small in well-estimated regions but increases significantly in areas of high reconstruction error. The baseline's average posterior standard deviation (24.27) is significantly smaller than it's average reconstruction error (37.8), indicating a problematic fit on the data.    In contrast, our model exhibits a more uniform marginal variance across the domain, which, given the lower reconstruction error, suggests a more consistent representation of posterior uncertainty. 
We hypothesize that this is due to the forced VP-SPDE inducing a more efficient diffusion toward the target distribution, potentially reducing bias arising from the finite-time horizon and discretization effects.

\begin{table}[b]
\centering
\caption{Quantitative results for the seismic imaging inverse problem, based on $N=1024$ generated samples.}
\label{tab:seismic-metrics}
\setlength{\tabcolsep}{3pt}
\begin{tabular}{lccc}
\toprule
Method & \shortstack{Relative  error \\$\|\hat{x} - x\|/\|x\|$}  & \shortstack{Avg. \\ abs. error} &\shortstack{Avg. \\ poster. std.} \\
\midrule
VP-SPDE (ours) & \textbf{0.09} & \textbf{26.82} & 35.28 \\
Noising--denoising & 0.17 & 37.80 & 24.27 \\
\bottomrule
\end{tabular}
\end{table}

\section{Conclusion} 
In this paper, we have proposed a new method to construct infinite-dimensional generative diffusions by forcing SPDEs towards a target measure via Doob's $h$-transform. We have developed a rigorous mathematical theory showing the well-definedness of this method in infinite-dimensional spaces under appropriate assumptions, and also proven bounds on the Wasserstein-2 distance of the generated samples from the true distribution, showing asymptotic consistency. We can still use a computationally tractable score-matching loss function. Our method has the key advantage that the simulation time $T$ of the process can be kept relatively short, which would introduce significant bias in the time-reversed diffusion approaches. We illustrate the performance of our method on three examples (Gaussian-mixtures, MNIST-SDF and seismic imaging). It performs competitively with state-of-the-art alternatives both in terms of fidelity, as well as computational cost for training and inference. 

In future work, it would be interesting to study the effect of the roughness parameter $\gamma$ and the choice of covariance operator $C$, as well as developing adaptive methods to choose the optimal time length $T$, step size $\Delta t$ and number of Langevin steps at initialisation. We believe that the performance of forced generative SPDE models can still improve significantly with further optimization of these parameters.

\section*{Impact Statement}
The primary objective of this work is to advance the theoretical foundations of generative diffusion models in infinite-dimensional settings. 
The results obtained in this paper may ultimately support applications in generative modeling and scientific computing.
We do not foresee immediate negative societal or ethical consequences arising from this work beyond those already well established for generative modeling research.

\section*{Acknowledgements}
This research is supported by the Ministry of Education, Singapore, under its Academic Research Fund Tier 1 (RS19/25): Efficient sampling of infinite-dimensional conditioned diffusions.
We would like to thank Giulio Franzese for insightful discussions.

\bibliography{references}
\bibliographystyle{icml2026}


\newpage
\appendix
\onecolumn
\section{Related Work}

\subsection{Generative models}
Advancing generative diffusion models purely from the noising-denoising framework to diffusion bridges and optimal transport formulations has seen plenty of interest in the past years.
In the foundational work by \cite{debortoli2021diffusion}, a generative diffusion is phrased as the solution to the (Diffusion) Schr{\"o}dinger bridge (DSB) problem, enabling diffusion models to interpolate between arbitrary measures instead of relying on Gaussian priors. The DSB problem is solved by iteratively updating forward and backward diffusion dynamics to alternately match the initial and terminal distributions, hence relying on the noising-denoising score matching framework. In \cite{shi2023diffusion, tong2023simulation}, alternative, simulation-free DSB matching objectives have been proposed to circumvent the heavy computational burden of simulating full diffusion paths forward and backward in time. In \cite{chen2021likelihood}, the DSB problem has been studied under the framework of stochastic optimal control (SOC), and the methodology presented there avoids iterative projections by directly targeting the underlying Hamilton-Jacobi-Bellman (HJB) equations via Feynman-Kac representations of the optimal control policy.

In parallel to the DSB formulation, alternative diffusion model frameworks have emerged that circumvent the reliance on time reversals entirely. In \cite{peluchetti2023non} the diffusion bridge mixture transport (DBMT) is introduced, a non-denoising framework that constructs generative models forward in time using a mixture of diffusion bridges. 
Compared to the DSB problem, it solves a simpler problem, but circumvents the need for an iterative training procedure of denoising diffusions and expensive forward-backward path simulations. 
In a complementary approach, \cite{liu2022let} utilizes diffusion bridges to unify the generative learning process under a single KL divergence objective, extending latent variable diffusions to non-Euclidean and discrete spaces. Furthermore, \cite{peluchetti2023diffusion} establishes that iterating the DBMT procedure recovers the exact Schr{\"o}dinger bridge, achieving optimal transport without time reversals.
In \cite{christensen2025beyond}, a unified framework for generative modeling is developed using Doob's h-transforms, generalizing the bridge construction beyond fixed time horizons. 

To the best of our knowledge, the only comparable result that extends the rich literature on generative diffusion bridges to the infinite-dimensional setting is given in \cite{park2024stochastic}. Here, Ornstein-Uhlenbeck bridges - a linear diffusion pinned on both an initial and terminal state - are derived from a SOC perspective and subsequently used to extend the DBMT approach of \cite{peluchetti2023non}. The methodology is restricted to time autonomous, diagonalisable diffusions. Moreover, no existence results for the diffusion bridge mixture are given. 

In contrast, our work provides an alternative probabilistic framework that defines a generative diffusion, forced to sample exactly from a target distribution $\mu$, based on one unified change of measure $\P^h$. This change of measure perspective is particularly useful in the infinite-dimensional setting, as it can be understood as a Girsanov-type transformation under which existence of mild solutions to SPDEs are well understood in the existing literature \cite{daprato2014stochastic, piepersethmacher2025class}. In practice, this offers a wider flexibility in the choice of the reference diffusion $X$ compared to \cite{peluchetti2023non, park2024stochastic}, allowing the forcing of non-autonomous or even non-linear diffusions and SPDEs with unbounded drift operators. 
As an additional consequence, existence results of the forced diffusion as a mild solution to an infinite-dimensional equation can be stated under verifiable assumptions on the target $\mu$ and the reference process $X$. This allows us to obtain $2$-Wasserstein bounds between the target measure and the learned measure. 

Our current framework prioritizes the construction of a generative diffusion model based on a learnable change of measure. The literature on finite-dimensional models suggests a rich connection between our approach and the SOC and DSB perspectives. This represents a promising future research direction, in particular in solving an infinite-dimensional DSB without time-reversals as well as connecting our $h$-function of choice to the solution of a stochastic optimal control problem.

\subsection{Diffusion bridges}
There has also been a recent interest in sampling \textit{infinite-dimensional diffusion bridges} \cite{baker2024conditioning, yang2024infinite, piepersethmacher2025simulation}.
While the terminology implies similarities, simulating diffusion bridges and learning generative diffusion models are fundamentally different problems. 
In the former, a \textit{fixed}, typically non-linear diffusion process of interest is conditioned to hit an observed terminal state at some time $T > 0$.
In the case of linear dynamics, the diffusion bridge can be derived analytically by what is known as the Brownian ($A = 0, F = 0$) or Ornstein-Uhlenbeck bridge ($F = 0$). 
If the dynamics are non-linear, the problem becomes more difficult since the conditioned bridge process - derived by an application of Doob's $h$-transform - includes a steering term based on the intractable transition densities of the non-linear diffusion process. 
In \cite{baker2024conditioning}, a score-matching approach in the spirit of \cite{heng2025simulating} is applied to approximate the steering term by learning two time-reversals. This is extended in \cite{yang2024infinite}, which includes an operator learning framework to learn the steering term directly in function space, independently of any specific discretisation.  
In \cite{piepersethmacher2025simulation}, a Hilbert space valued Markov chain Monte Carlo scheme was introduced that draws exact samples from an infinite-dimensional diffusion bridge.

In contrast, the work at hand concerns \textit{generative modeling}, in which one aims to train a sampler that targets an unknown distribution $\mu$, given iid samples $y_1, \ldots y_N$ thereof. 
Except in the trivial case that $\mu$ is a Dirac measure, this is a fundamentally different problem. Our proposed method addresses this problem by conditioning a reference diffusion to satisfy $\mu$ at time $T$. While the conditioning is also based on Doob's $h$-transform, the intractability of the resulting steering term stems from the unknown target measure $\mu$, not from the complexity of the reference diffusion process. In fact, our method allows for a free choice of the reference diffusion, and it is typically beneficial to choose a linear reference to reduce sampling and training costs.

\section{Proofs for Section \ref{sec:forcing}}

We give here the detailed proof for Theorem \ref{thm:dXh}. This can be divided up into three separate steps;
\begin{itemize}
	\item[1.] Show that $h$ defines a change of measure $\P^h$.
	\item[2.] Show that, under the changed measure $\P^h$, the process $X$ at time $T$ has the desired distribution $\mu$.
	\item[3.] Show that, under $\P^h$, $X$ satisfies the diffusion equation given in \eqref{eq:dXh}.
\end{itemize}

The first step is covered in the following two lemmas.
\begin{lemma}
\label{lem:hspacetime}
	The function $h$ as defined in Equation \eqref{eq:def-h} is space-time harmonic in the sense that
	\begin{align*}
		\E[h(t+s,X_{t+s}) \mid X_s = x] = h(s,x), \quad s+t < T.
	\end{align*}
\end{lemma}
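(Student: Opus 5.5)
The plan is to prove space-time harmonicity directly from the definition \eqref{eq:def-h}, using the Markov property of the mild solution $X$ and the Chapman--Kolmogorov relation for the transition densities of Assumption \ref{ass:densities}. Fix $s < s+t < T$ and $x \in H$. First I will write the conditional expectation as an integral against the transition kernel,
\begin{align*}
\E[h(t+s,X_{t+s}) \mid X_s = x] = \int_H h(t+s,z)\, p(s,x;t+s,z) \, \df \nu(z).
\end{align*}
This uses only the defining property of $p$ in Assumption \ref{ass:densities}, extended from indicators to nonnegative measurable functions by monotone approximation. Since $h \geq 0$, no integrability issue arises at this stage.

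Next I will insert the definition of $h(t+s,z)$ and exchange the order of integration by Tonelli's theorem. All integrands are nonnegative, so the exchange is unconditional:
\begin{align*}
\int_H \frac{1}{p_T(y)} \left( \int_H p(s,x;t+s,z)\, p(t+s,z;T,y) \, \df \nu(z) \right) \df \mu(y).
\end{align*}

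The key step is to identify the inner integral with $p(s,x;T,y)$ via Chapman--Kolmogorov. By the Markov property of $X$, which holds because the mild solution is unique for every deterministic initial value (standard for mild solutions, see \cite{daprato2014stochastic}), for every $A \in \calB(H)$ we have
\begin{align*}
\P(X_T \in A \mid X_s = x) = \int_H p(s,x;t+s,z) \int_A p(t+s,z;T,y) \, \df \nu(y) \, \df \nu(z).
\end{align*}
A second application of Tonelli then gives that $\int_H p(s,x;t+s,z)\,p(t+s,z;T,y)\,\df\nu(z)$ is a $\nu$-density of $\P(X_T\in\cdot\mid X_s=x)$. It therefore equals $p(s,x;T,y)$ for $\nu$-a.e.\ $y$.

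I expect this to be the main obstacle. The identity holds only $\nu$-a.e.\ in $y$, whereas $h$ integrates $y$ against $\mu$. This is resolved by Assumption \ref{ass:reference}: since $\mu \ll \nu$, $\nu$-null sets are $\mu$-null, so the identity holds $\mu$-a.e. A related subtlety is that $p_T(y)$ might vanish. However, $\mu \ll \nu$ together with the well-definedness of $h$ in Assumption \ref{ass:densities} means the ratio is understood on $\{p_T>0\}$, which is a set of full $\mu$-measure. This interpretation is consistent at both times $s$ and $t+s$.

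Substituting the Chapman--Kolmogorov identity yields $\int_H p(s,x;T,y)/p_T(y)\,\df\mu(y) = h(s,x)$, which is the claim. If pointwise-in-$x$ versions of the densities are not guaranteed, I will state the identity for $\P\circ X_s^{-1}$-a.e.\ $x$, which suffices for the subsequent martingale and change-of-measure argument.
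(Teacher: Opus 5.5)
Your proposal is correct and follows the same route as the paper's proof. You represent the conditional expectation via the transition density $p(s,x;t+s,\cdot)$, insert the definition \eqref{eq:def-h} of $h(t+s,\cdot)$, exchange the order of integration, and collapse the $z$-integral to $p(s,x;T,y)$ by Chapman--Kolmogorov. The paper leaves implicit the extra details you supply: Tonelli for the nonnegative integrands, Chapman--Kolmogorov holding only $\nu$-a.e.\ in $y$ and transferred to $\mu$-a.e.\ $y$ via $\mu \ll \nu$, and using the same convention on $\{p_T>0\}$ at both times.
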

\begin{proof}
A simple computation shows that
	\begin{align*}
		\E[h(t+s, X_{t+s}) \mid X_s =x ] &= \int_H h(t+s,z) p(s,x;s+t,z) \, \nu(\df z) \\
		&= \int_H \int_H p(t+s,z;T,y) p(s,x;s+t,z) \dfrac{1}{p_T(y)} \, \df \mu(y) \, \nu (\df z) \\
		&= \int_H p(s,x;T,y) \dfrac{1}{p_T(y)} \df \mu(y) \\
		&= h(s,x).
	\end{align*}
Here, we have used Assumption \ref{ass:densities} in the first step, the definition of $h$ in the second step and the Chapman-Kolmogorov equation in step three.
\end{proof}

\begin{lemma}
\label{lem:Ph}
	The process $h(t,X_t), t < T$, is a $\P$-martingale with $\E[h(t,X_t)] = 1$.
	In particular, there exists a unique change of measure $\P^h$ on $\calF_T$ defined by
	\begin{align*}
		\df \P^h_t = h(t,X_t) \df \P^h_t, \quad t < T,
	\end{align*}
	where $\P^h_t, \P_t$ denote the restrictions of $\P^h$ and $\P$ onto $\calF_t$.
\end{lemma}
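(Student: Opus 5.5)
The plan is to show three things in turn: that $h(t,X_t)$ is a martingale, that it has mean one, and that the resulting consistent family of measures extends uniquely to $\calF_T$.

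\textbf{Martingale property.} Lemma \ref{lem:hspacetime} is a statement about conditioning on $X_s$. Since the mild solution $X$ is a Markov process with respect to $(\calF_t)$, with transition densities $p(s,x;t,y)$ from Assumption \ref{ass:densities}, I will upgrade it to conditioning on $\calF_s$:
\begin{align*}
\E[h(t,X_t)\mid\calF_s]=\E[h(t,X_t)\mid X_s]=h(s,X_s), \qquad s\le t<T.
\end{align*}
Integrability comes for free. The function $h$ is nonnegative, and Tonelli gives $\E[h(t,X_t)]<\infty$ by the computation below. Nonnegativity also justifies every interchange of integrals via Tonelli, with no growth condition needed.

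\textbf{Mean one.} Disintegrate over the initial law $\mu_0$ and use the marginal density $p_t(z)=\int_H p(0,x;t,z)\,\df\mu_0(x)$ of $X_t$ with respect to $\nu$:
\begin{align*}
\E[h(t,X_t)]&=\int_H h(t,z)\,p_t(z)\,\nu(\df z)\\
&=\int_H \frac{1}{p_T(y)}\int_H p(t,z;T,y)\,p_t(z)\,\nu(\df z)\,\df\mu(y)\\
&=\int_H \frac{p_T(y)}{p_T(y)}\,\df\mu(y)=\mu(H)=1.
\end{align*}
The third equality is Chapman--Kolmogorov integrated against $\mu_0$.

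One technical point needs care: the set $\{p_T=0\}$. Since $p_T$ is the $\nu$-density of $X_T$, I need $\mu(\{p_T=0\})=0$, which amounts to $\mu\ll\operatorname{law}(X_T)$. I will take this as implicit in Assumption \ref{ass:densities}, since $h$ being "well defined" forces it. Under that reading, the ratio $p_T/p_T$ equals $1$ $\mu$-a.e.

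\textbf{Extension to $\calF_T$.} For each $t<T$, define $\P^h_t$ on $\calF_t$ by $\df\P^h_t=h(t,X_t)\,\df\P_t$; this is a probability measure. The martingale property gives consistency: for $A\in\calF_s$ and $s\le t<T$,
\[
\P^h_t(A)=\E[h(t,X_t)\mathbf 1_A]=\E[h(s,X_s)\mathbf 1_A]=\P^h_s(A).
\]
Uniqueness is immediate, because $\bigcup_{t<T}\calF_t$ is a $\pi$-system generating $\calF_{T-}$. I take $\calF_T=\calF_{T-}$, which holds for the (continuous, augmented) Brownian filtration, or else I work on the canonical path space $C(0,T;H)$.

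The main obstacle is existence. A martingale on $[0,T)$ need not be uniformly integrable, so I would not go through a limit $h(t,X_t)\to$ something. Instead I would construct $\P^h$ explicitly: on $\calF_T$ set
\[
\df\P^h=\frac{\df\mu}{\df\,\mathrm{law}(X_T)}(X_T)\,\df\P=\frac{\mathbf 1_{\{p_T>0\}}}{p_T(X_T)}\,\frac{\df\mu}{\df\nu}(X_T)\,\df\P .
\]
This density has expectation $\mu(H)=1$. By the Markov property,
\[
\E\Big[\tfrac{\df\mu}{\df\nu}(X_T)/p_T(X_T)\,\Big|\,\calF_t\Big]=\int_H p(t,X_t;T,y)\frac{1}{p_T(y)}\frac{\df\mu}{\df\nu}(y)\,\nu(\df y)=h(t,X_t).
\]
So the restriction of this $\P^h$ to $\calF_t$ is exactly $\P^h_t$, which gives existence. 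As a bonus, it yields $\P^h(X_T\in A)=\mu(A)$ directly, which is Step 2 of the outline.
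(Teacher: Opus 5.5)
Your argument is correct. The martingale step (Lemma \ref{lem:hspacetime} plus the Markov property) and the mean-one computation match the paper's; the paper evaluates the mean at $t=0$ rather than at a general $t$.

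The genuine difference is how $\P^h$ is produced. The paper only has the consistent family $h(t,X_t)\,\df\P_t$, $t<T$, and invokes a Carath\'eodory-type extension theorem (Lemma 4.2 in Stroock's lectures). That route does not need the martingale to be closed at $T$. It is the one used for genuine bridges, where the target is singular with respect to $\mathrm{law}(X_T)$ and $h(t,X_t)$ is not uniformly integrable. However, it needs extra structure, such as a canonical path space, to guarantee countable additivity. It also gives no description of $\P^h$ at time $T$, so the terminal law must be recovered separately from Lemma \ref{lem:Ph-Mu}, which is stated only for $t<T$.

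You instead exhibit the closing variable $Z=\frac{\df\mu}{\df\,\mathrm{law}(X_T)}(X_T)$ and verify $h(t,X_t)=\E[Z\mid\calF_t]$. This buys several things:
\begin{itemize}
\item it shows the martingale is uniformly integrable;
\item it gives existence on an arbitrary stochastic basis via $\df\P^h=Z\,\df\P$;
\item it makes Lemma \ref{lem:hspacetime} superfluous, since the tower property yields the martingale property directly;
\item it delivers $\P^h(X_T\in A)=\mu(A)$ immediately.
\end{itemize}
The only extra hypothesis is $\mu(\{p_T=0\})=0$. The paper's identity $p_T(y)/p_T(y)=1$ for $\mu$-a.e.\ $y$ silently needs this as well. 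Likewise, your caveat that uniqueness holds only on $\calF_{T-}=\sigma(\bigcup_{t<T}\calF_t)$ is a precision the paper's statement omits.
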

\begin{proof}
From Lemma \ref{lem:hspacetime} and the Markov property of $X$ it follows that 
\begin{align*}
	\E[h(t,X_t) \mid \calF_s] = \E[h(t,X_t) \mid X_s] = h(s,X_s),
\end{align*}
which shows the martingale property of $h(t,X_t)$. Moreover, 
\begin{align*}
	\E[h(0,X_0)] = \int_H \int_H \dfrac{p(0,x_0;T,y)}{p_T(y)} \, \df \mu_0(x_0) \df \mu(y) = \int_H \dfrac{p_T(y)}{p_T(y)} \,  \df \mu(y) = 1.
\end{align*}
The second claim then follows from the Carath\'eodory extension theorem, see e.g. \cite{stroock1987lectures}, Lemma 4.2.
\end{proof}

Step Two is a consequence of the lemma below.
\begin{lemma}
\label{lem:Ph-Mu}
Under the measure $\P^h$ of Lemma \ref{lem:Ph}, it holds for any bounded and measurable function $g$ that
\begin{align}
\label{eq: a1p98273}
	\E^h[g(X_t)] = \int_H \E\left[ g(X_t) \mid X_T = y \right] \, \mu(\df y).
\end{align}
\end{lemma}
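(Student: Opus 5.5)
For $t<T$, I would compute $\E^h[g(X_t)]$ directly from the change of measure in Lemma \ref{lem:Ph}. Since $g(X_t)$ is $\calF_t$-measurable, $\E^h[g(X_t)] = \E[g(X_t) h(t,X_t)]$. Next I would insert the definition \eqref{eq:def-h} of $h$ and write the expectation over $X_t$ using the marginal density $p_t$ with respect to $\nu$ from Assumption \ref{ass:densities}:
\begin{align*}
\E[g(X_t)h(t,X_t)] = \int_H \int_H g(x)\, p_t(x)\, \frac{p(t,x;T,y)}{p_T(y)} \, \df\nu(x)\, \df\mu(y).
\end{align*}
Interchanging the order of integration is justified by Tonelli, since $g$ is bounded and everything else is nonnegative; one can split $g=g^+-g^-$ if needed.

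The key identification is that, for $\mu$-a.e.\ $y$ (indeed $p_T\nu$-a.e., which covers $\mu$-a.e.\ because $\mu\ll\nu$ and $h$ requires $p_T>0$ on the support of $\mu$), the kernel
\begin{align*}
A\mapsto \int_A \frac{p_t(x)\,p(t,x;T,y)}{p_T(y)}\,\df\nu(x)
\end{align*}
is a regular version of the conditional law of $X_t$ given $X_T=y$. To verify this, I would check the defining property: for bounded measurable $g,k$,
\begin{align*}
\E[g(X_t)k(X_T)] = \int\!\!\int g(x)k(y)\,p_t(x)\,p(t,x;T,y)\,\df\nu(x)\,\df\nu(y).
\end{align*}
This follows from the Markov property together with the existence of transition and marginal densities. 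I would also check that the kernel has total mass one, which holds because $\int p_t(x)p(t,x;T,y)\,\df\nu(x) = p_T(y)$, i.e.\ Chapman--Kolmogorov applied to the marginal. Substituting this identification gives exactly $\int_H \E[g(X_t)\mid X_T=y]\,\mu(\df y)$.

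The main obstacle is the endpoint $t=T$ (and, more generally, the meaning of the right-hand side there), since the density process is only defined for $t<T$. For $t=T$ the right-hand side is $\int g\,\df\mu$ trivially. On the left I would use that $\P^h$ is defined on $\calF_T$ by Carathéodory extension, so $\E^h[g(X_T)]$ must be identified with a limit. My first attempt would be to prove the identity first for bounded continuous $g$, letting $t\uparrow T$ on both sides: on the left use path continuity of $X$ under $\P^h$ and dominated convergence, and on the right use continuity of the bridge at $T$. A monotone class argument then extends the result to bounded measurable $g$. Alternatively, I would show that the martingale $h(t,X_t)$ closes at $T$ with the density $\df\mu/\df\P_{X_T}(X_T) = 1/p_T(X_T)$ times $\df\mu/\df\nu$, since its conditional expectation given $\calF_t$ reproduces $h(t,X_t)$ by the same Chapman--Kolmogorov computation as in Lemma \ref{lem:hspacetime}. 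In that case $\E^h[g(X_T)] = \int g\,\df\mu$ follows directly.
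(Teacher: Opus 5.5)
Your argument for $t<T$ is the paper's proof: you pass to $\E[g(X_t)h(t,X_t)]$, expand $h$ and the law of $X_t$ through the densities $p_t$, $p(t,x;T,y)$, $p_T$, exchange the integrals, and identify $p_t(x)p(t,x;T,y)/p_T(y)\,\nu(\df x)$ as the conditional law of $X_t$ given $X_T=y$ by Bayes' formula. Your check of that kernel, and of the $\mu$-a.e.\ meaning of the right-hand side, spells out what the paper compresses into ``an application of the Bayes theorem''. For the endpoint $t=T$, which the paper leaves implicit, your second route of closing the martingale with $Z=\frac{1}{p_T(X_T)}\frac{\df\mu}{\df\nu}(X_T)$ is correct. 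That uniform integrability is also exactly what your first route would need for path continuity at $T$ under $\P^h$, since otherwise $\P^h\ll\P$ is only known on each $\calF_t$, $t<T$.
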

\begin{proof}
An application of the Bayes theorem in the third step gives
\begin{align*}
	\E^h\left[g(X_t)\right] = \E\left[ g(X_t) h(t,X_t) \right] 
	= \int_H \int_H g(x) \dfrac{p(t,x;T,y)}{p_T(y)} p_t(x) \df \mu(y) \df \nu(x) 
	= \int_H \E[ g(X_t) \mid X_T = y] \, \mu (\df y).
\end{align*}
\end{proof}

Step three is proven in the following.
\begin{proof}(remainder of Theorem \ref{thm:dXh})

Let $L$ denote the space-time generator of $(t,X_t)$ under $\P$, defined as 
\begin{align*}
    (L \varphi)(t,x) := \lim_{s \to t} \dfrac{\E[ \varphi(t,X_t) \mid X_s = x] - \varphi(s,x)}{t-s}
\end{align*}
for any $\varphi$ such that the limit exists in the topology of bounded pointwise convergence \cite{priola99class, fabbri17stochastic}.
Under the assumed Fréchet differentiability of $h$ in Assumption \ref{ass:densities}, it follows from \cite{piepersethmacher2025class}, Lemma 3.3. that
\begin{align*}
    h(t,X_t) = h(0,X_0) + \int_0^t L h(s,X_s) \, \df s + \int_0^t \langle B^*(s) \Df_x h (s,X_s), \df W_s \rangle.
\end{align*}
By the space-time harmonic property of $h$ in Lemma \ref{lem:hspacetime}, we have $Lh = 0$.
Hence, defining $\bar{h}(t,X_t) = \frac{h(t,X_t)}{h(0,X_0)}$, it holds
\begin{align*}
    \df \bar{h}(t,X_t) &= \dfrac{1}{h(0,X_0)} \langle  B^*(t) \Df_x h (t,X_t), \df W_t \rangle \\
    &= \bar{h}(t,X_t) \langle  B^*(t) \Df_x \log h (t,X_t), \df W_t \rangle
\end{align*}
with $\bar{h}(0,X_0) = 1.$
It follows that $h(t,X_t) = h(0,X_0) \calE(M^h)_t$, where $\calE(M^h)$ denotes the Doléans-Dade exponential of the martingale
\begin{align*}
    M^h_t :=  \int_0^t \langle  B^*(s) \Df_x \log h (s,X_s), \df W_s \rangle.
\end{align*}

Given the martingale property of $h(t,X_t)$, the Girsanov theorem implies that the process
\begin{align*}
    W^h_t = W_t - \int_0^t B^*(s) \Df_x \log h(s,X_s) \, \df s
\end{align*}
is a cylindrical Wiener process under the changed measure $\P^h$ defined by $\df \P^h_t = h(t,X_t) \df\P^h$ for any $t < T$.
Plugging this into Equation \eqref{eq:Xt} shows that $X$ under $\P^h$ satisfies the forced diffusion \eqref{eq:dXh}.

\end{proof}

\section{Proofs for Section \ref{sec:training}}

\begin{proof}(of Proposition \ref{prop:loss})
From the Girsanov theorem, it follows that $\bbX^{\theta}$ and $\bbX^h$ are equivalent with Radon-Nikodym derivative
\begin{align*}
	\dfrac{\df \bbX^{\theta}}{\df \bbX^h}(X) = \dfrac{\df \mu_0}{\df \mu_0^h}(X_0) \exp \left( \int_0^T \langle \eta_{\theta}(t), \df W^h_t \rangle \, \df t - \frac{1}{2}\int_0^T |\eta_{\theta}(t) |^2 \df t \right),
\end{align*}
where $W^h$ is a cylindrical Wiener process and $ \eta_{\theta}(t) = B^*(t) (s_{\theta}(t,X_t) - s(t,X_t)).$ 
Since the stochastic integral term vanishes under expectation with respect to $\P^h$, it follows that
\begin{align*}
\begin{split}
	\DKL(\bbX^h \Vert \bbX^{\theta}) 	&= - \E^h \left[ \log \left( \dfrac{\df  \bbX^{\theta}}{\df \bbX^h}\right) \right] \\ \\
	&= - \E^h \left[ \log \dfrac{\df \mu_0}{\df \mu_0^h}(X_0)  \right] + \frac{1}{2} \E^h \left[ \int_0^T | \eta_{\theta}(t) |^2 \df t  \right].
\end{split}
\end{align*}

Plugging in the definition of $\eta_{\theta}(t)$, the second term on the right-hand side of equals
\begin{align}
\begin{split}
\label{eq:a9812763}
	\frac{1}{2} \E^h \left[ \int_0^T | B^*(t) (s_{\theta}(t,X_t) - s(t,X_t))|^2 \df t  \right] &= 
	\frac{1}{2} \E^h \Biggl[ \int_0^T | B^*(t) s(t,X_t) |^2 + 
	 |B^*(t) s_{\theta}(t,X_t) |^2  \\ &\quad \quad \quad \quad  -  2 \langle B^*(t)s(t,X_t), B^*(t)s_{\theta}(t,X_t) \rangle  \df t \Biggr]. 
\end{split}
\end{align}

Notice that the first term in the integral on the right-hand side does not depend on $\theta$ and can hence be treated as a constant while minimising in $\theta$.
Moreover, denoting by $\langle x,y \rangle_{B_t^*} = \langle B^*(t) x, B^*(t)y \rangle$, it holds that
\begin{align}
\label{eq:ao92873127983}
\begin{split}
		\E^h \left[ \langle s(t,X_t), s_{\theta}(t,X_t) \rangle_{B_t^*}  \right]  &= 
	\E^h  \left[ \langle \Df_x \log h(t,X_t), s_{\theta}(t,X_t) \rangle_{B_t^*}  \right] \\
	&= \E \left[ \langle \Df_x \log h(t,X_t), s_{\theta}(t,X_t) \rangle_{B_t^*}  h(t,X_t) \right] \\
	&= \E \left[ \langle \Df_x  h(t,X_t), s_{\theta}(t,X_t) \rangle_{B_t^*}    \right] \\
	&= \int_H \langle \Df_x  h(t,x), s_{\theta}(t,x) \rangle_{B_t^*}  \, p_t(x) \, \nu(\df x) \\
	&= \int_H \langle \Df_x  \left( \int_H \dfrac{p(t,x;T,y)}{p_T(y)} \, \mu(\df y) \right), s_{\theta}(t,x) \rangle_{B_t^*}  \, p_t(x) \, \nu(\df x) \\
	&= \int_H \int_H \langle  \Df_x \log p(t,x;T,y), s_{\theta}(t,x) \rangle_{B_t^*}  \dfrac{p(t,x;T,y)}{p_T(y)} p_t(x) \, \nu(\df x) \, \mu(\df y) \\
	&= \int_H \E \left[ \langle\Df_{x_t}\log p(t,X_t;T,y), s_{\theta}(t,X_t) \rangle_{B_t^*}  \mid X_T = y \right] \, \mu(\df y) \\
	&= \E^h \left[  \langle\Df_{x_t}\log p(t,X_t;T,X_T), s_{\theta}(t,X_t) \rangle_{B_t^*}   \right].
\end{split}
\end{align}
Here, we used the definition of $h$ and $\P^h$ in the first three steps, Assumption \ref{ass:densities} in the fourth step, Fubini's theorem in step six and Equation \eqref{eq: a1p98273} in the last step.
Hence, in total 
\begin{align*}
	\DKL(\bbX^h \Vert \bbX^{\theta}) &=  C_1 + C_2 + \frac{1}{2} \int_0^T \E^h \left[ |B^*(t) s_{\theta}(t,X_t) |^2 \right] \, \df t  \\
	&\quad - \int_0^T \E^h \left[   \langle B^*(t) \Df_{x_t}\log p(t,X_t;T,X_T), B^*(t) s_{\theta}(t,X_t) \rangle  \right] \, \df t
\end{align*}
with $C_1 = - \E^h \left[ \log \dfrac{\df \mu_0}{\df \mu_0^h}(X_0)  \right]$ and $C_2 = \frac{1}{2} \E^h \left[ \int_0^T | B^*(t) s(t,X_t) |^2 \, \df t \right]$.

On the other hand, it holds that
\begin{align}
\label{eq:ap9102873}
\begin{split}
		\calL(\theta) &= \dfrac{1}{2} \int_0^T \E^h \left[ | B^*(t) \left( s_{\theta}(t,X_t) - \Df_{x_t} \log p(t,X_t;T,X_T) \right)|^2\right]	 \df t \\
	&= C_3 + \dfrac{1}{2} \int_0^T \E^h \left[ |B^*(t) s_{\theta}(t,X_t) |^2\right]	 \df t - \int_0^T \E^h \left[ \langle B^*(t) s_{\theta}(t,X_t),  B^*(t) \Df_{x_t} \log p(t,X_t;T,X_T) \rangle\right]	 \df t 
\end{split}
\end{align}
with $C_3 = \frac12 \int_0^T \E^h \left[ | B^*(t) \Df_{x_t} \log p(t,X_t;T,X_T) |^2\right]	 \df t.$
Hence, in total, $\DKL(\bbX^h \Vert \bbX^{\theta}) = C_1 + C_2 - C_3 + \calL(\theta)$ which proves the claim.
\end{proof}

\begin{proof}(of Corollary \ref{cor:Xthetastar})
Let $\theta^{\star} = \argmin_{\theta \in \Theta} \calL(\theta)$ and define  
\begin{align*}
	\tilde{\calL}(\theta) = \frac12 \E^h \left[ \int_0^T | B^*(t) (s_{\theta}(t,X_t) - s(t,X_t)) |^2 \df t  \right].
\end{align*}
Following Equations \eqref{eq:a9812763} to \eqref{eq:ap9102873}, we then have $\theta^{\star} =  \argmin_{\theta \in \Theta} \tilde{\calL}(\theta^{\star})$.
On the other hand, given Assumption \ref{ass:sufficientclass}, there exists some $\theta \in \Theta$ such that $\tilde{\calL}(\theta) = 0$ and hence $\theta^{\star}$ satisfies $\tilde{\calL}(\theta^{\star}) = 0$.
Noting that $\ker(B^*(t)) = \{0\}$, it follows for almost every $t \in [0,T]$ that
\begin{align}
\label{eq:a912873123}
	| s_{\theta^{\star}}(t,X_t) - s(t,X_t) | = 0 \quad \P^h\text{-a.s.}
\end{align}
From the continuity of $s$ and a.s. continuity of $t \mapsto X_t$, 
one concludes that \eqref{eq:a912873123} holds uniformly for all $t \in [0,T]$.
In particular, we have $s_{\theta^{\star}}(0,x) = s(0,x)$ for $\mu_0$-a.e. $x \in H$, and hence $X^{\theta^{\star}}_0 = X^h_0$ in law. Jointly with $\tilde{\calL}(\theta^{\star}) = 0$, this gives that $\DKL(\bbX^h \Vert \bbX^{\theta^{\star}}) = 0$.

On the other hand, let $\theta^{\star}$ be such that $\bbX^{\theta^{\star}} = \bbX^h$. Then also $\DKL(\bbX^h \Vert \bbX^{\theta^{\star}}) = 0$ and hence $\tilde{\calL}(\theta^{\star}) = 0$. Again, by 
Equations \eqref{eq:a9812763} to \eqref{eq:ap9102873}, we then have $\theta^{\star} =  \argmin_{\theta \in \Theta} \calL(\theta^{\star})$.

\end{proof}

\section{Proofs for Section \ref{sec:vp-spde}}

\begin{proof}(of Proposition \ref{prop:vp-spde})
Fix some $\gamma >0$. Since $C$ is a positive, trace-class operator on $H$, there exists an orthonormal basis $(e_j)_j$ of $H$ and a sequence of positive eigenvalues $c_j$ such that $\sum_{j =1}^{\infty} c_j < \infty$  and $C e_j = c_j e_j$ for all $j \geq 1$. Hence, the inverse $C^{-\gamma}$ is an unbounded operator on $H$ with dense domain 
\begin{align*}
	\calD_{C^{-\gamma}} = \{ x \in H: \sum_{j \geq 1} c_j^{-2 \gamma} |\langle x, e_j \rangle |^2 < \infty \}.
\end{align*}
Let $B(t) = \sqrt{\beta(t) Q}$ and $A(t) = -\frac12 \beta(t) C^{-\gamma}$. The operator family $A = (A(t))_{t \geq 0}$ is then densely defined with common domain $\calD_A = \calD_{C^{-\gamma}}$ and generates an evolution family of diagonalisable operator $U(t,s)$ defined by
\begin{align}\label{eq:Utsdef}
	U(t,s) e_j = \exp \left( - \alpha_j \int_s^t \beta(r) \, \df r \right) e_j
\end{align}
where $\alpha_j = \frac12 c_j^{-\gamma}.$  A straightforward computation then shows that 
\begin{align*}
	\int_0^T \tr \left[ U(t,0) B(t) B^*(t) U^*(t,0) \right] \, \df t < \infty
\end{align*}
for any $T > 0$. From this it follows \cite{daprato2014stochastic,knable2011ornstein} that Equation \eqref{eq:vp-spde} admits a unique mild solution for any initial value $X_s = x \in H, s>0$, given by 
\begin{align*}
	X_t = U(t,s) x + \int_s^t U(t,r) B(r) \, \df W_r, \quad s \leq t \leq T.
\end{align*}
The random variable $X_t \mid X_s = x$ is Gaussian on $H$ with mean $U(t,s)x$ and covariance operator
$	Q(t,s) = \int_s^t   U(t,r) B(r) B^*(r) U^*(t,r)  \, \df r,$ which remains diagonal with eigenvalues
\begin{align*}
	q_j(t,s) &= \frac{q_j}{2 a_j} \left[ 1 - \exp \left( - 2 a_j \int_s^t \beta(r) \, \df r \right) \right], 
\end{align*}
where $q_j = c_j^{1-\gamma}$.
We are left to verify that $\calN(U(t,s)x, Q(t,s))$ is absolutely continuous with respect to $\calN(0,C)$ for all $s \leq t \leq T$ and $x \in H$. 
We first show that the measures $\{ \calN(U(t,s)x, Q(t,s)) : x \in H \}$ are equivalent.
The eigenvalue expansions of $U(t,s)$ and $Q(t,s)$ show that $(Q(t,s))^{-\frac12} U(t,s)$ has eigenvalues
\begin{align*}
	\frac{\sqrt{2 a_j}  \exp \left( - \alpha_j \int_s^t \beta(r) \, \df r \right)}{ \sqrt{ q_j \left[1 - \exp \left(  - 2 a_j \int_s^t \beta(r) \, \df r \right) \right]}},
\end{align*}
which, since $a_j \to \infty$ sufficiently fast, are bounded in $j \geq 1$ for any fixed $s \leq t$. 
Hence $(Q(t,s))^{-\frac12} U(t,s)$ is a well-defined, bounded operator and $\im U(t,s) \subset \im Q(t,s)^{\frac12}$. From the Cameron-Martin theorem it follows that $\{ \calN(U(t,s)x, Q(t,s)) : x \in H \}$ are equivalent.

It remains to show that $\calN(0, Q(t,s))$ is absolutely continuous with respect to $\calN(0,C)$. Another eigenvalue expansion shows that $C^{-\frac12} (Q(t,s))^{\frac12}$ and $(Q(t,s))^{-\frac12}C^{\frac12}$ are well-defined and bounded operators and hence $\im C^{\frac12} = \im (Q(t,s))^{\frac12}$.
Moreover, the operator $(C^{-\frac12} Q(t,s) C^{-\frac12} - I)$ has eigenvalues $(-\exp ( - 2 a_j \int_s^t \beta(r) \, \df r ))$ and is thus Hilbert-Schmidt. The conclusion then follows from the Feldman-Hajek theorem.

Lastly, the Fréchet differentiability of $h(t,x)$ for any $t < T$ follows from the differentiability of the Ornstein-Uhlenbeck transition densities of $\calN(U(t,s)x, Q(t,s))$ with respect to $\calN(0,C)$ for any $s < t, x \in H$, see for example \cite{piepersethmacher2025class}, Section 4.2.

\end{proof}

\section{Proofs for Section \ref{sec:bounds}}

\begin{proof}[Proof of Proposition \ref{prop:w2bnd}]
The proof of this result closely follows the lines of the proof of Theorem 14 of \cite{pidstrigach2024infinite}. 

We start by coupling a process $\hat{X}_{t}^h$ following $\eqref{eq:dXhOU}$ initiated in initial distribution $\hat{X}_0^h\sim \hat{\mu}_0^{h}$ with another process $\tilde{X}_{t}^h$ following \eqref{eq:dXhOU} with $s_{\theta}(x,t)$ instead of $s(x,t)$, initiated in $\tilde{X}_0^h\sim \mu_{0}^h$. 

In both cases, we can use the unique mild solution similarly to \eqref{eq:Xt}, with $U(t,s)$ generated by $A(t)=\beta(t)A$, and $F(s,X_s)$ replaced by the Doob's $h$ transform forcing term $\beta(s) Q s(t,X^h_s)$,
\begin{align}
    \label{eq:XtOUtilde}
\begin{split}
	X^h_t &= U(t,0) X^h_0 + \int_0^t U(t,s) \beta(s) Q s(t,X^h_s) \, \df s + \int_0^t U(t,s) \sqrt{\beta(s) Q} \, \df W_s,\\
	\tilde{X}^h_t &= U(t,0) \hat{X}^h_0 + \int_0^t U(t,s) \beta(s) Q s_{\theta}(t,\tilde{X}^h_s) \, \df s + \int_0^t U(t,s) \sqrt{\beta(s) Q} \, \df W_s
\end{split}
\end{align}

Using the diagonal form of $U(t,s)$ from \eqref{eq:Utsdef}, it follows that $\|U(t,s)e_j\|\le 1$ for any $0\le s\le t$.

For the first term, we can show by the definition of the loss function \eqref{eq:calLdef} that 
\[\mathbb{E}_{\mathbb{P}_h}\left[\left(\int_0^t U(t,s) \beta(s) Q (s(t,X^h_s)-s_{\theta}(t,X^h_s))\, \df s\right)^2\right]\le \mathcal{L}(\theta)\text{ for }0\le t\le T.\]
Following a similar argument to the proof of Theorem 2 of \cite{chen2022sampling}, it follows that
\[\mathbb{E}\|X^h_T-\tilde{X}^h_T\|^2\le C\exp(2LT) \mathcal{L}(\theta), \]
for some constant $C$ independent of $T$.
From this, we obtain that for the distribution $\tilde{X}^h_T\sim \tilde{\mu}^h_T$, we have
\begin{equation}\label{eq:W2bnd1}\mathcal{W}_2(\tilde{\mu}^h_T,\mu)\le \sqrt{C \mathcal{L}(\theta)}\exp(LT)\end{equation}

Now we move on a third process called $\hat{X}_t^h$, which follows \eqref{eq:dXh} with estimated score $s_{\theta}(t,x)$, similarly to $\tilde{X}_t^h$, but we initiate it at the feasible initial distribution $\hat{X}_0^h\sim \hat{\mu}_0^{h}$ (via Langevin steps). We synchronously couple this with $\tilde{X}_t^h$, their unique mild solutions can be written as 
\begin{align}
    \label{eq:XtOUhat}
\begin{split}
	\tilde{X}^h_t &= U(t,0) X^h_0 + \int_0^t U(t,s) \beta(s) Q s_{\theta}(t,X^h_s) \, \df s + \int_0^t U(t,s) \sqrt{\beta(s) Q} \, \df W_s,\\
	\hat{X}^h_t &= U(t,0) \hat{X}^h_0 + \int_0^t U(t,s) \beta(s) Q s_{\theta}(t,\hat{X}^h_s) \, \df s + \int_0^t U(t,s) \sqrt{\beta(s) Q} \, \df W_s.
\end{split}
\end{align}
If we couple these with the same noise $W_s$, we can see that the difference evolves 
as
\begin{align}
    \label{eq:XtOUhatdiff}
\begin{split}
	X^h_t-\hat{X}^h_t&= U(t,0) (X^h_0-\hat{X}^h_0) + \int_0^t U(t,s) \beta(s) Q (s(t,X^h_s)-s(t,\hat{X}^h_s)) \, \df s.
\end{split}
\end{align}
By Gr\"{o}nwall's lemma, using the Lipschitz assumption that $\|\beta(s) Q (s(t,X^h_s)-s(t,\hat{X}^h_s))\|\le L\|X^h_s-\hat{X}^h_s\|$, it follows that for this synchronous coupling,
\[\|X^h_T-\hat{X}^h_T\|\le \exp(LT)\|X^h_0-\hat{X}^h_0\|.\]
By the definition of the 2-Wasserstein distance, and the fact $X^h_T$ is distributed as the target $\mu$, this implies that
\begin{equation}\label{eq:W2bnd2}\mathcal{W}_2(\hat{\mu}^{h}_T,\mu)\le \exp(LT)\mathcal{W}_2(\hat{\mu}^{h}_0,\mu_0^h),\end{equation}
with $\hat{\mu}^{h}_T$ denoting the distribution of $\hat{X}^h_T$. The size of the term $\mathcal{W}_2(\hat{\mu}^{h}_0,\mu_0^h)$ can be reduced by doing further MCMC steps at initialisation.

Finally, for the numerical discretization error for the process $\hat{X}^t_T$ using estimated score $s_{\theta}(t,x)$, we can use Theorem 3.14 of \cite{Kruse2014}, implying a bound of the form $O(\Delta t+\Delta x)$. The claim follows by combining this with bounds \eqref{eq:W2bnd1} and \eqref{eq:W2bnd2} via the triangle inequality.
\end{proof}

\begin{proof}(of Lemma \ref{lem:dYnoising})
For any $t >0$, the mild solution $Y_t$ given $Y_0 = y_0$ is Gaussian with mean $S_t y_0$ and covariance operator $Q_t = \int_0^t S_r C^{1-\gamma} S^*_r \, \df r$, where $S_t = \exp(-\frac12 C^{-\gamma} t)$ is the semigroup generated by $-\frac12 C^{-\gamma}$.

Just as in the proof of Proposition \ref{prop:vp-spde}, an eigenvalue decompositions shows that $\nu = \calN(0,C)$ and $\calN(0,Q_t)$ are equivalent Gaussian measures. 
In particular, they share the same Cameron-Martin space $H_{\nu}$. Hence, by the Cameron-Martin theorem and the fact that $y_0 \in H_{\nu}$, it follows that $\calN(S_t y_0, Q_t) \ll \calN(0,C).$
Marginalising $Y_t$ by integrating out $Y_0 \sim \mu$ with $\supp(\mu) \subset H_{\nu}$ then finishes the proof.

\end{proof}

\section{Supplementary material for Section \ref{sec:applications}}
\label{app:applications}

\subsection{Algorithmic summary}
The training and sampling procedure of the forced generative VP-SPDE are summarised in Algorithm \ref{alg:training} and \ref{alg:sampling} respectively. 
During training, the convergence criteria are user-defined and can be based on, for example, empirical metrics, the loss function or a fixed number of training steps. 

For the time integration step in Algorithm \ref{alg:sampling}, the numerical SPDE solver can be chosen by the user. In our experiments, we take a semi-implicit Euler-Maruyama scheme. 
One Langevin sampling step is effectively as expensive as one time integration step of the forced diffusion model. The number of Langevin steps should therefore be carefully weighed against the number of time integration steps at sample generation time. 
A principled way to determine the required Langevin steps is to initialise multiple parallel chains at $\mu_0$ and compute the Gelman-Rubin diagnostics $\hat{R}$ as a function of the number of steps (as test function, we can use the potential of $\mu_0$, for example). Typically  $\hat{R}<1.05$ indicates that convergence to the stationary distribution has occurred.
In our examples, due to sufficient noising time, we have found the necessity of including Langevin steps only in the Gaussian mixture experiment when setting $T = 0.2$.

\begin{algorithm}[tb]
\caption{Training Forced Generative VP-SPDE}
\label{alg:training}
\begin{algorithmic}[1]
\REQUIRE Training data $y_1,\ldots,y_N$, covariance operator $C$, initial model parameters $\theta$, batch size $B$
\WHILE{not converged}
    \STATE Sample $y_1, \dots, y_B$ from data, $t_1, \dots, t_B \sim \mathcal{U}(0,T)$ and Ornstein-Uhlenbeck bridges $X_{t_1}^{y_1}, \dots, X_{t_B}^{y_B} \sim \mathbb{P}^h$
    \STATE Approximate loss $\mathcal{L}(\theta)$ in \eqref{eq:calLdef} via 
    \vspace{-0.15cm}
    \[
        \hat{\mathcal{L}}(\theta) = \frac{T}{B} \sum_{i=1}^B \left| s_{\theta}(t_i, X_{t_i}^{y_i}) - \mathrm{D}_{x} \log p(t_i, X_{t_i}^{y_i}; T, y_i) \right|_{B_{t_i}^*}^2
    \]
    \vspace{-0.15cm}
    \STATE Update model parameters $\theta$ via stochastic gradient descent using $\nabla_\theta \hat{\mathcal{L}}(\theta)$
\ENDWHILE
\STATE \textbf{return} Approximated steering function $s_{\theta}$
\end{algorithmic}
\end{algorithm}

\begin{algorithm}[tb]
\caption{Sampling Forced Generative VP-SPDE}
\label{alg:sampling}
\begin{algorithmic}[1]
\REQUIRE Approximated steering function $s_\theta$, covariance operator $C$, numerical SDE solver $\text{solve}$
\REQUIRE Langevin steps $N_{\text{Langevin}}$, step size $\epsilon_{\text{Langevin}}$, integration steps $N_{\text{solve}}$

\STATE \textbf{Prior Sampling of $\mu_0^h$}
\STATE Initialize $X^h_0 \sim \mathcal{N}(0, C)$
\FOR{$k = 1$ \textbf{to} $N_{\text{Langevin}}$}
    \STATE Sample noise $\eta \sim \mathcal{N}(0, C)$
    \STATE Apply pCN-Langevin update:
    \[
        X^h_0 \leftarrow \sqrt{1 - \epsilon_{\text{Langevin}}^2} X^h_0 + \frac{\epsilon_{\text{Langevin}}^2}{2} C s_\theta(0, X^h_0) + \epsilon_{\text{Langevin}} \eta
    \]
\ENDFOR

\STATE \textbf{Time Integration of $X^h$}
\STATE Integrate the forced diffusion model over $N_{\text{solve}}$ steps:
\[
    X^h \leftarrow \text{solve}(X^h_0, A, s_\theta, Q, N_{\text{solve}})
\]
\STATE \textbf{return} $X^h$
\end{algorithmic}
\end{algorithm}

\subsection{Gaussian mixture example \ref{subsec:gaussian-mixture}}

\begin{figure}[htbp]
  \centering
  \includegraphics[width=0.48\textwidth]{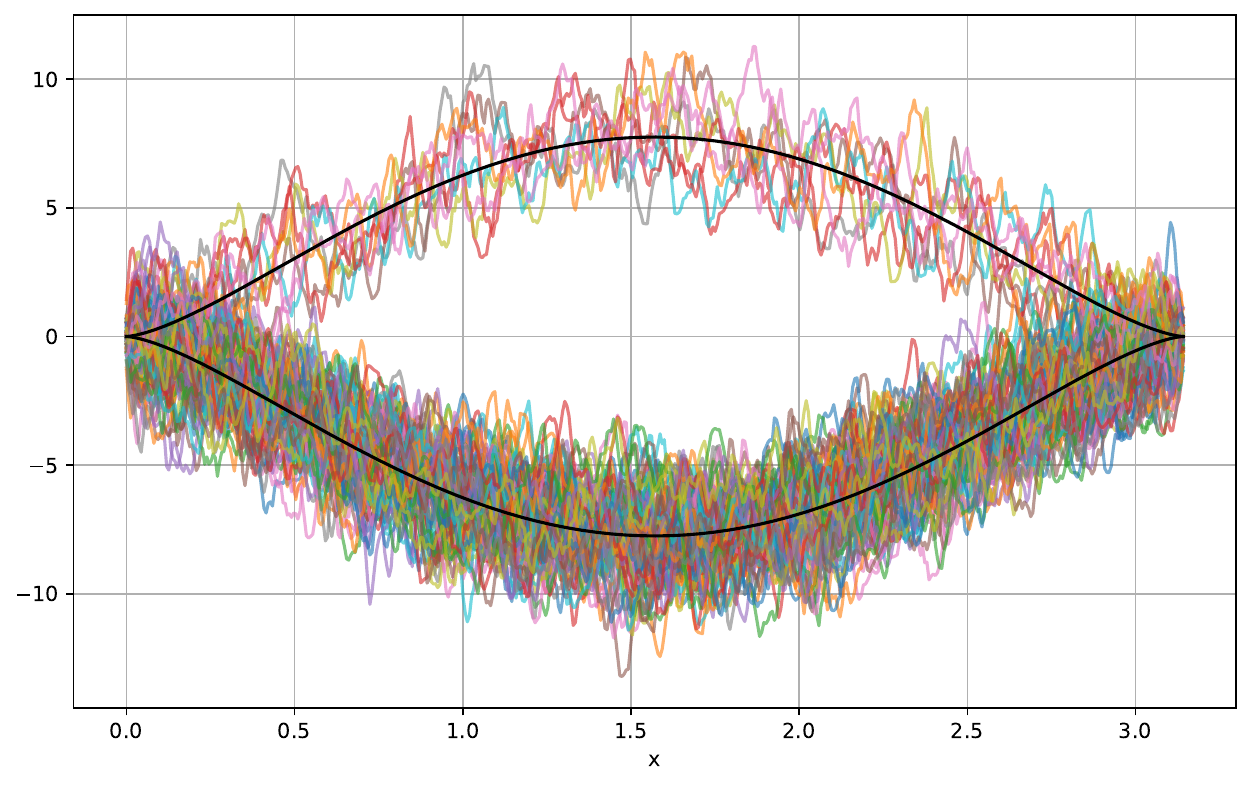}
  \caption{\textbf{Gaussian mixture} samples from the true target distribution defined in \eqref{eq:mixture_gaussian}.}
\label{fig:mixture_gaussian}
\end{figure}

\begin{figure}[htbp]
  \centering

  \begin{subfigure}{0.45\textwidth}
    \centering
    \includegraphics[width=\linewidth]{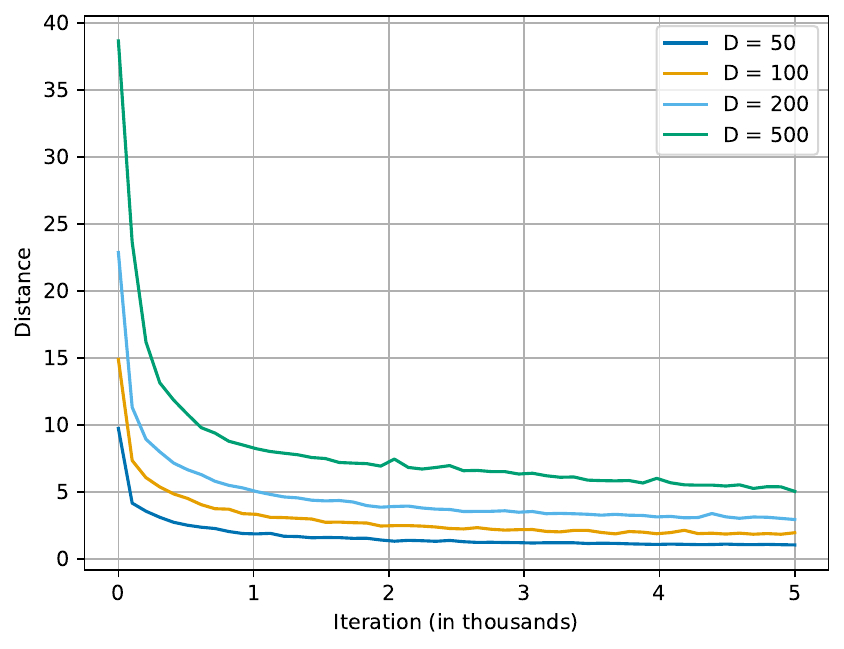}
    \caption{}
  \end{subfigure}\hfill
  \begin{subfigure}{0.45\textwidth}
    \centering
    \includegraphics[width=\linewidth]{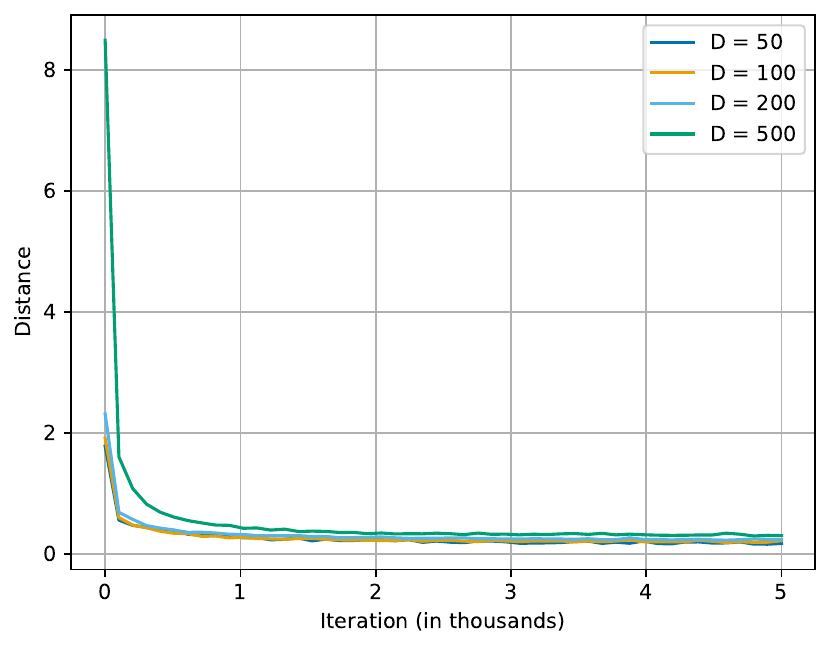}
    \caption{}
  \end{subfigure}

  \medskip

  \begin{subfigure}{0.45\textwidth}
    \centering
    \includegraphics[width=\linewidth]{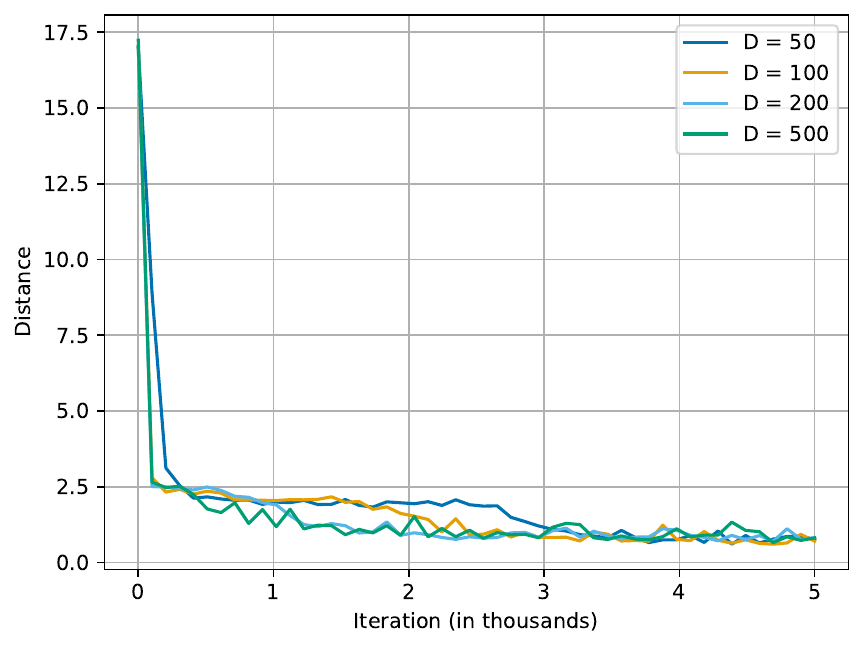}
    \caption{}
  \end{subfigure}\hfill
  \begin{subfigure}{0.45\textwidth}
    \centering
    \includegraphics[width=\linewidth]{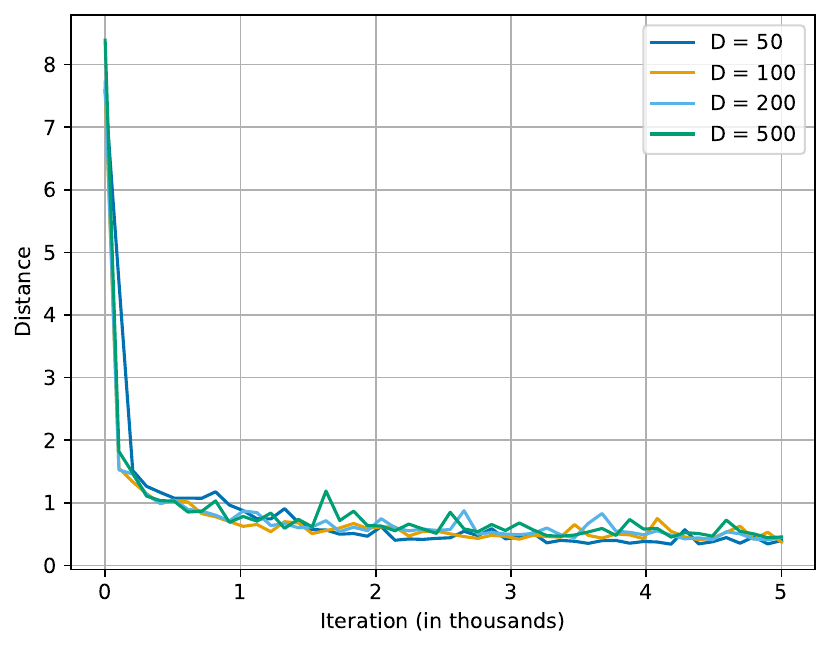}
    \caption{}
  \end{subfigure}

  \caption{Distances $\E[\|s_{\theta}(t,X_t) - s(t,X_t)\|^2]$ between the true score/steering functions $s(t,x)$ and their approximations $s_{\theta}(t,x)$ for the various generative diffusion models and various state dimensions $D$. \textbf{(a):} the finite-dimensional denoising SDE. \textbf{(b):} the infinite-dimensional denoising SDE. \textbf{(c):} our forced VP-SPDE with constant noise. \textbf{(c):} our forced VP-SPDE with noise scheduling.}
  \label{fig:distances_gaussian}
\end{figure}

\begin{figure}[htbp]
  \centering

  \begin{subfigure}{0.48\textwidth}
    \centering
    \includegraphics[width=\linewidth]{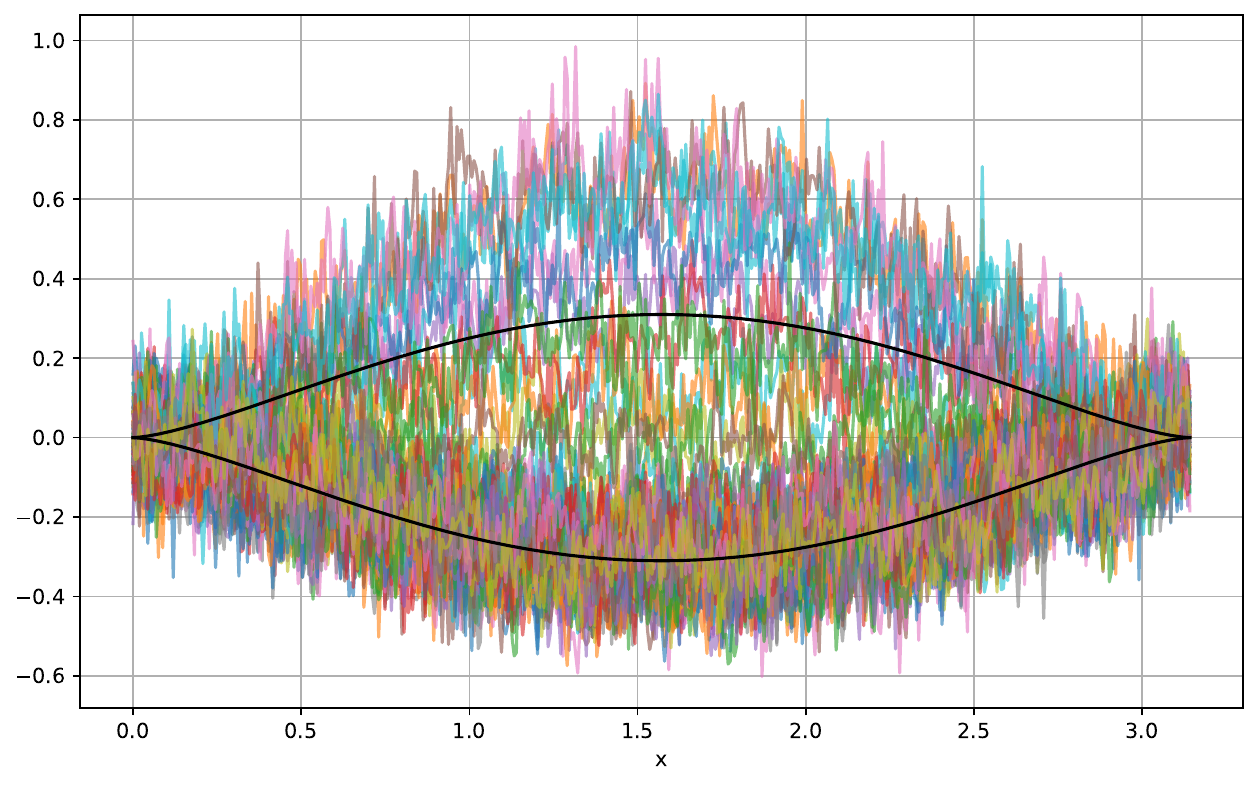}

  \end{subfigure}\hfill
  \begin{subfigure}{0.48\textwidth}
    \centering
    \includegraphics[width=\linewidth]{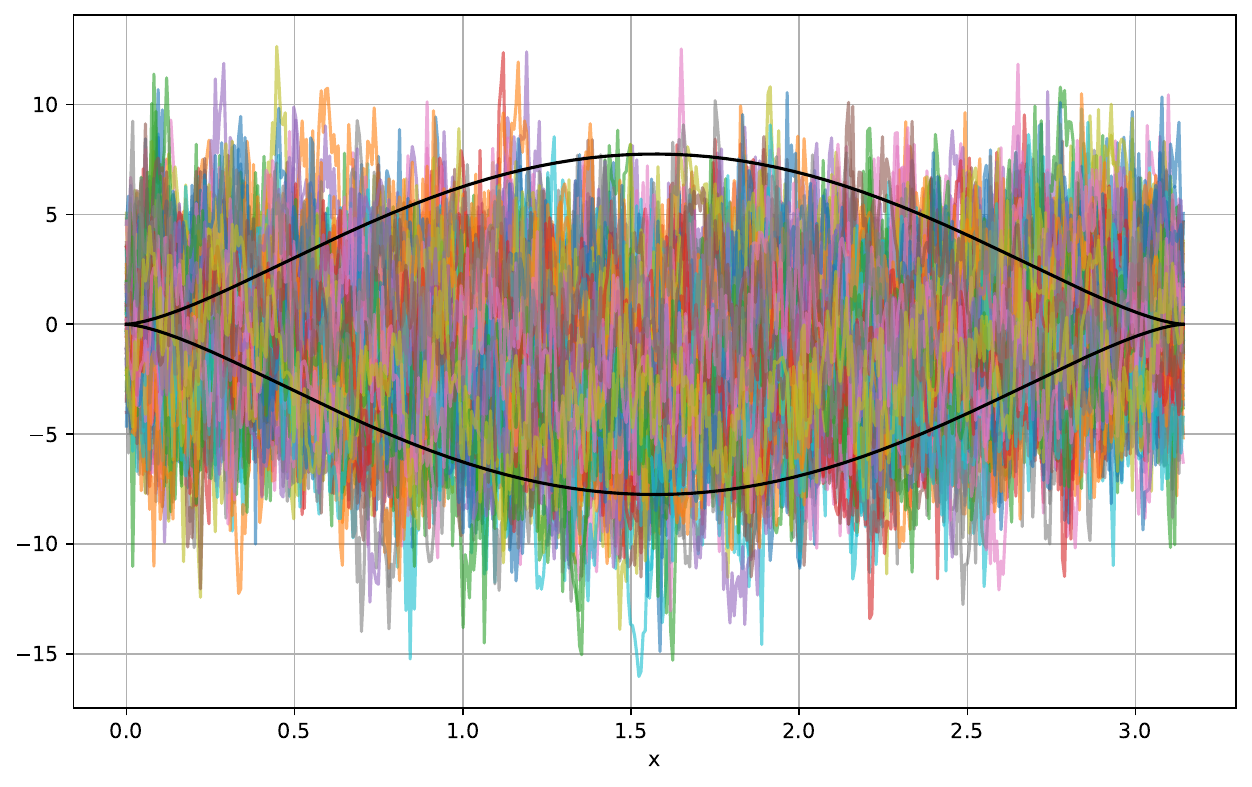}
  \end{subfigure}

  \medskip

  \begin{subfigure}{0.48\textwidth}
    \centering
    \includegraphics[width=\linewidth]{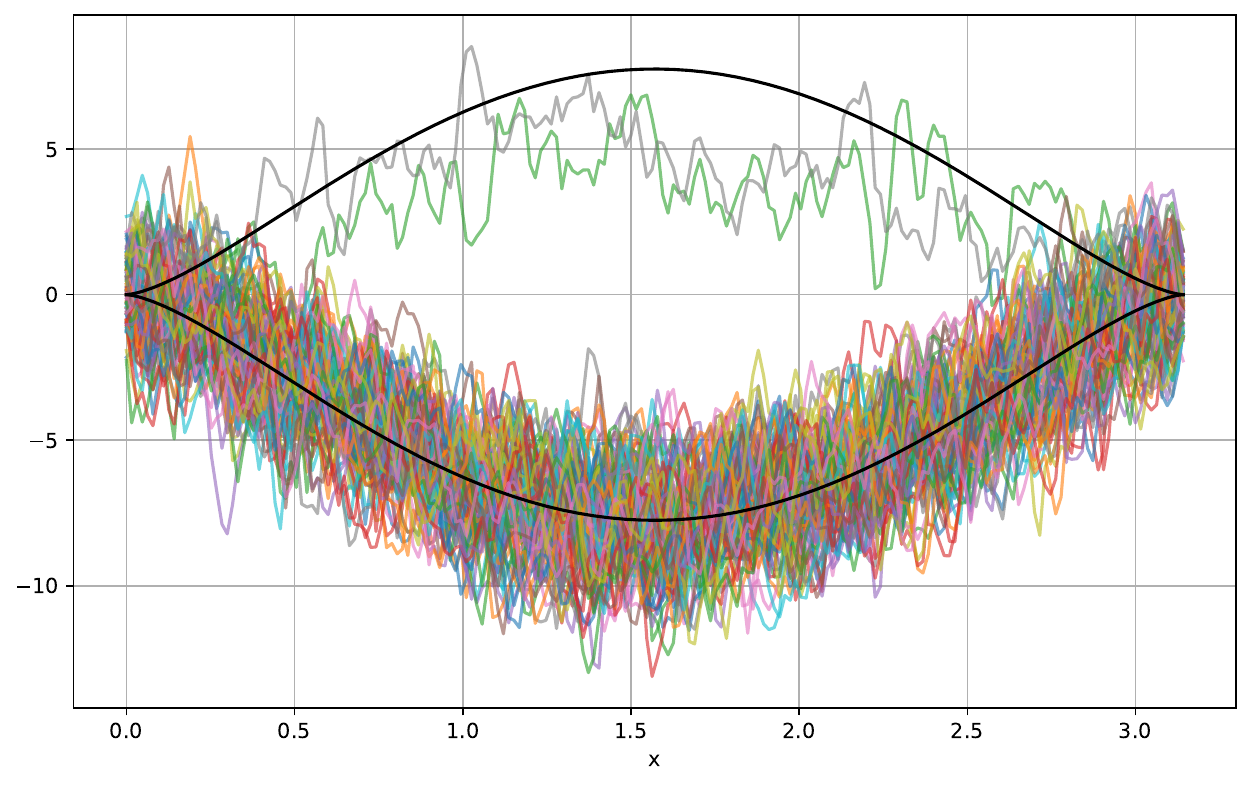}
  \end{subfigure}\hfill
  \begin{subfigure}{0.48\textwidth}
    \centering
    \includegraphics[width=\linewidth]{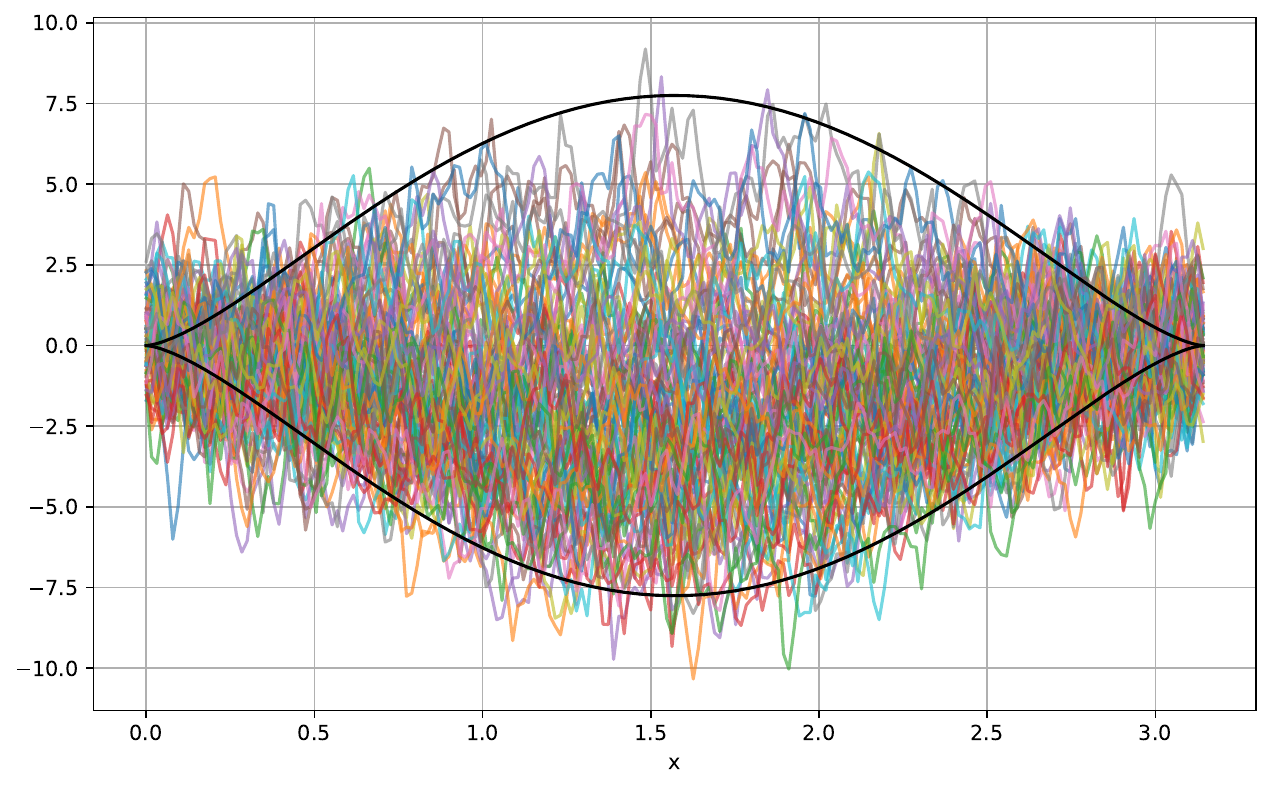}
  \end{subfigure}

  \medskip

  \begin{subfigure}{0.48\textwidth}
    \centering
    \includegraphics[width=\linewidth]{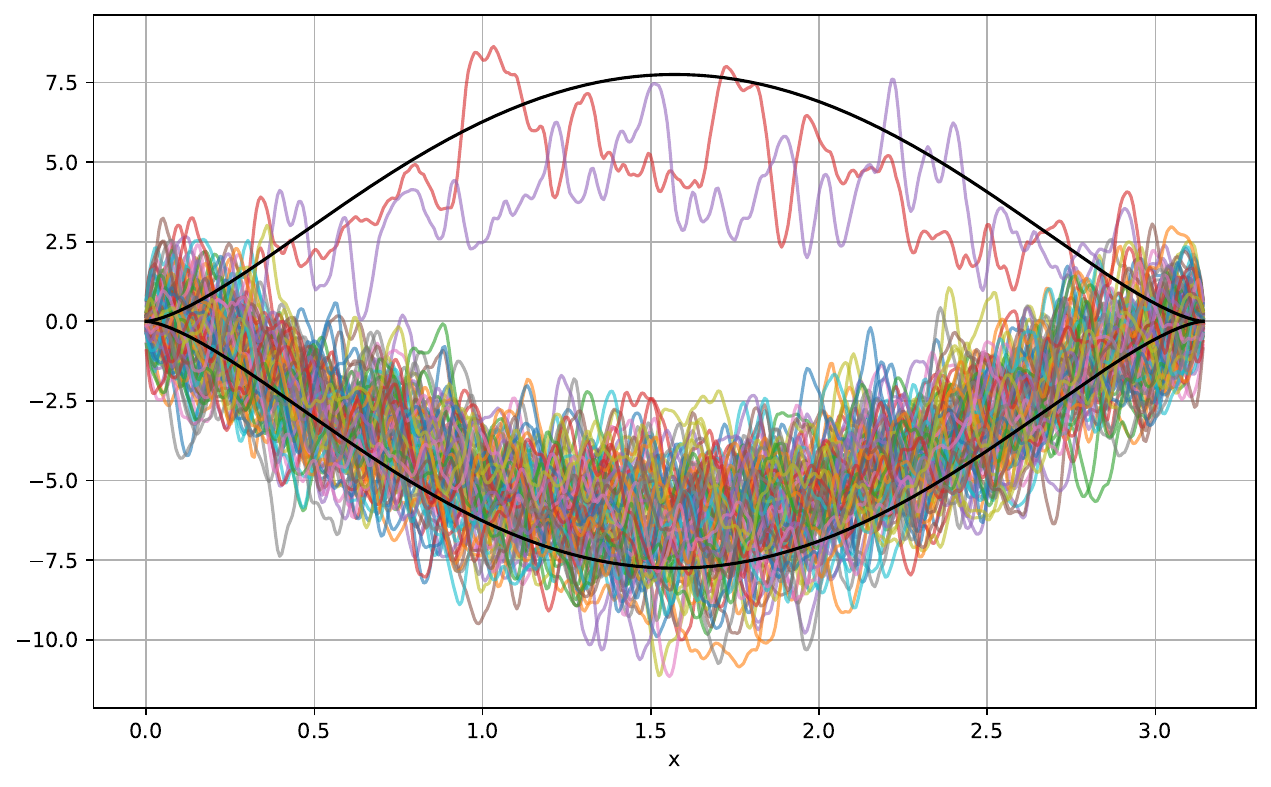}
  \end{subfigure}\hfill
  \begin{subfigure}{0.48\textwidth}
    \centering
    \includegraphics[width=\linewidth]{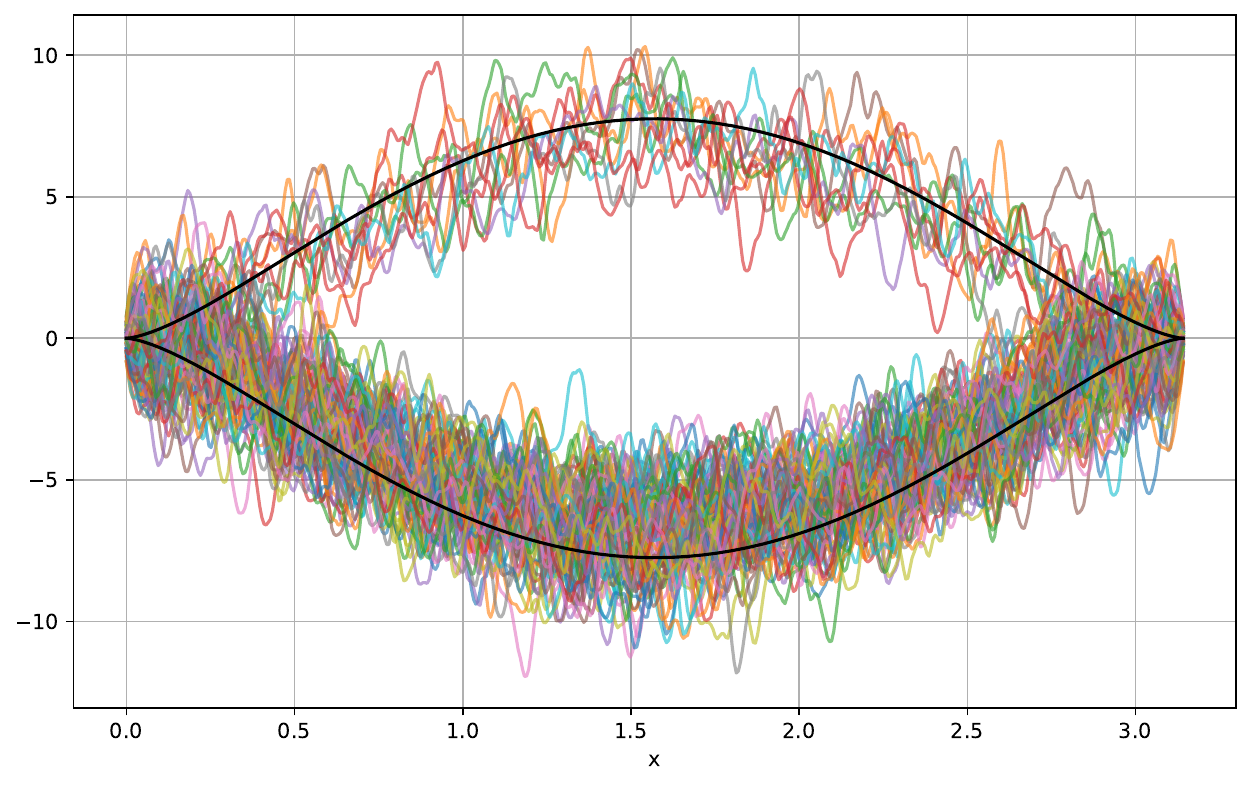}
  \end{subfigure}

  \medskip

  \begin{subfigure}{0.48\textwidth}
    \centering
    \includegraphics[width=\linewidth]{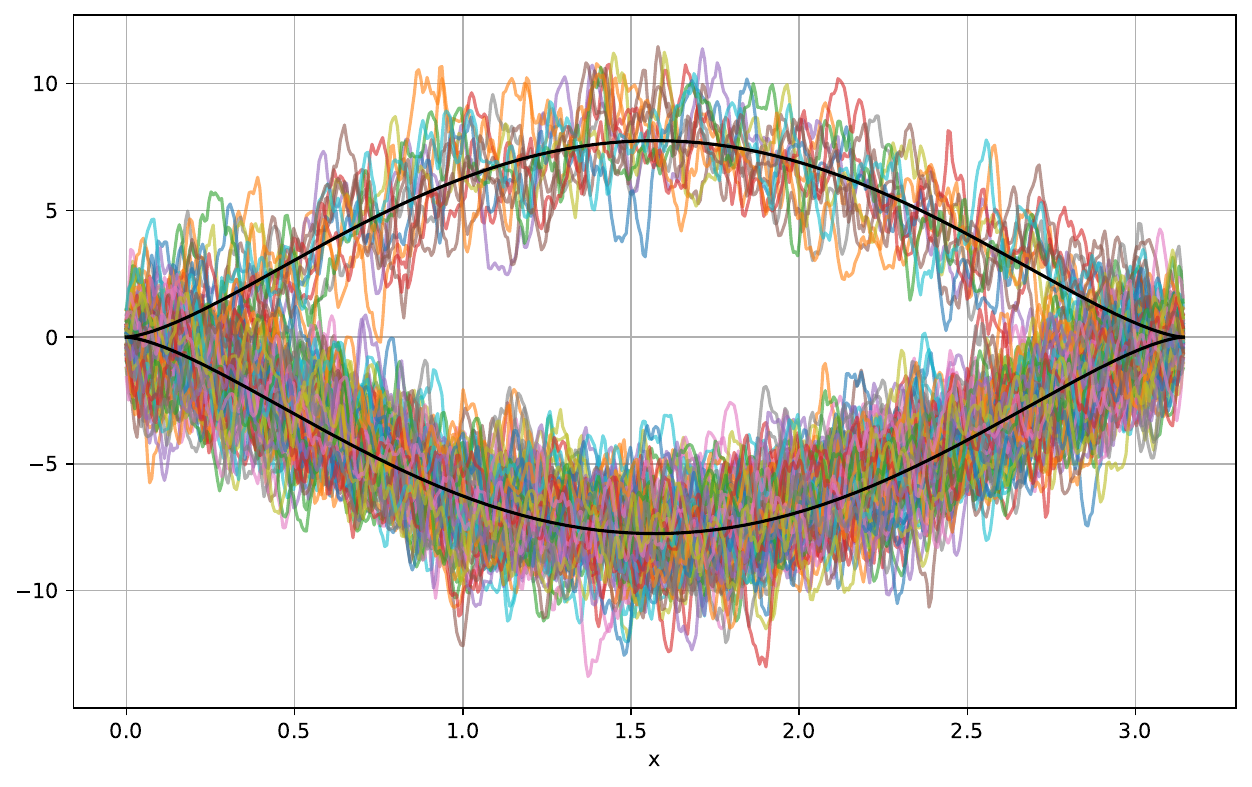}
  \end{subfigure}\hfill
  \begin{subfigure}{0.48\textwidth}
    \centering
    \includegraphics[width=\linewidth]{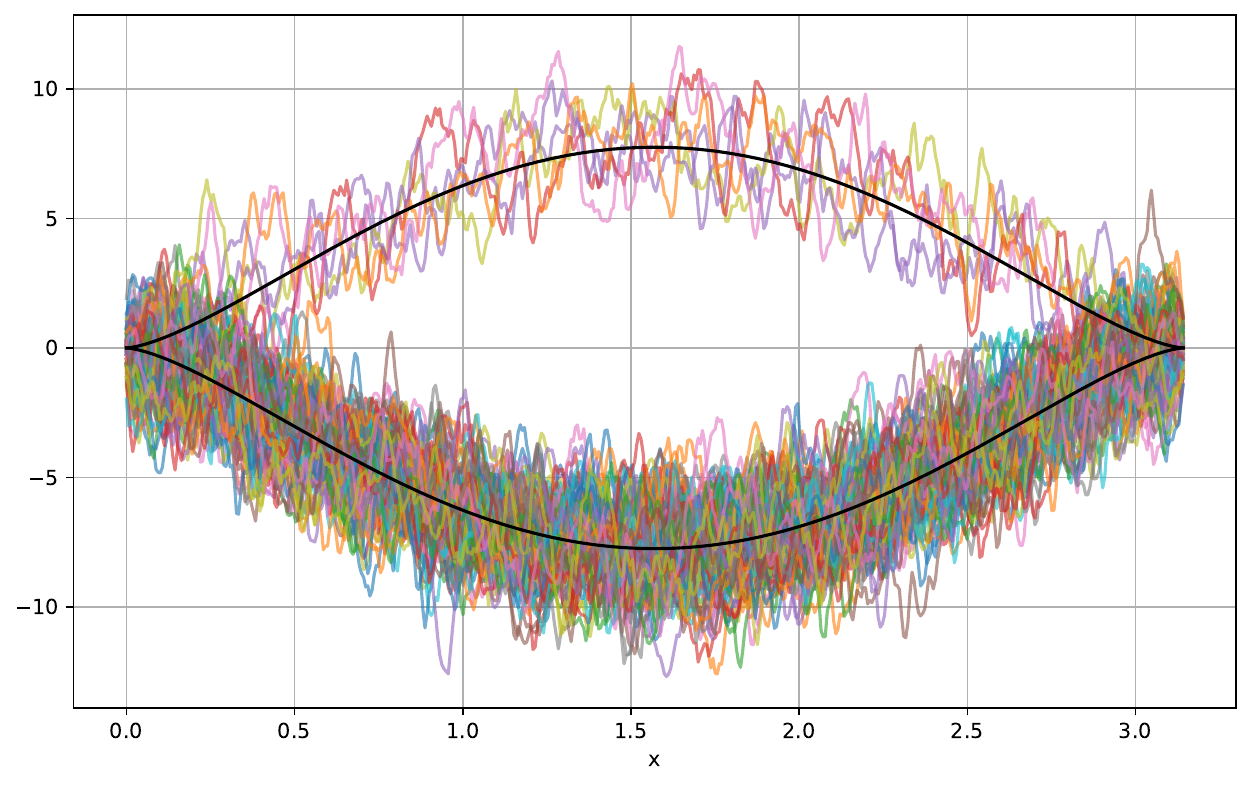}
  \end{subfigure}

  \caption{\textbf{Gaussian mixture} target as defined in \eqref{eq:mixture_gaussian} sampled through various generative diffusion models. \textbf{First row:} the finite-dimensional denoising SDE.
  \textbf{Second row:} the infinite-dimensional denoising SDE with constant noise.
  \textbf{Third row:} our forced VP-SPDE with constant noise.
  \textbf{Fourth row:} our forced VP-SPDE with noise scheduling. \textbf{Column one:} noising horizon $T = 1$. \textbf{Column two:} noising horizon $T = 0.2$.}
  \label{fig:samples_gaussian}
\end{figure}

\paragraph{Model and training}
All methods are implemented using the same neural network (NN) architecture to approximate the score/steering function. As NN we use a fully connected FiLM-conditioned residual score network that maps an input $x \in \R^D$ to the target function in $\R^D$ at hand. The time argument is injected into each residual block, processed after a sinusoidal time embedding through a multilayer perceptron. The architecture consists of an input projection to hidden width $H$ and $B$ LayerNorm-MLP residual blocks. The hidden width $H$ and number of blocks $B$ are chosen as functions of the input dimensions $D$. In the largest case that we consider, $D = 500$, the total NN size consists of roughly $10e8$ parameters.

The learned reverse time and forced diffusions are integrated using a semi-implicit Euler-Maruyama (IEM) scheme with $250$ steps. In the cases that we use a Langevin sampler to target the initial distribution of the forced diffusions, we take $50$ Langevin steps.

For the noise-varying models we use a linear noise schedule $\beta(t) = \beta_0 + t(\beta_1 - \beta_0)$ with $\beta_0 = 0.1$ and $\beta_1 = 20$. For the forced diffusions, a time reversed version $\tilde{\beta}(t) = \beta(1-t)$ is used.
The constant noise schedules are fixed at a constant $\beta = 10$ to match the total amount of noise injected over $[0,1]$.

\subsection{MNIST-SDF example \ref{subsec:MNIST-SDF}}

\paragraph{Data}
We use here the MNIST-SDF dataset, in which training samples are generated by converting MNIST digit images into continuous signed distance functions \cite{sitzmann2020metasdf}. A representation in `SDF space' is displayed in Figure \ref{fig:mnist-sdf}(a).
This representation allows for consistent training and evaluation across resolutions, while generated samples can be thresholded back into binary digits by applying a mask for comparison with standard MNIST-based metrics, see Figure \ref{fig:mnist-mask}.

\paragraph{Model and training}
For the sake of comparability, we adopt the NN architecture proposed in \cite{lim2025score} to approximate the steering term $s(t,x)$. This consists of a Fourier Neural Operator (FNO) based U-net with three up- and downsampling stages, a base width of $64$ channels and channel multipliers $(1,2,2)$, using $4$ spectral res blocks per stage.
All spatial convolutions are replaced by spectral convolutions, with group normalisation performed in Fourier space based on a fixed number of modes. Downsampling and upsampling are implemented via alias-free filtered resampling \cite{karras2021alias} and the architecture excludes self-attention and dropout. More architectural details can be found in \cite{lim2025score}, Appendix J.

The forced VP-SPDE \eqref{eq:vp-spde} is implemented with a time-reversed cosine noise schedule. The covariance matrix $C$ is constructed by taking the empirical marginal variances of the dataset in Fourier space for the first $32$ modes and scaling the remaining modes $j > 32$ appropriately by a decay rate of $j^{-(1+\varepsilon)}$ such that $C$ remains trace-class. 
The model is trained for a total of $2 \times 10^6$ iterations on upsampled $64\times 64$ MNIST-SDF images. At inference time, the trained model is numerically integrated based on a semi-implicit Euler-Maruyama scheme with $250$ steps. Following \cite{lim2025score}, FID scores are computed after applying a binary mask to the true and generated samples.

\begin{figure}[t]
    \centering
    \begin{subfigure}[t]{0.48\columnwidth}
        \centering
        \includegraphics[width=\textwidth]{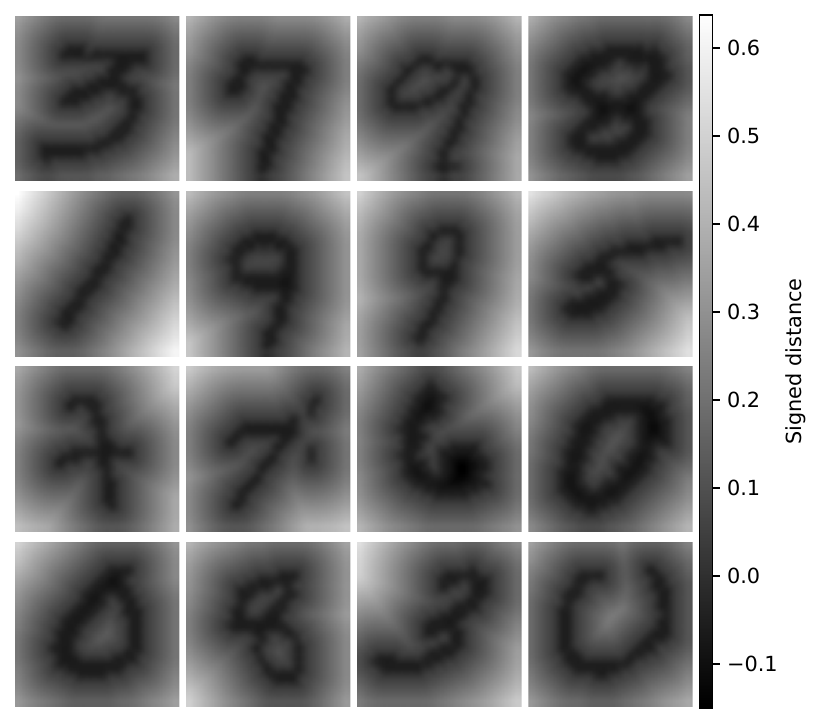}
        \caption{}
    \end{subfigure}\hfill
    \begin{subfigure}[t]{0.48\columnwidth}
        \centering
        \includegraphics[width=\textwidth]{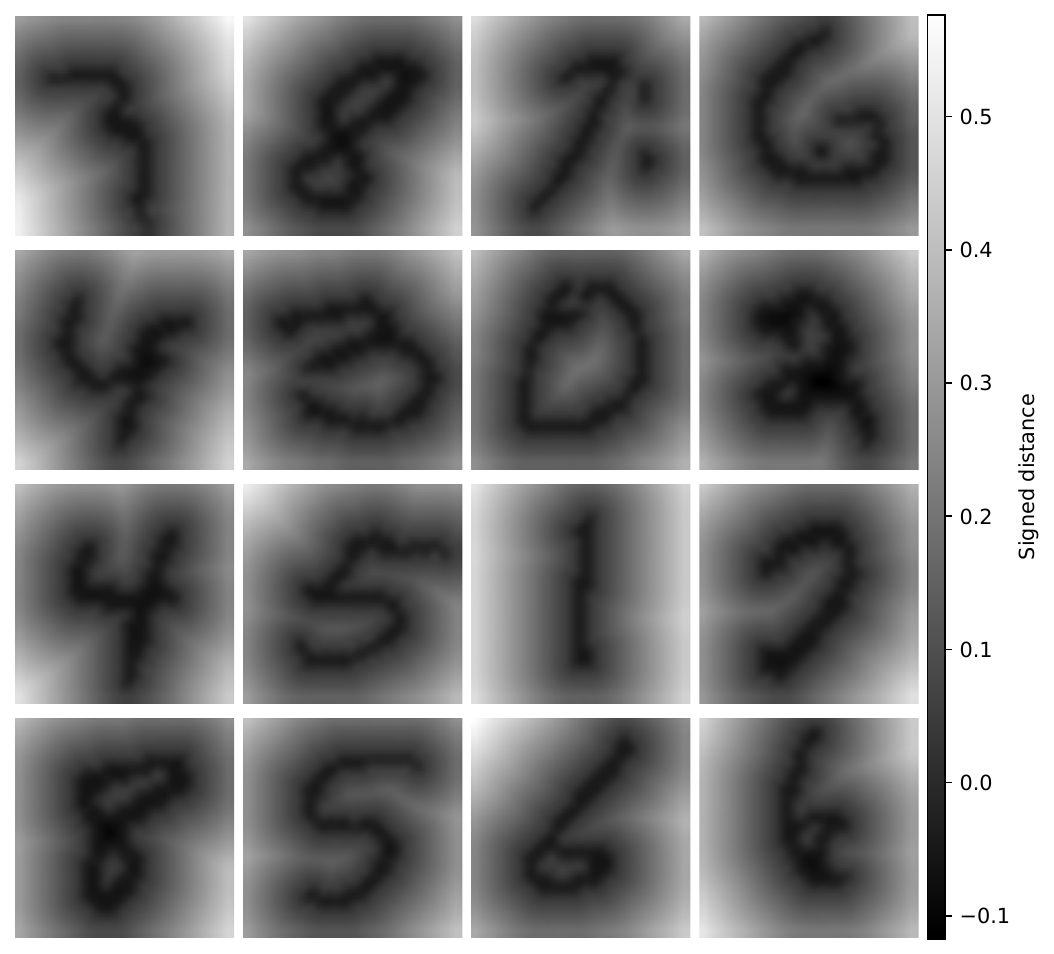}
        \caption{}
    \end{subfigure}

    \caption{True and generated \textbf{MNIST-SDF} samples at $64 \times 64$ resolution. \textbf{(a):} True samples, upsampled to $64 \times 64$. \textbf{(b):} Generated samples returned by the VP-SPDE model.}
    \label{fig:mnist-sdf}
\end{figure}

\begin{figure}[htbp]
  \centering
  \includegraphics[width=0.7\textwidth]{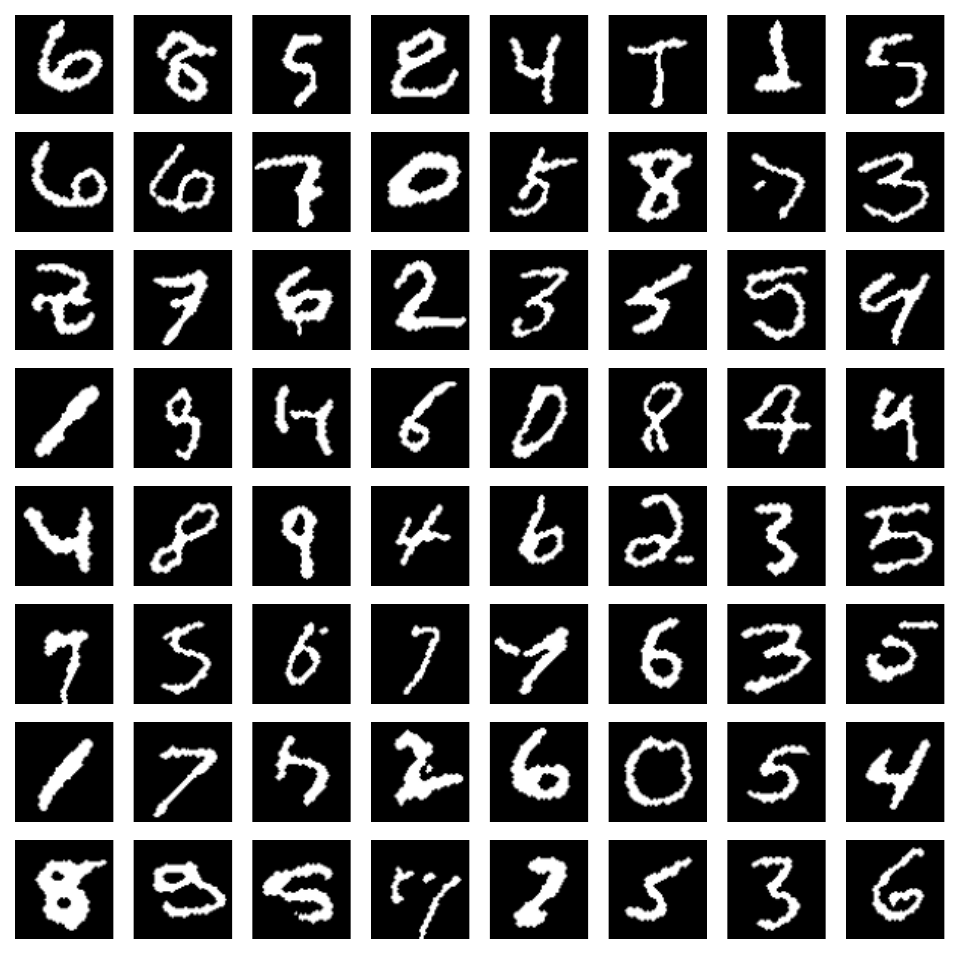}
  \caption{Additional generated \textbf{MNIST-SDF} samples at $64 \times 64$ resolution.}
\label{fig:mnist-sdf-masked}
\end{figure}

\paragraph{Resolution Independence}
The forced VP-SPDE model \eqref{eq:dXhOU} as well as the loss function \eqref{eq:calLdef} are formulated entirely in the function space setting and are therefore inherently resolution independent. In practice, after approximating the steering term $s(t,x)$ by a resolution independent network $s_{\theta}(t,x)$ such as the FNO U-net proposed in \cite{lim2025score}, Appendix J, images at higher resolutions - unseen during training - may be generated.

Figure \ref{fig:mnist128} shows MNIST-SDF images generated by the forced VP-SPDE model \eqref{eq:dXhOU} at a resolution of $128 \times 128$, based on the score trained on $64\times 64$ resolution images. Moreover, Table \ref{tab:fid128} provides the FID score obtained by our model in comparison to those reported in \cite{lim2025score}. 
In our experiments we have encountered high frequency Gibbs ringing of the learned steering term $s_{\theta}(t,x)$ applied at this resolution, facilitating the need to wrap the score network in another down- and upsampling step during numerical integration of the model. We hypothesize that these resolution-induced artifacts stem from the specific neural network architecture rather than our underlying theoretical framework, and we postpone investigations into artifact-free architectures within our framework to future research.

\begin{table}[b]
    \centering
    \caption{FID scores on MNIST-SDF at $128 \times 128$ resolution. The values marked with$\,^\star$ have been reported in \cite{lim2025score}.}
    \label{tab:fid128}
    \begin{tabular}{lcccc}
        \toprule
        & GANO & MultiDiff & DDO & VP-SPDE (ours) \\
        \midrule
        FID  & 13.05$^{\star}$ & 201.08$^{\star}$ & \textbf{7.96}$^{\star}$ & 15.66 \\
        \bottomrule
    \end{tabular}
\end{table}

\begin{figure}[t]
    \centering
    \begin{subfigure}[t]{0.48\columnwidth}
        \centering
        \includegraphics[width=\textwidth]{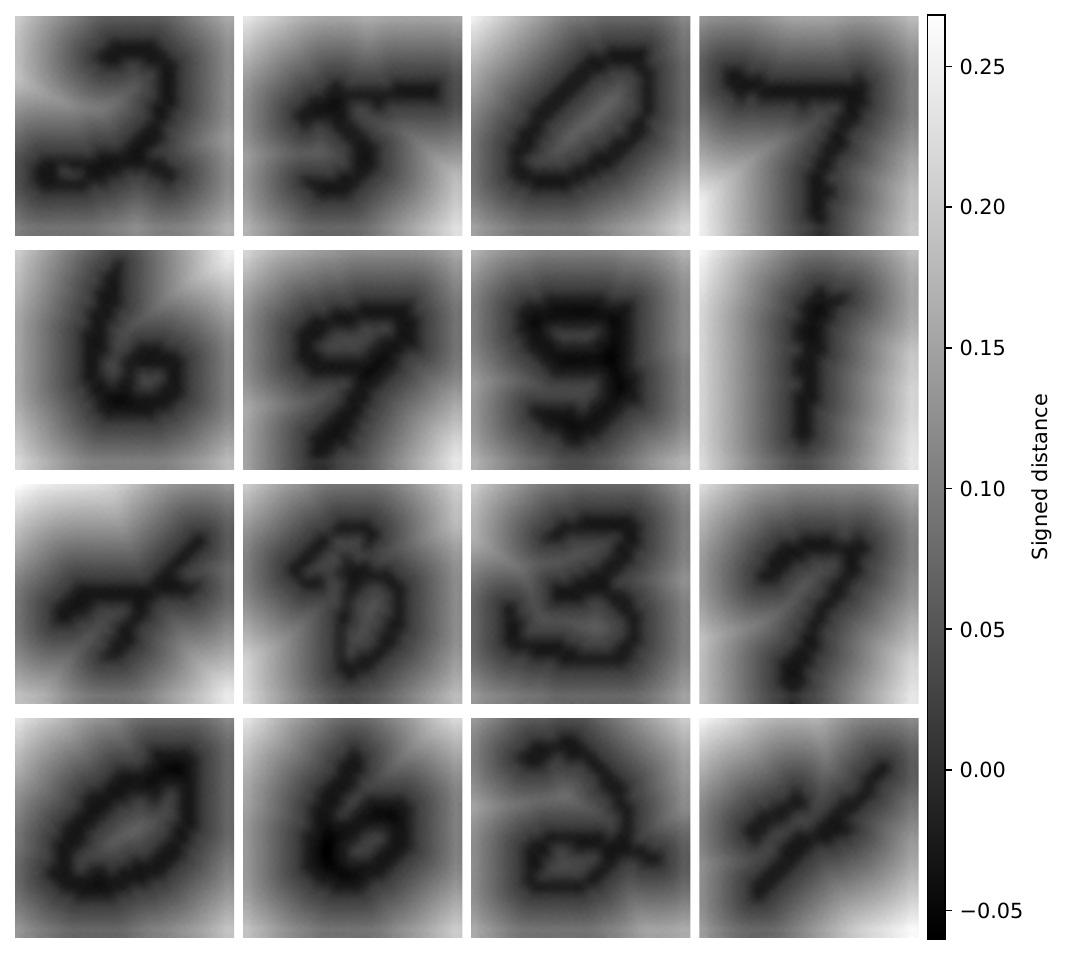}
        \caption{}
    \end{subfigure}\hfill
    \begin{subfigure}[t]{0.44\columnwidth}
        \centering
        \includegraphics[width=\textwidth]{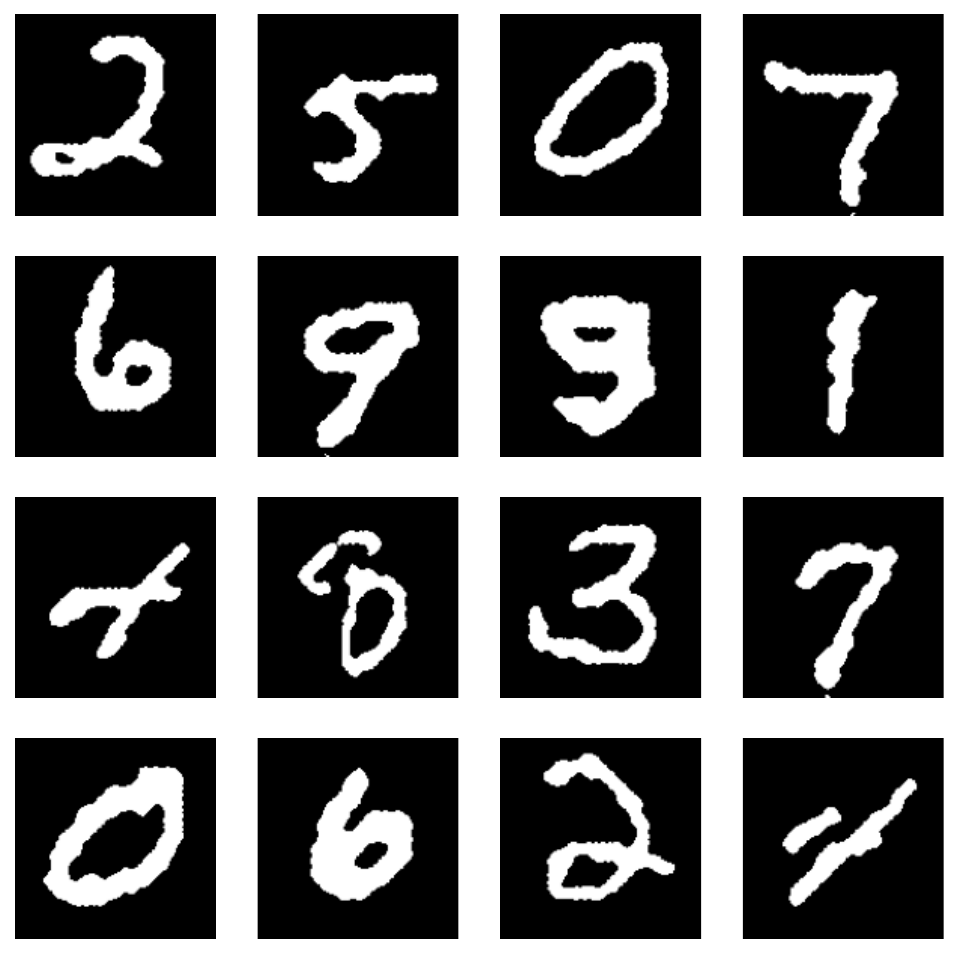}
        \caption{}
    \end{subfigure}

    \caption{Generated \textbf{MNIST-SDF} samples at $128 \times 128$ resolution. \textbf{(a):} Samples in function space. \textbf{(b):} Masked samples.}
    \label{fig:mnist128}
\end{figure}

\subsection{Bayesian inverse problem: seismic imaging \ref{subsec:seismic-imaging}}

\paragraph{Data}
We use the data provided by \cite{baldassari2023conditional} from their open-source repository\footnote{\url{https://github.com/alisiahkoohi/csgm}}, consisting of synthetic training pairs based on seismic images from the Kirchhoff-migrated Parihaka-3D dataset \cite{veritas2005parihaka, westerngeco2012parihaka}.

\paragraph{Models}
The score approximation of the forced diffusion model is based on a Fourier Neural Operator architecture that maps an input field $x$, a conditioning observation $y$ and spatial coordinates to a single-channel output. The continuous time variable $t$ is embedded using Gaussian Fourier features followed by an MLP layer, with the embedding injected into each FNO layer via FiLM modulation. The network comprises a pointwise lifting layer, four Fourier neural layers operating on the 24 lowest Fourier modes with pointwise skip connections, and a final pointwise projection to the output.

The network is trained for $50~000$ iterations with a training batch-size of $128$, which amounts to $676$ training epochs on the given dataset. 
For the VP-SPDE, we use a linear noise schedule $\beta(t) = \beta_1 + t(\beta_0 - \beta_1)$ with $\beta_0 = 0.1$ and $\beta_1 = 20$ and, following \cite{baldassari2023conditional}, a white noise covariance operator $C$.
During sample generation, the forced VP-SPDE \eqref{eq:dXhOU} is numerically solved using a semi-implicit Euler-Maruyama scheme with $500$ steps.

For the implementation of the noising-denoising model, we follow the parametrisation and procedure as outlined in \cite{baldassari2023conditional}, Appendix D. In particular, we keep a total of $500$ discrete noise levels, aligning with the number of EM steps in our continuous time model. For the sake of comparability, the number of training epochs is increased from $300$ in the original work to $676$, matching the number of training iterations used for our model.

\begin{figure}[htbp]
  \centering

  \begin{subfigure}[t]{\textwidth}
    \centering
    \includegraphics[width=0.55\linewidth]{figs/example3/ground_truth.pdf}
    \caption{}
  \end{subfigure}

  \medskip

  \begin{subfigure}[t]{\textwidth}
    \centering
    \includegraphics[width=0.55\linewidth]{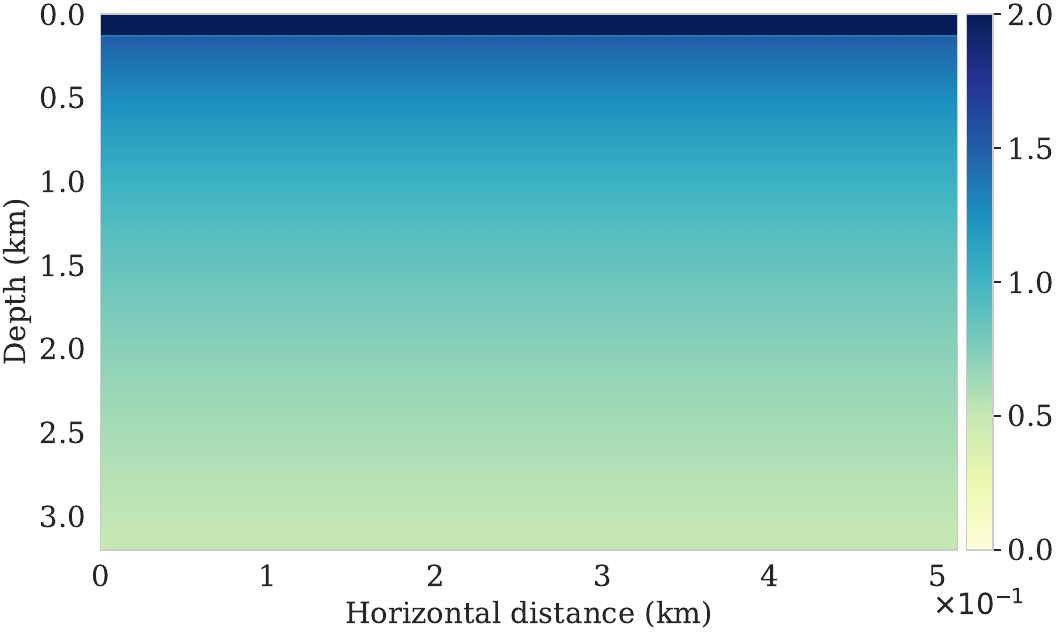}
    \caption{}
  \end{subfigure}

  \medskip

  \begin{subfigure}[t]{\textwidth}
    \centering
    \includegraphics[width=0.55\linewidth]{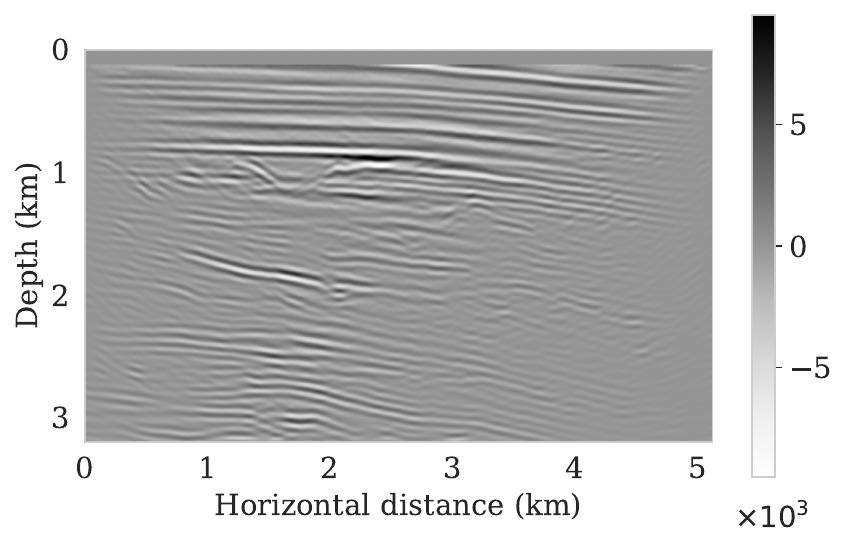}
    \caption{}
  \end{subfigure}

  \caption{\textbf{Seismic imaging.} \textbf{(a)} Ground truth seismic image $x$. \textbf{(b)} Background smooth squared slowness model. \textbf{(c)} Observed data $y$ after application of the linearised, adjoint Born operator.}
  \label{fig:seismic-model}
\end{figure}

\begin{figure}[htbp]
  \centering

  \begin{subfigure}[t]{0.48\textwidth}
    \centering
    \includegraphics[width=\linewidth]{figs/example3/ground_truth.pdf}
    \caption{}
  \end{subfigure}\hfill
  \begin{subfigure}[t]{0.48\textwidth}
    \centering
    \includegraphics[width=\linewidth]{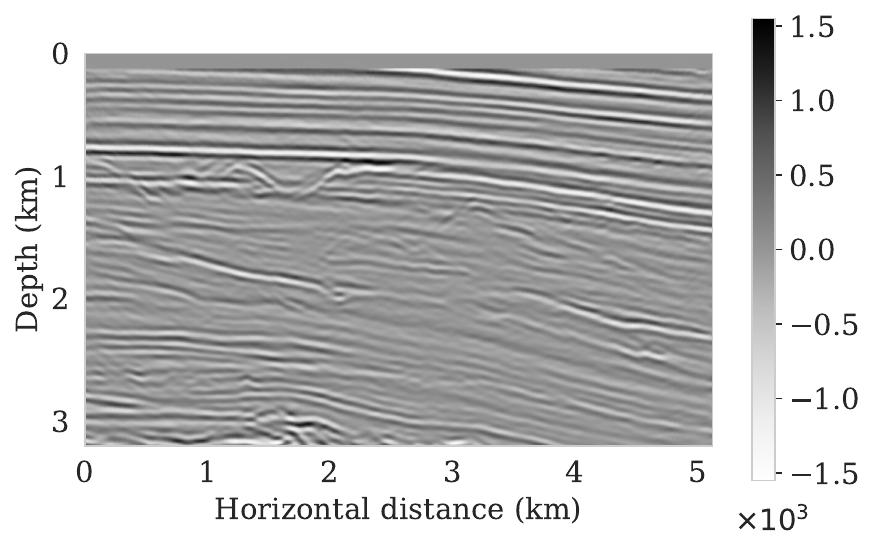}
    \caption{}
  \end{subfigure}

  \medskip

  \begin{subfigure}[t]{0.48\textwidth}
    \centering
    \includegraphics[width=\linewidth]{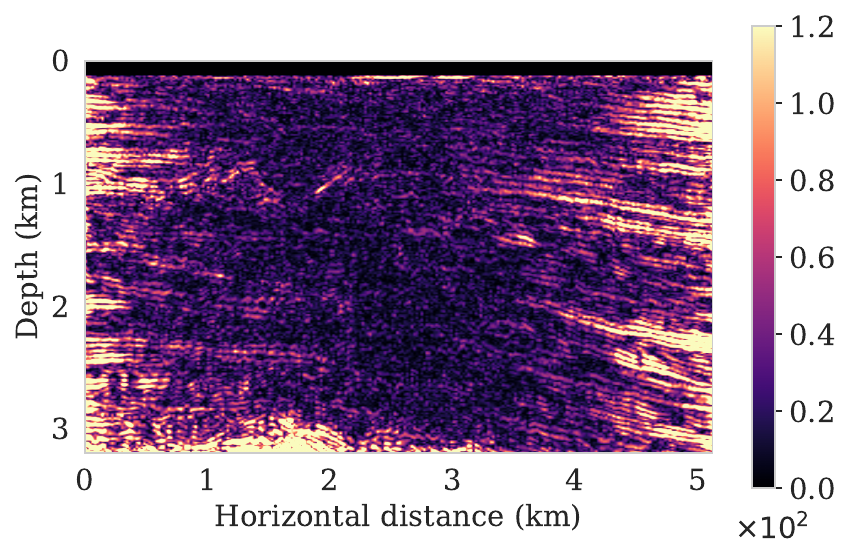}
    \caption{}
  \end{subfigure}\hfill
  \begin{subfigure}[t]{0.48\textwidth}
    \centering
    \includegraphics[width=\linewidth]{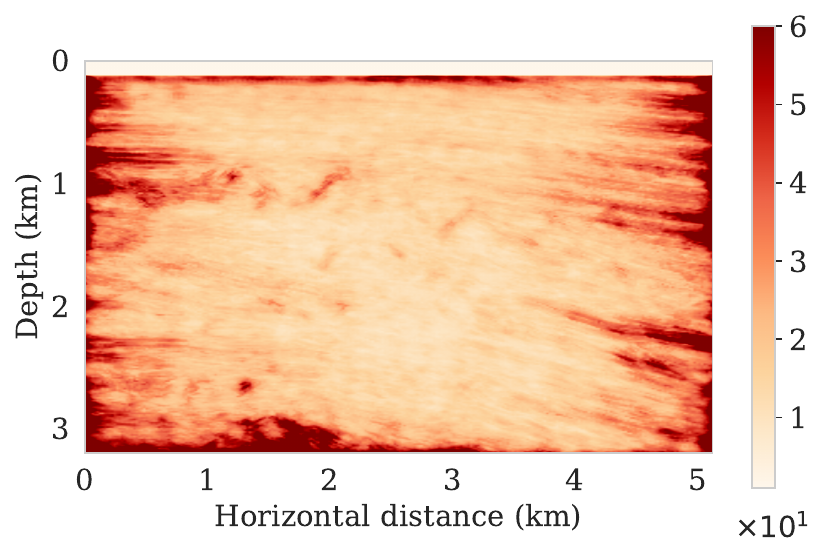} 
    \caption{}
  \end{subfigure}

  \caption{\textbf{Seismic imaging estimate} generated by the model of \cite{baldassari2023conditional}. \textbf{(a)} Ground truth seismic image $x$. \textbf{(b)} Estimated posterior mean $\hat{x}$. \textbf{(c)} Absolute error between ground truth and posterior mean. \textbf{(d)} Estimated marginal posterior standard deviation.}
  \label{fig:seismic-results2}
\end{figure}

\end{document}